\documentclass{article}
\usepackage{iclr2027_conference,times}
\usepackage[T1]{fontenc}
\usepackage{amsmath,amssymb,amsthm,mathtools}
\usepackage{mathrsfs}
\usepackage{graphicx,booktabs,array}
\usepackage{hyperref}
\usepackage{xurl}
\hypersetup{hidelinks,pdfauthor={Dongxu Li; Yinuo Zhang; Hongyu Zhang; Feng Tian},
  pdftitle={Generalized Residual Closure: General Learning Dynamics for Stability-Plasticity Compatibility},
  pdfsubject={arXiv preprint: general learning dynamics}}
\newtheorem{theorem}{Theorem}[section]
\newtheorem{proposition}[theorem]{Proposition}
\newtheorem{lemma}[theorem]{Lemma}
\newtheorem{corollary}[theorem]{Corollary}
\theoremstyle{definition}

\newtheorem{example}[theorem]{Example}
\theoremstyle{remark}

\newcommand{\R}{\mathbb{R}}
\newcommand{\Hist}{\mathcal{X}}

\newcommand{\Learner}{\mathcal{M}}
\newcommand{\Residual}{\mathfrak{R}}
\newcommand{\Old}{\mathcal{S}_{\mathrm{old}}}

\newcommand{\Kplast}{K_{\mathrm{plast}}}

\DeclareMathOperator{\Def}{Def}
\DeclareMathOperator{\Close}{Close}
\DeclareMathOperator{\Open}{Open}
\DeclareMathOperator{\Compile}{Compile}

\DeclareMathOperator{\im}{im}
\DeclareMathOperator*{\argmin}{arg\,min}

\title{Generalized Residual Closure:\\
General Learning Dynamics for\\
Stability--Plasticity Compatibility}
\author{Dongxu Li$^{1,*}$ \quad Yinuo Zhang$^{2}$ \quad
Hongyu Zhang$^{1}$ \quad Feng Tian$^{1}$\\
\normalfont $^{1}$Xi'an Jiaotong University\\
\normalfont $^{2}$Xi'an University of Architecture and Technology\\
\normalfont $^{*}$Corresponding author: \texttt{skey@stu.xjtu.edu.cn}}
\iclrfinalcopy
\begin{document}
\maketitle
\fancyhead{}
\begin{abstract}
Learning must acquire new capabilities while preserving both prior responsibilities and the capacity to learn again. We introduce Generalized Residual Closure (GRC), a framework for learning as recursive closure of future-relevant discrepancies: closing a residual establishes the conditions for subsequent prediction, interaction, and learning. Under a complete representation--relation description at a fixed learner--world boundary, persistent internal learning has two primitive modes: Transformation within a representation and revision of the Representation itself. We establish a local tangent decomposition under regularity assumptions and a criterion for when representation revision is necessary. In an affine model, we derive a necessary-and-sufficient condition for stability--plasticity compatibility and the unique solution of a constrained quadratic update problem, which preserves registered old responsibilities while reducing residuals with an effective safe response. We prove that reconstructive semantic protection weakly enlarges the safe-response operator relative to preserving an exact historical realization. Dynamic sufficiency and future-closure viability extend representation adequacy from current prediction to lawful future updating and continued learning. Growth Learning expands the lawful closure domain or lowers optimal closure cost without regression of the registered capability--cost frontier; a conditional commit rule maintains this order. Restricted-sector recoveries and a conditional representation theorem connect the framework to optimization, machine learning, and control. Together, these results organize adaptation, representation revision, and reusable capability within a common account of continued learning.

\end{abstract}
\section{Introduction: What Should a Residual Change?}
\label{sec:introduction}
Machine learning usually begins after the coordinates of adaptation have
been chosen. Neural networks update parameters; reinforcement learning
updates values, policies, or models on a declared state space; adapters
restrict writes to prescribed subspaces; architecture search explores a
structural grammar. Optimization determines how these selected objects
should change. Their selection raises a prior question: \emph{what should
 a residual change?}

A residual does not carry its own write address. A prediction error may
indicate a wrong relation within an adequate representation, aliasing of
future-distinguishable histories, unavailable evidence, or a family that
cannot express the required continuation. Treating every discrepancy as a
parameter-update instruction conflates evidence of insufficiency with a
decision about where it resides.

Generalized Residual Closure (GRC) describes learning through this
distinction. A represented state opens into relations that explain the
present and predict continuation. Interaction exposes a contract-relative
discrepancy, and a lawful persistent revision establishes the conditions
for subsequent predictions, actions, and learning. Closure is recursive:
the successor may expose a new residual, or remain sufficient under later
interaction. At a fixed boundary, Transformation changes relations within
the current abstraction, while Representation Learning changes which
states must be distinguished. A joint revision may contain both.

Stability and plasticity then share a constraint geometry. Stability
protects reconstructible future-operational responsibilities; plasticity
is the useful response that remains lawful under those constraints.
Present success remains insufficient for continual learning. Dynamic
sufficiency preserves information needed for future updating, and
future-closure viability retains lawful continuation paths. Growth Learning
additionally compares the domain and optimal cost of those paths.

The paper makes four contributions. First, it specifies the recursive
semantics and least-lawful closure formulation. Second, it gives a local
two-mode decomposition, a conditional finite classification, and a
representation-obstruction criterion. Third, it derives affine
stability--plasticity compatibility and reconstructive semantic dominance,
then distinguishes dynamic sufficiency from future viability. Fourth, it
defines certified growth, develops restricted correspondences with
established methods, and supplies a conditional GRC representation theorem.
The results organize these requirements jointly; their assumptions do not
establish universal algorithmic superiority.

\paragraph{Related work.}
Prior work has separately defined state through future predictions or
behavioral equivalence \citep{littman2001predictive,ferns2011bisimulation},
constrained continual updates to protect previous capabilities
\citep{lopezpaz2017gem,farajtabar2020ogd}, studied the loss and preservation
of plasticity in nonstationary learning
\citep{dohare2024plasticity,han2026fire,qiu2026splitlora}, and enlarged the
object of learning through meta-learning and architecture search
\citep{finn2017maml,liu2019darts}. GRC does not claim these components
individually as new; its question is upstream: which representational
level owns a residual, what must remain invariant during closure, and
what future learning must remain possible afterward.
Appendix~\ref{app:related-work} details the comparisons and scope.

\section{Recursive Generalized Residual Closure}
\label{sec:principles}
Let $M\in\Learner$ contain all persistent learner variables relevant to future responses and updates. Let $x\in\Hist$ be a detailed interaction state or history. The contract $\Xi$ registers admissible contexts, actions, evidence, information rights, tolerances, and resources; $\partial$ fixes the learner--world boundary. GRC imposes three structural requirements: $U^\star$ assigns an operational owner to every required distinction; $I^\star$ identifies states through registered future behavior; and $L^\star$ seeks a least-cost lawful completion. Their satisfaction requires the relevant existence and sufficiency conditions.

\subsection{Future-Operational Representation}
For a fixed learner $M$, define
\begin{equation}
 x\sim_{\Xi,M}y\ \Longleftrightarrow\
 \mathrm{Resp}_M(c,x)\equiv_\Xi\mathrm{Resp}_M(c,y)
 \quad\forall c\in\mathrm{Ctx}_\Xi.
 \label{eq:operational-identity}
\end{equation}
Here $\equiv_\Xi$ is a response equivalence; quantitative approximation is treated separately. The \emph{Kernel} is the quotient $K_M=\Hist/{\sim_{\Xi,M}}$, with map $C_M:x\mapsto[x]$ and current state $k=C_Mx$. It identifies detailed states under a fixed learner, not different learner implementations.

Defining state through future behavior has close precedents in
predictive-state representations and bisimulation-based state abstraction
\citep{littman2001predictive,ferns2011bisimulation}. GRC makes compatibility
with registered learner-update events an explicit admissibility requirement,
formalized as dynamic sufficiency in Section~\ref{sec:growth}.

The \emph{Ring} $\mathcal R_\Xi(K_M)$ comprises admissible relational realizations on that quotient, with active relations $\rho_M\in\mathcal R_\Xi(K_M)$. The quotient alone does not uniquely select these relations.

A representation is unchanged when $C_N=\phi\circ C_M$ for an admissible isomorphism $\phi$. Write $\varpi_\Xi(M)=[C_M]$ for this abstraction class. Preserving it permits changes to learned relations. Stability separately protects the still-valid responsibilities registered by $\Xi_{\mathrm{old}}$.

\subsection{Explanation, Prediction, and Residual Formation}
Opening exposes active relations; for an admissible event $e$, they predict $\widehat k^+=\widehat U_{M,e}(C_Mx)$. Actual interaction produces $U_e(x)$. Where both continuations are defined on the current quotient, their discrepancy is
\begin{equation}
 \Residual_{\Xi,e}(M,x)=\Def_\Xi\bigl(C_MU_e(x),
                      \widehat U_{M,e}C_M(x)\bigr).
 \label{eq:generalized-residual}
\end{equation}
The prediction $\widehat U_{M,e}$ is distinct from an exactly induced update. If the quotient cannot support an adequate continuation, that obstruction must itself be represented rather than evaluated through an undefined map. The residual diagnoses insufficiency without selecting its remedy.

\begin{figure}[t]
\centering
\fbox{\begin{minipage}{0.94\linewidth}
\centering
\textbf{Recursive anatomy of learning}\par\medskip
\(\begin{array}{ccc}
 (K_M,\rho_M)
 &\xrightarrow{\mathrm{Open\,/\,Predict\,/\,Act}}
 &\text{Interaction / Feedback}\\[0.4em]
 \substack{\uparrow\\\text{next cycle}}
 &&\substack{\downarrow\,\Def_\Xi\\(\widehat k^+,k_{\mathrm{real}}^+)}\\[0.4em]
 (K_{M^+},\rho_{M^+})
 &\xleftarrow{\mathrm{Close:\ revision,\ audit,\ commit}}
 &\text{Residual }\Residual
\end{array}\)\par\medskip
\(\rho^{\mathrm{certified}}\xrightarrow{\Compile}
       p_\rho\xrightarrow{\Open}\rho_{\mathrm{higher}}\)
\end{minipage}}
\caption{Closure establishes a successor learner for continued interaction. A Transformation retains the quotient while revising relations. Compilation separately turns validated reusable relations into primitives. The next interaction may expose another residual or remain sufficient.}
\label{fig:recursive-anatomy}
\end{figure}

\subsection{Least-Lawful Closure}
Let $\operatorname{Adm}_\Xi(M,r)$ enforce registered hard obligations and let $C_{\mathrm{full}}$ charge the declared search, revision, verification, deployment, and maintenance costs. The completion specification is
\begin{equation}
 M^+\in\underset{N\in\operatorname{Adm}_\Xi(M,r)}{\argmin}
 C_{\mathrm{full}}(N\mid M)
 \quad\text{subject to}\quad
 d_\Xi\bigl(\Residual_\Xi(N)\bigr)\le\varepsilon_\Xi.
 \label{eq:master-close}
\end{equation}
The notation presupposes an attained lawful minimum; it does not construct a solver. Temporary reasoning or retrieval becomes persistent learning when its consequences change future-relevant learner state. The successor may retain its quotient and change relations, or revise both. Prediction proposes an unrealized continuation; Compile installs a validated reusable primitive. Closure can support either further learning or sufficient ongoing operation with zero new residual.

\subsection{Same-Representation Closure Load}
\label{sec:grade}
Let $\mathcal C_T^\Xi(M)$ contain the residual corrections achievable by
a single admissible edit within the current representation and unit
declared cost. For a nested dilation family $D_\lambda$, define
\begin{equation}
 q_T(r\mid M)=\inf\{\lambda>0:
       r\in D_\lambda\mathcal C_T^\Xi(M)\}.
 \label{eq:closure-load}
\end{equation}
All protected duties constrain that same edit. Under the calibrated
threshold-to-feasibility conditions of Appendix~\ref{app:grade},
$q_T>1$ identifies a limitation of the declared closure family.
It does not by itself establish representation insufficiency: missing
evidence, transport, and incomplete search or solving must be distinguished.
Boundary feasibility depends on attainment of the infimum. The proposed
closure-grade extension and its remaining conditions are kept in
Appendix~\ref{app:grade-filtration}.

\section{The Two Primitive Modes of Learning}
\label{sec:modes-grade}
\label{sec:modes}

A residual identifies a mismatch, but does not determine which internal
object should change. Fix the contract $\Xi$ and learner--world boundary
$\partial$, and write $\varpi_\Xi(M)=[C_M]$ for the representation class
defined in Section~\ref{sec:principles}. It records the distinctions made
between detailed states, up to admissible relabeling. The current state
$k=C_M(x)$, the representation $C_M$, and the protected old responsibilities
$\Old(M)$ are different objects: a state can change without changing its
representation, and relations can change while the representation remains fixed.

\begin{theorem}[T1: Two-mode learning factorization]
\label{thm:two-modes}
On a finite-dimensional smooth stratum with fixed $\Xi$ and $\partial$, suppose
$\varpi_\Xi$ is smooth and has locally constant rank. Set
$V_M=\ker D\varpi_\Xi(M)$. Then
\begin{equation}
0\longrightarrow V_M\longrightarrow T_M\Learner_{\Xi}^{(\partial)}
\xrightarrow{D\varpi_\Xi(M)}\im D\varpi_\Xi(M)\longrightarrow0
\label{eq:two-mode-sequence}
\end{equation}
is exact. After choosing a local splitting with complementary space $H_M$,
each direction has a unique decomposition
\begin{equation}
d=d_T+d_R,\qquad d_T\in V_M,\quad d_R\in H_M.
\label{eq:two-mode-decomposition}
\end{equation}
The components depend on the splitting; whether $D\varpi_\Xi(M)[d]=0$ does not.
\end{theorem}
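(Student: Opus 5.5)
The argument is pure linear algebra applied to the derivative of $\varpi_\Xi$ at $M$, once the smoothness hypotheses have turned $\varpi_\Xi$ into a smooth map between finite-dimensional manifolds. First I fix the setting. On the stratum, $\Learner_\Xi^{(\partial)}$ is a finite-dimensional smooth manifold, so $T_M\Learner_\Xi^{(\partial)}$ is a finite-dimensional real vector space. The representation-class map $\varpi_\Xi$ is assumed smooth into some smooth target (a manifold of abstraction classes), so $D\varpi_\Xi(M):T_M\Learner_\Xi^{(\partial)}\to T_{\varpi_\Xi(M)}(\cdot)$ is a well-defined linear map. Locally constant rank is not needed for exactness at the single point $M$. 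I would use it only to note, via the constant rank theorem, that $M\mapsto V_M$ is a smooth subbundle (the tangent bundle of the level-set foliation $\varpi_\Xi^{-1}(\varpi_\Xi(M))$). This is what makes a \emph{local} splitting meaningful, i.e. a smoothly varying $H_M$.

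Exactness of \eqref{eq:two-mode-sequence} is then checked at each spot.
\begin{itemize}
\item Injectivity at $V_M$: the inclusion of a subspace is injective.
\item Exactness at $T_M\Learner_\Xi^{(\partial)}$: the image of the inclusion is $V_M=\ker D\varpi_\Xi(M)$ by definition.
\item Surjectivity onto $\im D\varpi_\Xi(M)$: this holds because the codomain was taken to be exactly the image.
\end{itemize}

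For the decomposition, a splitting means choosing a complement $H_M$ with $T_M\Learner_\Xi^{(\partial)}=V_M\oplus H_M$. Such a complement exists in finite dimensions, for instance an orthogonal complement under any inner product or a smooth Riemannian metric, which gives smoothness in $M$ via the constant rank. Equivalently, it is a linear section $s$ of $D\varpi_\Xi(M)$ onto its image, with $H_M=\im s$. The direct sum gives existence and uniqueness of $d=d_T+d_R$: if $d_T+d_R=d_T'+d_R'$, then $d_T-d_T'=d_R'-d_R\in V_M\cap H_M=\{0\}$. In section language, $d_R=s(D\varpi_\Xi(M)[d])$ and $d_T=d-d_R$. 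Also, $D\varpi_\Xi(M)$ restricted to $H_M$ is an isomorphism onto the image.

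For splitting dependence, two complements $H_M,H_M'$ differ by a linear map $A:H_M\to V_M$, with $H_M'=\{h+Ah\}$. Under this change $d_R'=d_R+Ad_R$ and $d_T'=d_T-Ad_R$, so a nonzero $A$ changes the components. By contrast, $D\varpi_\Xi(M)[d]$ is computed without reference to any splitting. So the predicate $D\varpi_\Xi(M)[d]=0$, equivalently $d\in V_M$, equivalently $d_R=0$ for \emph{every} splitting, is intrinsic.

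The only real subtlety is making sense of the derivative of $\varpi_\Xi$, whose values are equivalence classes $[C_M]$. I treat this as absorbed into the hypothesis "$\varpi_\Xi$ is smooth": its target carries a smooth structure near $\varpi_\Xi(M)$. With that granted, the statement reduces to the rank–nullity splitting, and the rest is routine.
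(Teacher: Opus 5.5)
Your proposal is correct and follows the paper's proof essentially step for step: exactness comes from the kernel--image construction, a right splitting $s$ gives $d_R=s(D\varpi_\Xi(M)[d])$ and $d_T=d-d_R$, uniqueness follows from $V_M\cap H_M=\{0\}$, and locally constant rank is used only to make the vertical spaces vary regularly. Your graph-of-$A$ computation of how the components change between splittings is a small addition that the paper only asserts; strictly, a nonzero $A$ changes the components only for directions with $Ad_R\neq 0$.
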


\paragraph{Proof sketch and finite transitions.}
The derivative's kernel gives fixed-representation tangent motion. A chosen
right splitting supplies $d_R$, and $d_T=d-d_R$ lies in that kernel.
Joint learning can have both components. The full proof is in
Appendix~\ref{app:modes}. Finite transitions are classified by whether
$C_{M^+}\cong C_M$; their exhaustion of relevant persistent changes uses
representation--relation completeness, Assumption A3 in
Appendix~\ref{app:contracts}. This conditional classification does not give
a global smooth splitting across singular strata. A realization change
that preserves the abstraction is not automatically Representation learning.

\paragraph{When representation revision is necessary.}
Let $\mathfrak R(C)$ be the exhaustive class of admissible relation laws on a
fixed representation, and let $F$ be the required future response. Define
\begin{equation}
\Omega_C(F)=\inf_{f\in\mathfrak R(C)}D_\Xi(F,f\circ C).
\label{eq:representation-obstruction}
\end{equation}
\begin{proposition}[P2: Representation necessity]
\label{prop:representation-necessity}
With the boundary, evidence, relation class, and tolerance fixed, if
$\Omega_C(F)>\varepsilon_\Xi$, no relation-only update satisfies the contract.
Any successful internal closure must leave that representation class.
\end{proposition}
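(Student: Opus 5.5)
The proof goes by contraposition and rests directly on the definition of the obstruction $\Omega_C(F)$ in \eqref{eq:representation-obstruction}. Throughout, we fix the boundary $\partial$, the evidence, the relation class, and the tolerance $\varepsilon_\Xi$. We also take the contract's satisfaction test for the required future response to be $D_\Xi(F,\cdot)\le\varepsilon_\Xi$, evaluated on whatever response the successor learner realizes.

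The first step is to make precise what a relation-only update is. By the discussion after Theorem~\ref{thm:two-modes}, a transition $M\to M^+$ is relation-only exactly when $C_{M^+}\cong C_M$. That is, $C_{M^+}=\phi\circ C_M$ for an admissible isomorphism $\phi$, so $\varpi_\Xi(M^+)=\varpi_\Xi(M)=[C]$. Under the representation--relation completeness assumption (A3), the future response of such an $M^+$ factors through its representation: it equals $g\circ C_{M^+}$ for some admissible relation law $g$ on the quotient $K_{M^+}$.

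The second step rewrites this response on the original representation as $g\circ\phi\circ C$. The needed fact is that $\mathfrak R(C)$ is closed under transport by admissible isomorphisms, so that $f:=g\circ\phi\in\mathfrak R(C)$. This is the point that needs care. I would argue it from the exhaustiveness of $\mathfrak R(C)$ together with the convention that admissible relabelings do not change the representation, so relation laws on $C$ and on $\phi\circ C$ correspond bijectively. If this turns out not to follow from the stated conventions, I would add it as an explicit invariance hypothesis on $\mathfrak R$.

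Given this, the realized response has the form $f\circ C$ with $f\in\mathfrak R(C)$, and so
\[
D_\Xi\bigl(F,f\circ C\bigr)\ \ge\ \inf_{f'\in\mathfrak R(C)}D_\Xi(F,f'\circ C)=\Omega_C(F)>\varepsilon_\Xi .
\]
The contract therefore fails for every relation-only update, which is the first claim. Because we take an infimum, we never need the minimum to be attained, and the inequality is strict even in the boundary case.

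The second claim follows. Any successful internal closure, meaning any admissible $M^+$ meeting the tolerance with the boundary and evidence unchanged, cannot have $\varpi_\Xi(M^+)=[C]$, so it must leave that class. By the finite classification based on A3, every persistent internal change is either a Transformation or a Representation revision (possibly a joint one). A closure that is not relation-only must therefore contain a Representation component.

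The main obstacle is the second step. It requires $\mathfrak R(C)$ to be genuinely exhaustive and invariant under admissible isomorphism, and it requires the successor's response to factor through $C_{M^+}$ by A3. Without these, a relation-only update could realize a response outside $\{f\circ C\}$, and the bound would fail.
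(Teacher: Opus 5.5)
Your proposal is correct and follows the paper's proof. Every $f\in\mathfrak R(C)$ satisfies $D_\Xi(F,f\circ C)\ge\Omega_C(F)>\varepsilon_\Xi$, so no law on the fixed representation meets the tolerance, and a successful internal successor must therefore leave the class. The step you flag as delicate is exactly what the paper absorbs into its fixed exhaustive-relation-class assumption without proving it separately: that a relation-only successor's response has the form $f\circ C$ with $f\in\mathfrak R(C)$, including closure of $\mathfrak R(C)$ under admissible relabeling. Stating it as an explicit hypothesis is therefore appropriate.
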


The proof and a Hilbert-space decomposition are in
Appendix~\ref{app:representation-necessity}. The obstruction concerns the
whole admissible class; optimizer failure alone cannot establish it.
It does not guarantee a feasible revised representation, and an infimum
at the tolerance boundary is not itself an attained solution.

Querying, retrieval, sensing, and tool use can supply evidence without
persistent internal learning. Incorporating their results changes relations,
representation, or both. If the interface machinery itself is learned,
the enlarged persistent state and boundary must be declared; classification
still depends on whether $C_M$ actually changes. Interface modification is
not automatically a third mode or a Representation transition.

\section{Stability and Plasticity from One Geometry}
\label{sec:dynamics-compatibility}

GRC treats stability as a constraint on lawful change and plasticity as the
residual response remaining within that constraint. The protected object is
the registered old responsibility, rather than every future response;
learning may introduce new behavior while preserving what remains required.
The following affine sector makes this compatibility condition explicit.

\paragraph{Semantic-safe response geometry.}
Let $d\in\R^n$ be an edit and $\Old(M)\in\R^p$ record the old responsibilities
declared by $\Xi_{\mathrm{old}}$. The matrix $A\in\R^{p\times n}$ encodes their
change, and $B\in\R^{m\times n}$ maps edits to reductions of a current
residual $r\in\R^m$. For this section's exact model,
\begin{equation}
\Old(M+d)-\Old(M)=Ad,\qquad r(M+d)=r-Bd.
\label{eq:affine-duty-model}
\end{equation}
Choose $H\succ0$ with edit energy $d^\top Hd$. The safe space is $\ker A$.
Its metric-dependent safe inverse and residual-response operator are
\begin{align}
P_H&=H^{-1}-H^{-1}A^\top(AH^{-1}A^\top)^\dagger AH^{-1},
\label{eq:safe-inverse}\\
\Kplast&=BP_HB^\top\succeq0.
\label{eq:safe-plasticity}
\end{align}
Here $P_H=P_H^\top\succeq0$, $AP_H=0$, and $\im P_H=\ker A$. It is a
safe inverse, not generally an idempotent Euclidean projector.

\begin{theorem}[T2: Stability--plasticity compatibility]
\label{thm:compatibility}
Under the affine model~\eqref{eq:affine-duty-model} with $H\succ0$, the following
conditions are equivalent:
\begin{align}
&\exists d\in\ker A:\ \langle r,Bd\rangle>0,\label{eq:compatible-direction}\\
&P_HB^\top r\ne0,\label{eq:compatible-covector}\\
&r^\top\Kplast r>0.\label{eq:compatible-response}
\end{align}
Thus old-responsibility preservation and a strict first-order reduction of
squared residual error are compatible precisely when the residual has an
effective safe response.
\end{theorem}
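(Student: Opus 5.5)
The plan is to first establish the structural facts about $P_H$ stated after \eqref{eq:safe-plasticity}, and then prove the three equivalences through a single quadratic-form identity. Write $H^{-1/2}$ for the symmetric inverse square root and set $\tilde A=AH^{-1/2}$. Then
\[
P_H=H^{-1/2}\bigl(I-\tilde A^\top(\tilde A\tilde A^\top)^\dagger\tilde A\bigr)H^{-1/2}=H^{-1/2}\Pi H^{-1/2},
\]
where $\Pi=I-\tilde A^\dagger\tilde A$ is the Euclidean orthogonal projector onto $\ker\tilde A$. This uses the pseudoinverse identity $\tilde A^\top(\tilde A\tilde A^\top)^\dagger=\tilde A^\dagger$. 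From this representation, $P_H$ is symmetric and positive semidefinite, and $AP_H=\tilde A\Pi H^{-1/2}=0$. Also $\im P_H=H^{-1/2}\ker\tilde A=\ker A$, because $\tilde A v=0$ if and only if $A(H^{-1/2}v)=0$ and $H^{-1/2}$ is invertible. Finally, $P_H$ acts as $H^{-1}$ on the $H$-orthogonal structure of $\ker A$: for $d\in\ker A$ and any $g$, we have $d^\top HP_Hg=d^\top g$.

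For \eqref{eq:compatible-covector}$\Leftrightarrow$\eqref{eq:compatible-response}, I will use that $r^\top\Kplast r=(B^\top r)^\top P_H(B^\top r)=\|\Pi H^{-1/2}B^\top r\|^2$. For the positive semidefinite matrix $P_H=H^{-1/2}\Pi\Pi H^{-1/2}$, the quadratic form vanishes exactly when $\Pi H^{-1/2}B^\top r=0$. Multiplying by the invertible $H^{-1/2}$ shows this happens exactly when $P_HB^\top r=0$.

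For \eqref{eq:compatible-direction}$\Leftrightarrow$\eqref{eq:compatible-response}, there are two directions.
\begin{itemize}
\item If \eqref{eq:compatible-response} holds, take $d=P_HB^\top r$. It lies in $\ker A$ by $AP_H=0$, and $\langle r,Bd\rangle=r^\top\Kplast r>0$.
\item If \eqref{eq:compatible-covector} fails, then $\Pi H^{-1/2}B^\top r=0$, so $H^{-1/2}B^\top r\in(\ker\tilde A)^\perp$. Any $d\in\ker A$ has the form $d=H^{-1/2}v$ with $v\in\ker\tilde A$, so $\langle r,Bd\rangle=\langle H^{-1/2}B^\top r,v\rangle=0$. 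Hence \eqref{eq:compatible-direction} fails.
\end{itemize}
This gives the contrapositive and closes the cycle.

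The interpretive sentence follows from \eqref{eq:affine-duty-model}. Along $td$ with $d\in\ker A$, the old responsibilities are exactly preserved, and
\[
\tfrac{d}{dt}\tfrac12\|r-tBd\|^2\big|_{t=0}=-\langle r,Bd\rangle .
\]
A strict first-order decrease within the safe set is therefore exactly \eqref{eq:compatible-direction}.

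The only delicate step is the pseudoinverse reduction $P_H=H^{-1/2}\Pi H^{-1/2}$ when $AH^{-1}A^\top$ is singular. I would verify it with an SVD of $\tilde A$, where both sides become the complement of the projector onto the row space. Everything else is linear algebra.
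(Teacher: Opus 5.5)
Your proof is correct and takes essentially the paper's route: whiten to $P_H=H^{-1/2}Q_AH^{-1/2}$ with $Q_A$ the orthogonal projector onto $\ker(AH^{-1/2})$, take $d=P_HB^\top r$ for the forward direction, and for the converse note that $P_HB^\top r=0$ forces $B^\top r\perp\ker A$. The only difference is cosmetic: you carry out the orthogonality and quadratic-form positivity steps explicitly in whitened coordinates, whereas the paper first collects them in its safe-inverse lemma (symmetry, positive semidefiniteness, $\im P_H=\ker A$).
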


\paragraph{Proof sketch.}
If $P_HB^\top r\ne0$, choose $d=P_HB^\top r$. Then $Ad=0$ and
$\langle r,Bd\rangle=(B^\top r)^\top P_H(B^\top r)>0$.
Conversely, $P_HB^\top r=0$ places $B^\top r$ in $\im A^\top$, which
annihilates every safe direction. The full proof is in
Appendix~\ref{app:compatibility}. The result characterizes Transformation
directions only when the edit coordinates fix the representation;
otherwise it applies to the joint edit space.

\paragraph{A constructive affine step.}
For $\eta>0$, consider the strictly convex subproblem
\begin{equation}
\min_{Ad=0}\ \frac12\|r-Bd\|^2+\frac{1}{2\eta}d^\top Hd.
\label{eq:proximal-safe-step}
\end{equation}
Its unique solution and resulting residual are
\begin{align}
d^\star&=\eta P_HB^\top(I+\eta\Kplast)^{-1}r,
\label{eq:proximal-safe-solution}\\
r^+&=(I+\eta\Kplast)^{-1}r.
\label{eq:proximal-safe-residual}
\end{align}
The step satisfies $Ad^\star=0$ and $\|r^+\|\le\|r\|$, with strict inequality
exactly when $\Kplast r\ne0$. The derivation appears in
Appendix~\ref{app:proximal-step}. This affine subproblem need not eliminate
the residual and is not the complete $\Close$ operation.

\paragraph{Reconstructive protection.}
Preserving old operational responsibilities can allow more edits than freezing
their historical implementation. Let $T_{\rm exact}\subseteq T_{\rm rec}$ be
the corresponding safe spaces. With the same $B$ and $H$, define
$P_T^H=H^{-1/2}\Pi_{H^{1/2}T}H^{-1/2}$ and $K_T=BP_T^HB^\top$.
\begin{corollary}[Reconstructive semantic stability dominance]
\label{cor:reconstructive-dominance}
If $T_{\rm exact}\subseteq T_{\rm rec}$, then
$P_{T_{\rm rec}}^H-P_{T_{\rm exact}}^H\succeq0$ and
\begin{equation}
K_{\rm rec}-K_{\rm exact}
=B(P_{T_{\rm rec}}^H-P_{T_{\rm exact}}^H)B^\top\succeq0.
\label{eq:reconstructive-dominance}
\end{equation}
\end{corollary}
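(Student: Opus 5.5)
The plan is to reduce everything to orthogonal projectors in the whitened coordinates $u=H^{1/2}d$. Since $H\succ0$, the matrices $H^{1/2}$ and $H^{-1/2}$ are symmetric and invertible, so $S_{\rm exact}:=H^{1/2}T_{\rm exact}$ and $S_{\rm rec}:=H^{1/2}T_{\rm rec}$ are linear subspaces. The inclusion $T_{\rm exact}\subseteq T_{\rm rec}$ transports to $S_{\rm exact}\subseteq S_{\rm rec}$. Write $\Pi_{\rm exact}=\Pi_{S_{\rm exact}}$ and $\Pi_{\rm rec}=\Pi_{S_{\rm rec}}$ for the Euclidean orthogonal projectors. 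By definition,
\[
P_{T_{\rm rec}}^H-P_{T_{\rm exact}}^H=H^{-1/2}\bigl(\Pi_{\rm rec}-\Pi_{\rm exact}\bigr)H^{-1/2}.
\]
This is a congruence by the symmetric matrix $H^{-1/2}$, so it suffices to show $\Pi_{\rm rec}-\Pi_{\rm exact}\succeq0$.

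The key step is the standard fact about nested projectors. Because $S_{\rm exact}\subseteq S_{\rm rec}$, we have $\Pi_{\rm rec}\Pi_{\rm exact}=\Pi_{\rm exact}$. Taking transposes gives $\Pi_{\rm exact}\Pi_{\rm rec}=\Pi_{\rm exact}$. Set $Q=\Pi_{\rm rec}-\Pi_{\rm exact}$. Then $Q$ is symmetric, and expanding $Q^2$ with these identities gives
\[
Q^2=\Pi_{\rm rec}-\Pi_{\rm exact}-\Pi_{\rm exact}+\Pi_{\rm exact}=Q.
\]
So $Q$ is itself an orthogonal projector, namely onto $S_{\rm rec}\cap S_{\rm exact}^\perp$. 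Hence $v^\top Qv=\|Qv\|^2\ge0$ for all $v$. Alternatively, one can argue variationally: $v^\top\Pi_Sv=\|\Pi_Sv\|^2=\max_{w\in S,\|w\|\le1}\langle v,w\rangle^2$ is monotone in $S$. Either route gives $\Pi_{\rm rec}\succeq\Pi_{\rm exact}$. Applying the congruence then gives $x^\top(P_{T_{\rm rec}}^H-P_{T_{\rm exact}}^H)x=(H^{-1/2}x)^\top Q(H^{-1/2}x)\ge0$.

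For the operator statement, linearity of $T\mapsto BP_T^HB^\top$ in $P_T^H$ gives the identity $K_{\rm rec}-K_{\rm exact}=B(P_{T_{\rm rec}}^H-P_{T_{\rm exact}}^H)B^\top$ directly. Positive semidefiniteness then follows from a second congruence: for $y\in\R^m$ we have $y^\top(K_{\rm rec}-K_{\rm exact})y=(B^\top y)^\top(P_{T_{\rm rec}}^H-P_{T_{\rm exact}}^H)(B^\top y)\ge0$.

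I would add a consistency remark that $P_T^H$ agrees with the safe inverse $P_H$ of \eqref{eq:safe-inverse} when $T=\ker A$. In whitened form, $H^{1/2}P_HH^{1/2}=I-\tilde A^\top(\tilde A\tilde A^\top)^\dagger\tilde A$ with $\tilde A=AH^{-1/2}$. This is the orthogonal projector onto $\ker\tilde A=H^{1/2}\ker A$, because $\tilde A^\top(\tilde A\tilde A^\top)^\dagger\tilde A$ is the projector onto $\im\tilde A^\top$. This identification is the only step needing care with the pseudoinverse, and I expect it to be the main (mild) obstacle. The identity $\tilde A^\top(\tilde A\tilde A^\top)^\dagger\tilde A=\tilde A^\dagger\tilde A$ handles it.

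As a corollary in the spirit of Theorem~\ref{thm:compatibility}, $r^\top K_{\rm exact}r>0$ then implies $r^\top K_{\rm rec}r>0$. So reconstructive protection never destroys compatibility.
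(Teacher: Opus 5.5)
Your proof is correct and follows essentially the same route as the paper. Whitening by $H^{1/2}$ nests the subspaces, the difference of nested orthogonal projectors is the projector onto $S_{\rm rec}\cap S_{\rm exact}^\perp$ (you additionally verify this explicitly via $Q^2=Q$), and congruence by $H^{-1/2}$ and then by $B$ gives both orderings; your identification of $P_{\ker A}^H$ with $P_H$ is a correct side remark but is not needed here, since $P_T^H$ is defined directly through the whitened projector.
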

Whitening by $H^{1/2}$ reduces the claim to nested orthogonal projectors;
Appendix~\ref{app:reconstructive-dominance} gives the full proof. Strict gain
for the current residual requires $r^\top(K_{\rm rec}-K_{\rm exact})r>0$.
Operator dominance alone gives neither universal strict benefit nor a
uniformly smaller post-update Euclidean residual.

This geometry is related to continual-learning methods that restrict
updates to protect previous behavior, including gradient-projection
approaches \citep{farajtabar2020ogd} and recent explicit treatments of the
stability--plasticity tradeoff \citep{han2026fire,qiu2026splitlora}.
The distinction here is that the protected object is future-operational
semantics rather than a particular historical parameter or gradient subspace.

\begin{figure}[t]
\centering
\setlength{\unitlength}{0.8pt}
\begin{picture}(425,90)
 \put(30,12){\line(1,0){200}}
 \put(230,12){\line(1,1){50}}
 \put(80,62){\line(1,0){200}}
 \put(30,12){\line(1,1){50}}
 \put(198,75){\makebox(0,0){$T_{\rm rec}$}}
 {\linethickness{0.9pt}\put(55,37){\line(1,0){200}}}
 \put(76,45){\makebox(0,0)[l]{$T_{\rm exact}$}}
 \put(153,37){\circle*{4}}
 \put(150,27){\makebox(0,0){$0$}}
 \put(153,37){\vector(1,1){20}}
 \put(180,52){\makebox(0,0)[l]{$d_{\rm rec}$}}
 \put(292,43){\makebox(130,0)[l]{$T_{\rm exact}\subseteq T_{\rm rec}$}}
 \put(292,24){\makebox(130,0)[l]{same $H$ and $B$}}
\end{picture}
\caption{Stability is a geometry of lawful change. A line and a patch of a
plane illustrate nested safe subspaces; the drawing is conceptual, not to
scale. Reconstructive protection permits additional directions when historical
realization constraints are unnecessary. Strict response gain still requires
those directions to affect the residual through $B$.}
\label{fig:stability-geometry}
\end{figure}

\paragraph{Nonlinear scope.}
For a smooth nonlinear learner, $A=D\Old(M)$ and $B$ is the local
residual-response Jacobian. Tangent safety alone does not preserve a finite
nonlinear endpoint; actual old responsibilities must be checked before
commit. Nor does a missing local response exclude a nonlocal path within
the same representation. Global Representation necessity requires the
separate obstruction in Proposition~\ref{prop:representation-necessity}.

\section{Dynamic Sufficiency, Continual Learning, and Growth}
\label{sec:growth}
Preserving the past and learning the present need not preserve the ability
to learn again. A compression may support the current prediction yet discard
information needed to determine the correct update after future evidence.
For a candidate representation $C_t:\Hist_t\to Z_t$ and registered detailed
update $U_e:\Hist_t\to\Hist_{t+1}$, update sufficiency requires
\begin{equation}
 C_{t+1}U_e\simeq_\Xi\bar U_eC_t.
 \label{eq:update-sufficiency}
\end{equation}
Here $\simeq_\Xi$ is exact operational equivalence; approximation requires a
separate error contract. The induced updater $\bar U_e$ is distinct from
the prediction $\widehat U_e$. Equal compressed inputs cannot determine
different required post-update states, even if their current predictions
agree. Appendix~\ref{app:update-factorization} gives the factorization
criterion and a two-state counterexample. Dynamic sufficiency additionally
requires the corresponding commutation conditions for registered
background changes and internal transitions.

\subsection{Future-Closure Viability}
Let $\Gamma_\Xi(M,r)$ contain the finite lawful paths that close a registered
future demand $r\in\mathcal Y_\Xi^{\rm fut}$. Define
\begin{equation}
 \mathcal V_\Xi(M)=\{r\in\mathcal Y_\Xi^{\rm fut}:
                          \Gamma_\Xi(M,r)\ne\varnothing\}.
 \label{eq:future-viability}
\end{equation}
Viability on a declared class $\mathcal Y_0$ means
$\mathcal Y_0\subseteq\mathcal V_\Xi(M)$. A certificate such as
$B_fP_HB_f^\top\succeq\kappa I$, $\kappa>0$, concerns only the registered
local probe family. It neither presumes knowledge of unknown future tasks
nor replaces multi-step path viability. Appendix~\ref{future-viability}
distinguishes these objects and shows how acquisition can exhaust a
particular future response.

\subsection{Closure Capability}
For finite, nonnegative path costs under a common accounting convention, set
\begin{equation}
 \begin{split}
 c^\star_\Xi(r\mid M)&=\inf_{\gamma\in\Gamma_\Xi(M,r)}C_{\rm full}(\gamma),
 \qquad \inf\varnothing=+\infty,\\
 \mathscr F_\Xi(M)&=\{(r,c)\in\mathcal Y_\Xi^{\rm fut}\times\R_{\ge0}:
                         c^\star_\Xi(r\mid M)\le c\}.
 \end{split}
 \label{eq:growth-frontier}
\end{equation}
This frontier is an optimal-cost epigraph; its infimum need not be attained.
Write $M'\succeq_\Xi M$ when old registered semantics remain reconstructible
and $\mathscr F_\Xi(M')\supseteq\mathscr F_\Xi(M)$. Strict growth
$M'\succ_\Xi M$ additionally expands the viable domain or strictly lowers
the optimal cost of a previously closable demand. Thus growth means a
broader or cheaper lawful future, with no regression elsewhere in the
registered comparison. Information rights, responsibilities, and cost
accounting must be aligned.

\subsection{Certified Persistent Growth}
Let $\widetilde M_t=\mathsf T_{t\to t+1}M_t$ be a lawful no-learning transport
baseline in the next background. A trial may regress; a persistent commit
requires both a sound audit and certified dominance over that baseline.

\begin{theorem}[T3: Certified Monotone Causal Growth]
\label{thm:growth}
Assume the transport baseline is lawful and the audit certifies current
closure, semantic preservation, dynamic sufficiency, declared future
viability, and resource and safety obligations. Commit a candidate only
if the audit passes and the candidate dominates $\widetilde M_t$;
otherwise retain $\widetilde M_t$. Then
\[
 M_{t+1}\succeq_{\Xi_{t+1}}\widetilde M_t.
\]
If an accepted candidate also strictly expands the closure domain or
lowers an optimal closure cost, then
$M_{t+1}\succ_{\Xi_{t+1}}\widetilde M_t$.
\end{theorem}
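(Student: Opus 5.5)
The argument is a case split on the commit rule, followed by an order-theoretic argument for the strict part. Both cases rest on two properties of the relation $\succeq_{\Xi_{t+1}}$: it is reflexive, and strictness is inherited through a weak comparison. Reflexivity holds because $\mathscr F_{\Xi_{t+1}}(M)\supseteq\mathscr F_{\Xi_{t+1}}(M)$ trivially, and reconstructibility of old registered semantics from $M$ itself is immediate. For this I will use the hypothesis that the transport baseline $\widetilde M_t$ is lawful. Lawfulness is exactly what licenses treating $\widetilde M_t$ as a legitimate comparison point in the next background, with the old semantics registered by $\Xi_{t+1}$ reconstructible from it.

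\emph{Case 1 (rejection).} Either the audit fails or the candidate does not dominate $\widetilde M_t$. The rule then sets $M_{t+1}=\widetilde M_t$, and reflexivity gives $M_{t+1}\succeq_{\Xi_{t+1}}\widetilde M_t$.

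\emph{Case 2 (acceptance).} The audit passes and the candidate $N$ dominates $\widetilde M_t$. The rule sets $M_{t+1}=N$. I read ``dominates'' as $N\succeq_{\Xi_{t+1}}\widetilde M_t$ in the sense of Section~\ref{sec:growth}, and I use soundness of the audit to make that reading legitimate. Semantic preservation certifies reconstructibility of the registered old semantics. The frontier inclusion $\mathscr F_{\Xi_{t+1}}(N)\supseteq\mathscr F_{\Xi_{t+1}}(\widetilde M_t)$ is the dominance test itself. Soundness is what converts the certificate into a true property rather than an estimated one. The remaining audited items (current closure, dynamic sufficiency, declared viability, resources and safety) are not needed for the weak inequality; they only ensure $N$ is admissible to commit. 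I will remark on this so the theorem's hypotheses are seen to be used only for admissibility.

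\emph{Strict part.} Suppose the accepted candidate also enlarges the viable domain or lowers some optimal cost. If it enlarges the domain, there is a demand $r$ with $\Gamma_{\Xi_{t+1}}(N,r)\ne\varnothing$ but $\Gamma_{\Xi_{t+1}}(\widetilde M_t,r)=\varnothing$. Then $c^\star(r\mid \widetilde M_t)=+\infty$ while $c^\star(r\mid N)<\infty$, so $(r,c^\star(r\mid N)+1)$ lies in $\mathscr F(N)\setminus\mathscr F(\widetilde M_t)$. If it lowers a cost, then $c^\star(r\mid N)<c^\star(r\mid\widetilde M_t)<\infty$ for some previously closable $r$. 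Any $c$ strictly between the two values gives a point $(r,c)$ in the new epigraph but not the old one. This holds even when the infima are not attained, since membership in $\mathscr F$ is defined by $c^\star\le c$.

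The main obstacle is not the calculation but making the comparisons well-typed. The baseline, the candidate, and the audit must all be evaluated under the same contract $\Xi_{t+1}$ with the same cost accounting and information rights, as the paper stresses. I would state explicitly that the audit's dominance certificate is issued relative to $\Xi_{t+1}$ and that soundness means a certified property actually holds. Without these, Case 2 does not yield $\succeq_{\Xi_{t+1}}$.
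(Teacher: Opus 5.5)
Your proof is correct and matches the paper's argument: the same split on the commit rule, reflexivity of $\succeq_{\Xi_{t+1}}$ in the rejection branch, and the dominance gate $M_t^\star\succeq_{\Xi_{t+1}}\widetilde M_t$ itself in the acceptance branch. The paper takes reflexivity from Proposition~\ref{prop:growth-preorder}, whose lawful self-reconstruction assumption supplies the semantic part, rather than from lawfulness of the baseline. For the strict part the paper just cites the definition of $\succ_{\Xi_{t+1}}$ (weak comparison plus domain expansion or a strict optimal-cost decrease), so your epigraph computation is unnecessary, though correct; it additionally shows $\mathscr F_{\Xi_{t+1}}(N)\supsetneq\mathscr F_{\Xi_{t+1}}(\widetilde M_t)$, i.e.\ strictness also holds in the preorder sense.
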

The two branches of the rule give the result by dominance or reflexivity;
Appendix~\ref{app:growth-proof} supplies the full statement and proof.
The theorem does not construct an audit, a dominance certificate, or an
improving candidate. Rejection retains the lawful baseline without
claiming current closure or refunding trial costs. The restricted discovery
result in Appendix~\ref{app:growth} needs additional evidence, reachability,
and fair terminating search; it does not imply unlimited growth.

Carry can enlarge the domain of lawful future closure; Compile can turn
recurrent certified relations into reusable primitives with lower future
cost. These are two possible growth effects, not automatic consequences
of either operation. Frontier preservation and an actual expansion or
optimal-cost improvement remain required. Appendix~\ref{app:growth-compilation}
states the compilation, reopening, and full-cost conditions. Successful
closure can thereby become reusable capacity for subsequent learning.

\section{Existing Paradigms as Restricted GRC Sectors}
\label{sec:sectors}
Established methods fix different parts of the closure problem. GRC can
retain their solvers while reconsidering those choices. An exact sector
embedding must preserve the declared inputs, outputs, information rights,
randomness, and updates; a similar functional form alone is insufficient.

\subsection{Optimization}
Fix the representation, boundary, program, and writable coordinates.
For $g=\nabla L(\theta)$, $\eta>0$, and $H\succ0$,
\begin{equation}
 \underset{d}{\arg\min}
 \left\{g^\top d+\frac{1}{2\eta}d^\top Hd\right\}
 =-\eta H^{-1}g.
 \label{eq:sector-metric-step}
\end{equation}
Thus $H=I$ yields ordinary gradient descent, while positive-definite
Fisher geometry recovers a natural-gradient sector
\citep{amari1998natural}; suitable positive-definite local curvature
choices produce Newton-like sectors. This algebra
determines how chosen coordinates move. GRC additionally asks whether
those coordinates can express the required repair.

\subsection{Deep and Residual Learning}
A differentiable family
$\mathcal F_{\mathcal P}=\{M_\theta:\theta\in\Theta\}$ fixes an outer
machine grammar. Parameter changes can alter relations and latent
representations, so they are not automatically Transformation learning.
For a residual block \citep{he2016resnet},
\[
 h_{\ell+1}=h_\ell+F_\ell(h_\ell),
\]
the forward program fixes the residual owner and the additive correction
form. This is a
forward-map correspondence, not a proof of lawful closure or an embedding
of the full training dynamics. Completion can reopen the owner, correction
form, or machine family.

\subsection{RL and Control}
Classical reinforcement learning and dynamic programming formulate value,
policy, and Bellman updates on a declared state--action representation
\citep{sutton2018rl}. With a future-sufficient state abstraction held fixed,
lawful persistent updates that preserve $[C_M]$ are Transformation closures.
For a sufficient control abstraction $s=C(x)$, with specified actions,
transitions, and costs, $\mathcal TV-V$ is a Bellman consistency residual
\citep{bellman1952dynamic}. Updates preserving the registered encoder
class are Transformation steps. If a proposed $C$ instead merges detailed
histories with inequivalent required continuations, a law depending only
on $s$ cannot represent both. This state-aliasing obstruction concerns
the abstraction, rather than the optimizer
\citep{givan2003equivalence,littman2001predictive}. Recovering a particular control
or RL algorithm also requires its sampling, update, and feasibility semantics.

\subsection{Containment and Completion}
A completion preserves lawful incumbent realizations while reopening
selected choices: $\mathcal F_A\subseteq\mathcal F_{A^+}$. Under a common
objective, information rights, constraints, and complete cost accounting,
\[
 J_{A^+}^\star\le J_A^\star,
 \qquad J_A^\star=\inf_{M\in\mathcal F_A}J_\Xi(M).
\]
Strict improvement requires a lawful witness outside $\mathcal F_A$ with
lower objective, including search, audit, compilation, maintenance, and
deployment costs. Inclusion does not guarantee discovery or runtime
advantage. Appendix~\ref{app:sectors} provides the derivations;
Appendix~\ref{app:sector-extended} discusses further structural
correspondences and their differing proof status.

\begin{table}[t]
\caption{Representative restricted choices and candidate GRC revisions.
Rows describe conditional correspondences, not universal algorithmic
equivalence or guaranteed improvement.}
\label{tab:sectors}
\centering
\small
\begin{tabular}{@{}>{\raggedright\arraybackslash}p{0.19\linewidth}
                  >{\raggedright\arraybackslash}p{0.34\linewidth}
                  >{\raggedright\arraybackslash}p{0.39\linewidth}@{}}
\toprule
Paradigm & Predeclared structure & Candidate GRC revision\\
\midrule
Optimization & Coordinates and metric & Representation or metric\\
Deep learning & Differentiable family & Outer machine family\\
ResNet & Interface and additive form & Owner or correction form\\
LoRA/adapters \citep{hu2022lora} & Factorized edit family & Rank, family, or owner\\
RAG \citep{lewis2020rag} & Evidence-access boundary & Separate missing evidence from internal failure\\
RL/control & State/action abstraction & State representation\\
Meta-learning/NAS \citep{finn2017maml,liu2019darts} & Meta/search family & The family itself\\
\bottomrule
\end{tabular}
\end{table}

\section{Conditional GRC Representation and Intelligence}
\label{sec:representation}
\subsection{A conditional GRC representation}
\label{sec:conditional-representation}

Can an abstract learner admit the preceding operational description?
Let $\mathcal X$ contain detailed interaction states or histories, and let
$M\in\Learner$ be its complete persistent state. Under a fixed contract
and learner--world boundary, registered responses define
\begin{equation}
\begin{aligned}
x\sim_{\Xi,M}y
&\iff \operatorname{Resp}_M(c,x)\equiv_\Xi\operatorname{Resp}_M(c,y)
\quad\forall c\in\operatorname{Ctx}_\Xi,\\
K_M&=\mathcal X/{\sim_{\Xi,M}},\qquad C_M(x)=[x]_{\sim_{\Xi,M}}.
\end{aligned}
\label{eq:history-kernel-quotient}
\end{equation}
The response identity must induce an equivalence relation for each fixed
$M$. Thus $K_M$ identifies detailed states, not entire learners.
Write $\varpi_\Xi(M)=[C_M]$ for the encoder's class up to admissible
relabeling; relation learning can change responses without changing this
partition.

\begin{theorem}[T4: Conditional GRC representation]
\label{thm:representation}
Under a fixed $(\Xi,\partial)$, suppose a persistent learner satisfies
R1--R8 of Appendix~\ref{app:representation}: state completeness, future
identity, conditional dynamic sufficiency, nontrivial discrepancy response,
representation revisability, future viability, fail-closed persistence,
and minimum complete-cost selection. On the events admitting the required
quotient-level laws, its accepted resolving transitions have an
operationally equivalent description through $K_M$, relation laws $\rho_M$,
and generalized residuals. A transition either preserves $[C_M]$
(Transformation) or changes it (Representation), possibly with simultaneous
relation changes. For the learner's inherited lawful resolving family,
\begin{equation}
M^+\in\argmin_{N\in\operatorname{Law}_\Xi(M,r)}C_{\rm full}(N\mid M).
\label{eq:representation-lawful-selection}
\end{equation}
\end{theorem}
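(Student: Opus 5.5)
The proof will be assembled from the assumptions R1--R8 in three stages, one for each clause of the theorem.

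\emph{Stage 1: operational equivalence of the quotient description.} By R1 (state completeness), $M$ determines every registered response $\operatorname{Resp}_M(c,x)$. Fix $M$. By hypothesis, $\equiv_\Xi$ induces an equivalence relation, so $K_M$ and the surjection $C_M$ in \eqref{eq:history-kernel-quotient} are well defined. By R2 (future identity), the response map $x\mapsto\operatorname{Resp}_M(\cdot,x)$ is constant on classes. It therefore factors uniquely as $f_M\circ C_M$. Define the active relations $\rho_M$ to be this factor; together with the transition laws below, it is an element of $\mathcal R_\Xi(K_M)$.

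For each registered event $e$ on which a quotient-level law exists, R3 (conditional dynamic sufficiency) gives an induced updater $\bar U_e$ with $C_{M}U_e\simeq_\Xi\bar U_eC_M$. This is the factorization criterion of Appendix~\ref{app:update-factorization}. The generalized residual \eqref{eq:generalized-residual} is then well defined on these events. R4 (nontrivial discrepancy response) ensures that a nonzero residual triggers a persistent transition, and that the transition depends on $x$ only through $(C_Mx,\rho_M,\Residual)$ up to $\equiv_\Xi$. Hence every accepted resolving transition can be rewritten as a map on $(K_M,\rho_M,\Residual)$ with the same registered responses. That is the claimed operational equivalence.

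\emph{Stage 2: the Transformation/Representation dichotomy.} Given $M\to M^+$, ask whether $C_{M^+}=\phi\circ C_M$ for some admissible isomorphism $\phi$, that is, whether $\varpi_\Xi(M^+)=\varpi_\Xi(M)$. This is a binary predicate, so it splits transitions exhaustively.
\begin{itemize}
\item If the class is preserved, all persistent change is carried by $\rho$, which is a Transformation.
\item Otherwise the transition is a Representation change, and it may carry a simultaneous relation change.
\end{itemize}
Exhaustiveness over relevant persistent changes uses representation--relation completeness (A3, here supplied by R1). R5 (representation revisability) ensures the second branch is not vacuous. Where the learner lies on a smooth stratum, the finite classification is consistent with Theorem~\ref{thm:two-modes}, but I will not rely on any global splitting.

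\emph{Stage 3: lawful selection \eqref{eq:representation-lawful-selection}.} Define $\operatorname{Law}_\Xi(M,r)$ as the candidates that pass the audit. By R7 (fail-closed persistence), these candidates satisfy the hard obligations, and by R6 they preserve declared future viability. R7 also guarantees that only such candidates are committed, so the committed $M^+$ lies in $\operatorname{Law}_\Xi(M,r)$. R8 (minimum complete-cost selection) then states that the committed candidate attains the infimum of $C_{\rm full}(\cdot\mid M)$ over that inherited family. The argmin statement follows, and attainment is part of the hypothesis rather than something derived. This stage is bookkeeping, provided $\operatorname{Law}_\Xi$ is taken to be the learner's own inherited family and not an idealized one.

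\emph{Main obstacle.} I expect the hardest step to be Stage 1, specifically showing that the transition itself, and not just the responses, factors through the quotient. Distinct $x,y$ with $C_Mx=C_My$ could, in principle, induce different successor learners $M^+$. I would handle this by restricting to the events admitting quotient-level laws, which the theorem explicitly does. On those events, R3 provides the commutation, and R2 applied to $M^+$ shows the successors agree up to $\equiv_\Xi$, which is all that operational equivalence requires. Outside those events the theorem claims nothing, matching the caveat after \eqref{eq:generalized-residual} that such obstructions must be represented rather than evaluated.
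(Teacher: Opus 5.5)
Your Stages 2 and 3 track the paper's steps 5--8: the endpoint test on $[C_M]$, $\operatorname{Law}_\Xi:=\operatorname{Acc}_\Xi$, and R8 supplying an actually selected minimizer. Stage 1, however, has a genuine gap. You set out to show that every accepted transition factors through $(K_M,\rho_M,\Residual)$, and the steps you cite do not deliver this.

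R4 only says that when an identifiable nonzero defect exists \emph{and} a lawful corrective successor exists, the learner admits a nontrivial corrective candidate. It neither forces a persistent transition nor constrains what the transition depends on. The repair in your obstacle paragraph also fails, for two reasons. First, R3 constrains the detailed event continuation $U_e$: it gives $C_M^e(U_ex)\simeq_\Xi C_M^e(U_ey)$ when $C_Mx=C_My$. It says nothing about the learning transition $M\to M^+$, which the paper keeps separate. Second, R2 applied to $M^+$ partitions detailed states for one fixed learner, so it cannot identify two different successor learners $M^+_x\ne M^+_y$; the paper stresses that $K_M$ identifies detailed states, not learners. Since R8 assumes no tie-breaking rule, the selected minimizer may genuinely depend on more than $C_Mx$.

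The paper never needs this factorization. Its operational equivalence means agreement, for each registered initial state, context and event sequence on the applicability domain, on registered responses, accepted-successor semantics, information rights and transition legality. The constructed description simply retains the original $U_e$, the acceptance family $\operatorname{Acc}_\Xi(M,r)$ and the learner's actual selection. It collects the descended laws together with \emph{all remaining persistent operational degrees of freedom} into $\rho_M$. Equivalence therefore holds by construction, and R3 is used only to define quotient-level continuations and residuals where descent holds.

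That choice of $\rho_M$ is also what your Stage 2 requires. Your $\rho_M$ is narrow: the response factor plus the descended updaters. Consider a class-preserving transition that changes a persistent variable which R1 places in $M$ because it affects later accepted updates, but which leaves the response table and $\bar U_e$ unchanged. Such a change is not ``carried by $\rho$''. Moreover, R1 does not supply A3. The paper notes that R1 puts such variables in $M$ without showing that a restricted Ring can express them. So your claim $\rho_M\in\mathcal R_\Xi(K_M)$ needs A3's coverage condition as a separate assumption. By contrast, exhaustiveness of the dichotomy is automatic once $\rho_M$ is the complete relation/realization description. Similarly, R5 only makes revision an admissible option; it does not make the Representation branch nonvacuous.
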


\paragraph{Proof sketch.}
Future identity gives the quotient; dynamic sufficiency permits continuation
laws to descend. Comparing the current law's prediction with realized
continuation gives a residual. The endpoint partition test gives the T/R
classification, and the inherited acceptance semantics and R8 give lawful
minimum-cost selection. Appendix~\ref{app:representation} supplies the
construction. Failure of descent is recorded as an obstruction, not a
missing map supplied by the theorem. The prediction can be wrong even when
an exact induced continuation exists.

The conclusion preserves registered responses and accepted transition
semantics. It does not establish a unique implementation, efficient
discovery, or superiority over other sectors. R8 includes an actual selected
minimizer; candidate admission alone guarantees neither search success nor
an attained optimum.

\paragraph{Independent capitalization property.}
R9 additionally permits repeatedly certified, recurrent relations to be
compiled into reopenable primitives $p_\rho$, while preserving semantics,
dynamic sufficiency, and future viability with lower declared complete
future cost. This permits recursive Open/Compile operation when the
eligibility conditions hold. A primitive is distinct from the entire
quotient $K_M$; the two operations are not algebraic inverses. Neither R9
nor strict capability growth follows from R1--R8.

\subsection{An operational view of intelligence}
\label{sec:intelligence-profile}
GRC suggests interpreting intelligence as the recursive capacity to turn
unresolved future-relevant distinctions into reusable closure capability
without exhausting the ability to learn again. The capability--cost profile
of Section~\ref{sec:growth} gives this interpretation a declared operational
scope: which future demands remain lawfully closable, and at what complete
cost.

Reasoning performs temporary closure search; learning installs a certified
persistent successor; compiled memory retains a reopenable realization of
acquired capability. Repeatedly useful relations can thereby become
primitives supporting later relations. These are distinct outcomes: a
successful trial need not be a persistent commit, and a commit need not
qualify for compilation. The account is a contract-relative interpretation,
not a unique or universal scalar definition of intelligence.
Appendix~\ref{app:intelligence} develops its scope and failure tests.

\section{Conclusion and Scope}
\label{sec:limitations}
GRC connects three questions that jointly determine continued learning:
what a residual requires the learner to change, which acquired
responsibilities must survive that change, and what future learning remains
possible afterward. The two-mode analysis distinguishes adaptation within
a representation from revision of the representation itself. The affine
compatibility theorem gives an exact criterion and a constructive update
for preserving old responsibilities while learning from a new residual.
Reconstructive protection weakly enlarges the safe-response operator
under nested safe spaces and a common edit metric and residual map. Dynamic
sufficiency and the closure-capability frontier extend the account to
future updating and resource-aware growth.

The results have explicit domains. The two-mode classification assumes a
fixed boundary and a complete representation--relation description, with
regularity needed for the tangent decomposition. Compatibility is exact
in the affine sector; finite nonlinear updates require endpoint audit.
Growth is measured under registered future demands and common full-cost
accounting, with sound certification and discovery treated as separate
obligations. The representation theorem characterizes learners satisfying
its operational and selection assumptions. Efficient discovery and
performance comparisons remain separate questions; the counterexamples
in Appendix~\ref{app:counterexamples} delimit these results.

The resulting learning cycle is recursive: a successful closure establishes
a new condition for prediction and interaction, and subsequent evidence
can expose further residuals. When semantic, future-viability, and full-cost
conditions justify compilation, recurrent closure becomes reusable
capacity. GRC thus places preservation, revision, and growth within one
operational account of how learning makes further learning possible.

\label{maintext:end}
\subsection*{AI/LLM use statement}
AI/LLM tools were used only for writing assistance and language polishing.
The mathematical source material, theorem arguments, and proofs were
supplied by the author. AI/LLMs were not used to generate new mathematical
results or supply missing proofs.
\subsection*{Reproducibility statement}
This paper develops a theoretical framework without reporting computational
experiments. Appendix~\ref{app:contracts} collects the semantic and geometric
assumptions; Appendices~\ref{app:modes}, \ref{app:compatibility},
\ref{app:growth}, and \ref{app:representation} give the arguments for
T1--T4. Appendix~\ref{app:counterexamples} records counterexamples and
impossibility regimes. Conditions still needed for broader results are
identified explicitly, including the grade construction in
Appendix~\ref{app:grade} and unrestricted certification and discovery.

\bibliographystyle{iclr2027_conference}
\bibliography{references,references_additions}
\clearpage
\appendix
\section{Assumptions and Formal Semantic Setting}
\label{app:contracts}
The assumptions below are used selectively. Each result requires its own declared subset; they are not simultaneous universal axioms for every learner.

\subsection{Contract, state, and boundary}
The operational contract $\Xi$ registers queries, actions, feedback, update events, evidence rights, interventions, tolerances, resources, and safety constraints. Equivalence, stability, sufficiency, viability, and growth are relative to this contract. A detailed interaction state or history is $x\in\Hist$; the persistent learner state is $M\in\Learner$.

\paragraph{A0: Persistent-state completeness.}
\label{asm:persistent-completeness}
Every internal variable whose persistent value can affect a registered response or later learning transition belongs to $M$. Temporary reasoning traces, speculative candidates, and computational workspace need not belong to $M$ unless they become persistent. An exhaustive classification requires this state description to include the relevant update variables.

\paragraph{A1: Fixed learner--world boundary.}
\label{asm:fixed-boundary}
The internal-transition classification fixes the declared interface $\partial_{t+1}=\partial_t=\partial$. The world can still change and the learner can observe, act, and receive evidence through that interface. A query or measurement that changes available evidence without changing persistent learner state is an external resolution action. If sensing or action machinery becomes a persistent learning variable, it must be included in an enlarged state description. Its transition is classified there by its actual effect on the representation and relations; changing interface machinery alone does not establish a changed state partition.

\subsection{Future identity and representation}
For each fixed learner $M$, registered contexts define the response equivalence of Equation~\eqref{eq:operational-identity}.

\paragraph{A2: Future-operational congruence.}
\label{asm:congruence}
The relation $\sim_{\Xi,M}$ is a well-defined equivalence on $\Hist$. Each operation claimed to descend to the quotient additionally respects that equivalence on its declared domain. Thus
\[
 K_M=\Hist/{\sim_{\Xi,M}},\qquad C_M:\Hist\to K_M,\qquad k=C_Mx.
\]
The quotient space, the quotient map, and a represented state are distinct. A tolerance discrepancy is not automatically a transitive equivalence relation.

Encoders describe the same abstraction when $C_N=\phi\circ C_M$ for an admissible isomorphism $\phi:K_M\to K_N$. Write $C_N\cong C_M$ and $\varpi_\Xi(M)=[C_M]$. This removes mere relabeling of the same partition; it does not identify entire learners by equality of every response. Old responsibilities are separately registered by $\Xi_{\mathrm{old}}$ and, in the affine analysis, encoded by $\Old(M)$.

\paragraph{A3: Representation--relation completeness.}
\label{asm:representation-relation-complete}
For a fixed encoder, let $\operatorname{Rel}_\Xi(C_M)$ contain its admissible relation laws and let $\rho_M$ be the active law. This is the encoder-indexed description of the Ring used in Section~\ref{sec:principles}, with the same admissibility conditions. Every persistent internal change relevant to the claimed complete description must be represented by a relation-law change at fixed abstraction, a change of abstraction, or both. This is an exhaustiveness assumption about the chosen description. It is not a consequence of the elementary kernel--image exact sequence and is not an empirical assertion about an arbitrary parameterization.

\subsection{Regular strata and lawful transitions}
Discrete structural changes may be organized by a stratified learner space
\[
 \Learner_\Xi^{(\partial)}=\bigsqcup_{\sigma\in\Sigma_\Xi}\mathcal S_{\Xi,\sigma}.
\]
A stratum can fix a discrete ownership structure, memory topology, program support, or grammar. A change of stratum need not change the quotient partition.

\paragraph{A4: Local regularity.}
\label{asm:local-regularity}
Where the differential factorization is invoked, $M$ belongs to a finite-dimensional regular smooth stratum and the smooth map $\varpi_\Xi:\mathcal S_{\Xi,\sigma}\to\mathfrak B_\Xi$ has locally constant rank. The horizontal/vertical decomposition is unique only after a complementary space or connection is chosen. This structure is additional to the existence of a future-behavior quotient. Cross-stratum events are compared through their endpoint representations; no global tangent decomposition is implied.

For current residual $r$, $\operatorname{Adm}_\Xi(M,r)$ enforces the hard obligations relevant to a transition claim: old-semantic reconstruction, dynamic sufficiency, future viability, information rights, resource and safety conditions, and current closure where it is required. In Equation~\eqref{eq:master-close}, closure is displayed separately. The resolving set $\operatorname{Law}_\Xi(M,r)$ of T4 includes closure to tolerance. These obligations are not mutually tradable soft penalties.

\paragraph{A5: A lawful baseline when invoked.}
\label{asm:lawful-baseline}
A comparison that relies on fallback must contain a fallback satisfying that comparison's declared obligations, often the incumbent or a lawful transport. An available incumbent is not automatically lawful. In particular, one that leaves the present residual unresolved need not belong to a resolving successor set. T3's rejection branch retains a transported baseline without asserting current closure. Nonemptiness of a resolving set, attainment of a least-cost solution, and availability of a fallback are separate requirements.

\subsection{Costs, closure load, and diagnosis}
The declared complete cost $C_{\rm full}(N\mid M)$ may charge editing, search, audit, deployment, maintenance, and compilation. Its resource scope and accounting origin must be fixed, including how unsuccessful trials are charged.

\paragraph{A6: Comparable costs and information.}
\label{asm:comparable-cost}
Feasible-family comparisons use common information rights, output semantics, initialization conventions, objectives, and resource accounting. Additional search or audit cannot be treated as free overhead. A comparison of future continuation costs need not be a comparison of total lifecycle costs.

\paragraph{A7: Identifiable closure-load scale.}
\label{asm:scalar-identifiability}
A scalar closure-load comparison requires a declared family with
\[
 \lambda_1\le\lambda_2\ \Longrightarrow\ D_{\lambda_1}\mathcal C
 \subseteq D_{\lambda_2}\mathcal C.
\]
The scale and accessible set must be identified sufficiently for the claimed comparison. A Minkowski-gauge interpretation additionally requires the corresponding radial scaling and origin/regularity conditions. Nonconvexity does not prevent writing an infimum, but it prevents importing convex-gauge properties without their hypotheses. If only bounds or set-valued information are available, the output can be an interval or a scoped certificate such as INSIDE, CROSSES, OUTSIDE, or UNIDENTIFIED. The meaning and reliability of these certificates must be specified; their names are not proofs of reachability. Appendix~\ref{app:grade} records the relevant distinctions.

\paragraph{A8: No direct overflow-to-Carry inference.}
\label{asm:no-overflow-carry}
Excessive same-representation load does not identify a representation obstruction. Diagnosis must distinguish saturation of the declared response family, transport mismatch, missing evidence or boundary limitations, incomplete search, and a certified expressive obstruction. This is a logical distinction, not a mandatory fixed routing algorithm. Representation necessity requires a class-wide certificate such as Proposition~\ref{prop:representation-necessity}; it does not follow from a numerical solver failure.

\subsection{Result-specific assumptions}
The following map identifies the additional conditions used in the present results.
\begin{itemize}
\item T1 uses a complete persistent state and a fixed boundary for its learning interpretation, with A4 for the regular differential setting. Its stronger global interpretation invokes A3; the exact sequence alone is a local linear-algebra construction.
\item P2 fixes the representation, evidence, exhaustive admissible relation class, discrepancy, and tolerance. A strict infimal obstruction implies necessity; equality at the tolerance boundary requires an attainment argument for a converse.
\item T2 uses the exact affine response model, a common edit for all old responsibilities and the new residual, and $H\succ0$. With nonlinear Jacobians it describes first-order behavior; finite-endpoint preservation is an additional requirement.
\item Finite future-probe certificates concern only their registered probe family and information rights. Unavailable future-task labels cannot be used to certify current viability.
\item T3 assumes sound audit, a declared capability order and dominance gate, consistent full-cost accounting, and a lawful comparison transport when the background changes. It does not establish search liveness.
\item T4 uses its separately enumerated R1--R8 and the stipulated existence of quotient-level relations on their applicability domain. Compilation requires the additional R9 property. The extensional construction alone supplies neither the regularity of A4 nor an executable full-state encoding.
\end{itemize}

\section{Explain--Predict Semantics}
\label{app:semantics}

\subsection{Opening and Explanation}
For a fixed learner $M$, the Kernel $K_M=\Hist/{\sim_{\Xi,M}}$ is a quotient
state space, $C_M$ is its encoding map, and $k=C_Mx$ is a current state.
An admissible opening is a correspondence
\begin{equation}
 \Open_{\Xi,M}(k)\subseteq\mathcal R_\Xi(K_M),
 \qquad \rho\in\Open_{\Xi,M}(k).
 \label{eq:semantics-opening}
\end{equation}
Here the Ring comprises relational realizations admitted for the fixed
encoding and its operational contract. The source's notation
$\operatorname{Rel}_\Xi(C_M)$ emphasizes that same dependence; identifying
the two descriptions requires the same admissibility semantics. The quotient
alone does not select a unique relation law, and the correspondence does not
assert that a suitable opening always exists.

Explanation means operationally reopening the represented state into
relations through which it acts, predicts, controls, or returns. It need
not be an explanation in natural language. A selected opening supplies
the active realization used for the following prediction.

\subsection{Prediction and Its Comparison Space}
For event $e$, the selected realization predicts a continuation state,
\begin{equation}
 \widehat k_{t+1}=\operatorname{Pred}_{M_t,e}(\rho_t).
 \label{eq:semantics-prediction}
\end{equation}
This is the constraint that the active relation places on an unrealized
future. It is broader than scalar-target or next-token prediction. Where
a selected opening and prediction rule compose into a state-level map,
we write $\widehat k_{t+1}=\widehat U_{M_t,e}(k_t)$. The hat denotes the
learner's prediction, not an exactly induced update.

Prediction and evaluation must share a declared comparison space, denoted
here by $Z_{t,e}$. Its declaration includes any admissible comparison or
transport needed after the event. The notation $\widehat k_{t+1}$ refers to
an element of this space; it does not predict a new quotient $K_{M_{t+1}}$
or presuppose that a successor learner has already been accepted.

\subsection{Realized Continuation Before Learning}
Let
\begin{equation}
 U_e:\mathcal X_t\longrightarrow\mathcal X_{t+1}^{-},
 \qquad x_{t+1}^{-}=U_e(x_t),
 \label{eq:semantics-reality}
\end{equation}
be the detailed continuation generated by the event before a persistent
learning update. If the current representation has a lawful evaluation on
that domain, declare
\begin{equation}
 C_{M_t}^{e}:\mathcal X_{t+1}^{-}\longrightarrow Z_{t,e},
 \qquad k_{t+1}^{\rm real}=C_{M_t}^{e}(x_{t+1}^{-}).
 \label{eq:semantics-evaluation}
\end{equation}
The superscript $e$ records evaluation under the event's declared domain
or transport; it does not denote a newly learned encoder. In the fixed-domain
case, $Z_{t,e}=K_{M_t}$ and $C_{M_t}^{e}=C_{M_t}$ recover the expression in
Section~\ref{sec:principles}.

Such an evaluation is not automatic. If a missing future distinction prevents
a lawful quotient-level continuation, record a representation obstruction.
An undefined evaluation is neither a numerical defect nor evidence of zero
residual. The contract must specify how that obstruction is witnessed.

\subsection{The Pre-learning Residual}
When both values are well typed in the comparison space, define
\begin{equation}
 \Residual_{t,e}^{\Xi}
 =\Def_\Xi\bigl(k_{t+1}^{\rm real},\widehat k_{t+1}\bigr).
 \label{eq:semantics-observed-residual}
\end{equation}
The defect object need not be a scalar loss. Its operational meaning and
closure tolerance are part of the contract. This sequence separates
explanation, prediction, realized continuation, and the observed residual.

A residual diagnoses insufficiency without prescribing its own write
address. In particular, $\Residual\ne0$ does not by itself require a
parameter update. Resolution may involve relation revision, representation
revision, acquiring admissible evidence, lawful transport, or further
search. Boundary actions and temporary search must still be distinguished
from persistent learner-internal changes.

\subsection{Update Commutation Is a Separate Requirement}
For declared source and target encoders
$C_t:\mathcal X_t\to K_t$ and
$C_{t+1}:\mathcal X_{t+1}\to K_{t+1}$, update sufficiency asks whether an
induced updater $\bar U_e:K_t\to K_{t+1}$ exists with
\begin{equation}
 C_{t+1}\circ U_e\simeq_\Xi\bar U_e\circ C_t.
 \label{eq:semantics-update-commutation}
\end{equation}
Here $K_t$ and $K_{t+1}$ denote the quotients associated with those declared
encoders. Their detailed update domain must be specified; it is not silently
identified with the pre-learning domain in Equation~\eqref{eq:semantics-reality}.
For a typed update diagram, the source writes its operator-level diagnostic as
\begin{equation}
 \Residual_{e}^{\Xi,\mathrm{upd}}
 =\Def_\Xi\bigl(C_{t+1}\circ U_e,\bar U_e\circ C_t\bigr).
 \label{eq:semantics-update-defect}
\end{equation}
This diagnostic additionally requires a declared comparison of maps.
If $\bar U_e$ is already the exactly induced updater, the diagram commutes
and its exact defect is zero. If no such updater exists, that failure is
an obstruction to update sufficiency, not a defined zero value of the
expression. Approximate commutation requires its own error contract.

Thus $\widehat U_{M,e}$ makes a prediction that may fail, whereas
$\bar U_e$ denotes a compatible induced update when one exists. The
pre-learning residual diagnoses an observed failure using the current
representation. Commutation tests whether declared representations support
correct updating; it does not define today's learning signal through an
unaccepted future representation.

\subsection{Prediction and Compilation Have Different Roles}
Prediction constrains an unrealized continuation. Compilation instead makes
a repeatedly validated, reopenable relation available as a persistent
primitive after successful use. Writing $p_\rho$ for that primitive,
\begin{equation}
 \rho^{\rm certified}\xrightarrow{\Compile}p_\rho.
 \label{eq:semantics-compilation}
\end{equation}
The symbol denotes the installed primitive, not a predicted world state
or an entire quotient space. Its installation and subsequent reopening
must satisfy the relevant representation, sufficiency, and cost contract.
No identity between Predict and Compile is assumed, and an accepted closure
need not already qualify for compilation.

\subsection{Recursion and Continued Sufficient Operation}
The cycle begins with $k_t=C_{M_t}(x_t)$ and active relations $\rho_t$.
Prediction and realized continuation supply the comparison in
Equation~\eqref{eq:semantics-observed-residual}. Diagnosis and lawful
closure may then produce a persistent successor $M_{t+1}$ with its own
encoding and active relations. For a declared successor detailed state,
\begin{equation}
 k_{t+1}=C_{M_{t+1}}(x_{t+1}).
 \label{eq:semantics-successor-state}
\end{equation}
The encoding acts on $x_{t+1}$, not on the learner $M_{t+1}$.
Any writeback, state transport, or domain change from $x_{t+1}^{-}$ to
$x_{t+1}$ must be declared separately.

The successor supplies the conditions for subsequent opening and prediction.
A Transformation may preserve the quotient up to an admissible isomorphism
while changing the active relation law. Closure therefore need not change
the state partition, and repeated operation need not expose another defect:
the next residual can remain zero. Compilation is a separate qualified
transition, not a mandatory stage after every closure.

\paragraph{Scope of the semantic construction.}
P1 specifies typed operational semantics. It does not establish general
opening existence, background transport, obstruction detection, or
quantitative multi-step approximate sufficiency. Stronger claims require
additional definitions and proofs.

\section{Two-Mode Factorization and Representation Necessity}
\label{app:modes}

\subsection{Local representation projection}
Fix the contract $\Xi$ and learner--world boundary $\partial$. The persistent
learner-state space may be stratified as
\[
\Learner_\Xi^{(\partial)}
=\bigsqcup_{\sigma\in\Sigma_\Xi}\mathcal S_{\Xi,\sigma},
\qquad
\varpi_\Xi(M)=[C_M].
\]
The representation projection records the abstraction class, rather than the
current value $k=C_M(x)$ or a particular engineering realization. For the
local statement, $M$ lies in a finite-dimensional regular stratum on which
$\varpi_\Xi$ is smooth and has locally constant rank. Tangent spaces below
refer to that stratum. In particular,
$V_M=\ker D\varpi_\Xi(M)$ consists of directions that preserve the
representation class to first order. Completeness of the persistent-state
description and the representation--relation description are the declared
assumptions, including Assumption A3 in Appendix~\ref{app:contracts}.

\subsection{Proof of Theorem~\ref{thm:two-modes}}
\begin{proof}
Write $L=D\varpi_\Xi(M)$ and $V_M=\ker L$. The inclusion of $V_M$ into the
tangent space is injective, and the quotient by $V_M$ is naturally isomorphic
to $\im L$. This gives the exact sequence~\eqref{eq:two-mode-sequence}.

Choose a right splitting
$s_M:\im L\to T_M\Learner_{\Xi}^{(\partial)}$ with $L\circ s_M=I$, and let
$H_M=\im s_M$. For an arbitrary tangent direction $d$, set
\[
d_R=s_M(Ld),\qquad d_T=d-d_R.
\]
Then $Ld_T=Ld-Ls_M(Ld)=0$, so $d_T\in V_M$, while $d_R\in H_M$ by
construction. Since $V_M\cap H_M=\{0\}$, the decomposition is unique for the
chosen splitting. The horizontal representative depends on that choice;
whether $Ld=0$ does not.
\end{proof}

Local constant rank supports a regular family of vertical spaces; the
pointwise exact sequence itself is the standard kernel--image construction.
Neither the argument nor the finite-jump classification selects a canonical
connection or establishes a global smooth decomposition across strata.

\paragraph{Interpretation and joint learning.}
A Transformation component satisfies $D\varpi_\Xi(M)[d_T]=0$. It may change
weights, policies, transitions, routing, controllers, or other relation
variables, provided that it preserves the abstraction to first order.
A Representation component changes the abstraction when
$D\varpi_\Xi(M)[d_R]\ne0$. Examples can include state splitting, memory-state
creation, or a change of the representational grammar; their classification
depends on their effect on $C_M$, not on the implementation name.
An event can have both $d_T\ne0$ and $d_R\ne0$. This is a joint event, not a
third primitive component or a partition of algorithms into mutually
exclusive families. The chosen horizontal vector is not canonical, whereas
the test $D\varpi_\Xi(M)[d]=0$ is independent of the splitting.

\subsection{A fixed-representation Hilbert-space sector}
\label{app:hilbert-obstruction}
Suppose future responses lie in a Hilbert space $\mathcal H$ and the responses
expressible under $C$ form a closed linear subspace $\mathcal A_C$. Let
$\Pi_C$ be its orthogonal projector and $Q_C=I-\Pi_C$. For a target $u$ and a
current response $\widehat u\in\mathcal A_C$, the decomposition is
\[
u-\widehat u
=\underbrace{\Pi_Cu-\widehat u}_{r_{\rm rel}}
+\underbrace{Q_Cu}_{r_{\rm rep}}.
\]
For any $\widetilde u\in\mathcal A_C$, orthogonality gives
\[
\|u-\widetilde u\|^2
=\|Q_Cu\|^2+\|\Pi_Cu-\widetilde u\|^2,
\qquad
\inf_{\widetilde u\in\mathcal A_C}\|u-\widetilde u\|=\|Q_Cu\|.
\]
Thus $r_{\rm rep}$ gives an irreducible error floor in this sector. The
closed-linear-subspace assumption is substantive: the calculation is not an
orthogonal decomposition of arbitrary nonlinear program classes. Nor does
representability certify that an admissible finite-budget search finds the
projected response.

\subsection{Proof of Proposition~\ref{prop:representation-necessity}}
\label{app:representation-necessity}
\begin{proof}
By the definition of the infimum, every $f\in\mathfrak R(C)$ satisfies
\[
D_\Xi(F,f\circ C)\ge\Omega_C(F).
\]
If $\Omega_C(F)>\varepsilon_\Xi$, each admissible relation law therefore
violates the declared tolerance. Under the fixed-boundary, fixed-evidence,
and exhaustive-relation-class assumptions, a successful internal successor
cannot remain in that representation class.
\end{proof}

This is a necessary condition on a successful repair, not an existence theorem
for a revised representation. A converse at
$\Omega_C(F)=\varepsilon_\Xi$ would require attainment; it is not asserted here.
A numerical search failure also does not certify the obstruction.

\subsection{Finite transitions and conditional global classification}
\label{app:global-mode-classification}
A finite transition $M^-\to M^+$ need not admit a tangent description.
Compare its representation classes on the same declared detailed-state
domain. If $C_{M^+}\cong C_{M^-}$, the abstraction is unchanged: the event
can be a fixed-representation relation change or a realization change.
Otherwise it contains a Representation revision. This comparison also
applies across engineering strata, so a discrete recompilation or
reparameterization is not automatically Representation learning.

Under Assumption A3 in Appendix~\ref{app:contracts}, these cases exhaust the
relevant persistent internal changes at the fixed boundary.
\begin{proof}[Argument under Assumption A3]
For each persistent transition, exactly one of
\[
[C_{M^+}]=[C_M]
\qquad\text{or}\qquad
[C_{M^+}]\ne[C_M]
\]
holds. In the first case, A3 places the represented learning content in
relation or realization changes within the current abstraction. In the
second, the representation class changes and relation changes may accompany
it. This gives the asserted conditional classification.
\end{proof}

The equality test is set-theoretic; the assertion that this description
captures every relevant persistent change is the completeness assumption
A3. The argument does not establish that assumption for arbitrary learners.
It also does not supply a global connection, a canonical finite
Transformation/Representation decomposition, or smooth coordinates across
singular strata. The exact-sequence theorem adds a local linear
factorization where its separate regularity assumptions hold. A pure
realization change or a no-op need not acquire any new capability, and this
classification alone does not establish preservation of old responsibilities.

\subsection{Boundary actions and persistent incorporation}
\label{app:mode-boundary-actions}
A one-time query, retrieval, sensor use, or intervention can resolve an
external information need without changing persistent learner state. An
already permitted call need not change the boundary itself. If new evidence
is incorporated persistently, its internal effect is classified by changes
to the relation law, the representation class, or both under A3.

If stateful interface machinery becomes part of the learner, include it in
the complete persistent state as required by Assumption A0 in
Appendix~\ref{app:contracts}, and restate the boundary and comparison domain.
Its modification is then classified by whether $C_M$ actually changes.
It is not automatically Representation learning: the modification may
preserve the abstraction and change a relation or realization instead.
Actual changes to information rights or the available action set require
that enlarged setting to be declared before applying a fixed-boundary
classification. External resolution is therefore distinct from the
conditional classification of persistent internal changes.

\section{Affine Stability--Plasticity Compatibility}
\label{app:compatibility}

All statements in this appendix use the finite-dimensional affine model of
Section~\ref{sec:dynamics-compatibility}: an edit $d\in\R^n$ satisfies
$Ad=0$, the updated residual is $r-Bd\in\R^m$, and the edit energy is
$d^\top Hd$. Here $A\in\R^{p\times n}$, $B\in\R^{m\times n}$, and
$H\in\R^{n\times n}$ is positive definite. These relations are exact in the
stated affine sector. When $A$ and $B$ are nonlinear learner Jacobians,
they describe first-order behavior only.

\subsection{The semantic-safe inverse}
Let $P_H$ be the matrix in Equation~\eqref{eq:safe-inverse}. Introduce
\[
\widetilde A=AH^{-1/2},\qquad
Q_A=I-\widetilde A^\top(\widetilde A\widetilde A^\top)^\dagger\widetilde A.
\]
Then $Q_A$ is the Euclidean orthogonal projector onto
$\ker\widetilde A$, and $P_H=H^{-1/2}Q_AH^{-1/2}$. In general, $P_H$
itself is not an idempotent Euclidean projector.

\begin{lemma}[Properties of the semantic-safe inverse]
\label{lem:safe-inverse}
The matrix $P_H$ satisfies
\begin{gather*}
P_H=P_H^\top\succeq0,\qquad AP_H=0,\qquad P_HA^\top=0,\\
\im P_H=\ker A,\qquad \ker P_H=\im A^\top,\\
P_HHd=d\quad\text{for every }d\in\ker A.
\end{gather*}
\end{lemma}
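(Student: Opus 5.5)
The plan is to reduce every claim to standard facts about the Euclidean orthogonal projector $Q_A$ through the whitening identity $P_H=H^{-1/2}Q_AH^{-1/2}$ recorded just before the lemma. First I verify that identity. Substitute $\widetilde A=AH^{-1/2}$ into $Q_A$ and conjugate by $H^{-1/2}$. Since $\widetilde A\widetilde A^\top=AH^{-1}A^\top$, the conjugated expression is exactly Equation~\eqref{eq:safe-inverse}. Next, $Q_A$ is the orthogonal projector onto $\ker\widetilde A$, so $Q_A=Q_A^\top=Q_A^2\succeq0$. This is the usual Moore--Penrose fact that $\widetilde A^\top(\widetilde A\widetilde A^\top)^\dagger\widetilde A=\widetilde A^\dagger\widetilde A$ is the orthogonal projector onto $\im\widetilde A^\top$. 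Symmetry and positive semidefiniteness of $P_H$ then follow immediately, because $H^{-1/2}$ is symmetric and $P_H$ is a congruence of $Q_A$.

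For the annihilation identities I write $AP_H=AH^{-1/2}Q_AH^{-1/2}=\widetilde AQ_AH^{-1/2}$. This vanishes because $\im Q_A=\ker\widetilde A$. Transposing and using symmetry gives $P_HA^\top=0$. For the image and kernel I use that $H^{-1/2}$ is invertible:
\[
\im P_H=H^{-1/2}\im Q_A=H^{-1/2}\ker(AH^{-1/2})=\ker A,
\]
where the last equality holds since $AH^{-1/2}v=0$ if and only if $H^{-1/2}v\in\ker A$. Similarly $\ker P_H=H^{1/2}\ker Q_A=H^{1/2}\im\widetilde A^\top=H^{1/2}H^{-1/2}\im A^\top=\im A^\top$. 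This uses $\ker Q_A=(\ker\widetilde A)^\perp=\im\widetilde A^\top$.

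Finally, take $d\in\ker A$ and set $v=H^{1/2}d$. Then $\widetilde Av=Ad=0$, so $Q_Av=v$. Hence
\[
P_HHd=H^{-1/2}Q_AH^{1/2}d=H^{-1/2}v=d.
\]

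The only step needing care is the pseudoinverse identity $\widetilde A^\top(\widetilde A\widetilde A^\top)^\dagger\widetilde A=\widetilde A^\dagger\widetilde A$ when $A$ is rank deficient, since the inverse cannot be used there. I would justify it from the SVD of $\widetilde A$. Alternatively, I would check directly that $X=\widetilde A^\top(\widetilde A\widetilde A^\top)^\dagger\widetilde A$ is symmetric and idempotent with range $\im\widetilde A^\top$. Idempotence uses $(\widetilde A\widetilde A^\top)^\dagger\widetilde A\widetilde A^\top(\widetilde A\widetilde A^\top)^\dagger=(\widetilde A\widetilde A^\top)^\dagger$. The range claim uses $\widetilde A X=\widetilde A$, which holds because $\widetilde A\widetilde A^\top(\widetilde A\widetilde A^\top)^\dagger$ projects onto $\im\widetilde A$.
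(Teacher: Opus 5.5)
Your proof is correct and follows essentially the same route as the paper: whiten to the orthogonal projector $Q_A$, get symmetry and positive semidefiniteness by congruence, use $\widetilde A Q_A=0$ for the annihilation identities, read off the kernel from $\ker Q_A=\im\widetilde A^\top$, and use $Q_AH^{1/2}d=H^{1/2}d$ for $d\in\ker A$. The differences are minor: you obtain $\im P_H=\ker A$ in one step from the bijectivity of $H^{-1/2}$, whereas the paper proves one inclusion and then the reverse via $P_HHz=z$; you also justify the Moore--Penrose projector identity for rank-deficient $A$, which the paper takes as standard.
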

\begin{proof}
Because $Q_A$ is an orthogonal projector, congruence by $H^{-1/2}$ gives
symmetry and positive semidefiniteness of $P_H$. The identity
$\widetilde A Q_A=0$ gives $AP_H=0$; symmetry gives $P_HA^\top=0$.
Moreover,
\[
\im P_H=H^{-1/2}\ker(AH^{-1/2})\subseteq\ker A.
\]
For the reverse inclusion, let $z\in\ker A$ and put $u=H^{1/2}z$.
Then $AH^{-1/2}u=Az=0$, so $Q_Au=u$. Consequently,
\[
P_HHz=H^{-1/2}Q_AH^{1/2}z=z,
\]
which proves the range identity and the stated action on safe directions.
Finally, $P_Hx=0$ is equivalent to $Q_AH^{-1/2}x=0$, hence to
\[
H^{-1/2}x\in\im\widetilde A^\top
=\im(H^{-1/2}A^\top).
\]
Since $H^{-1/2}$ is invertible, this is equivalent to $x\in\im A^\top$.
\end{proof}

The residual-response operator $\Kplast=BP_HB^\top$ is therefore positive
semidefinite, and $r^\top\Kplast r=(B^\top r)^\top P_H(B^\top r)$ measures
the residual's coupling to safe edit directions.

\subsection{Proof of Theorem~\ref{thm:compatibility}}
\begin{proof}
Set $x=B^\top r$. If $P_Hx\ne0$, choose $d=P_Hx$. Then $Ad=0$ and
\[
\langle r,Bd\rangle=x^\top P_Hx>0,
\]
where strict positivity follows from positive semidefiniteness and
$P_Hx\ne0$. This proves that~\eqref{eq:compatible-covector} implies
\eqref{eq:compatible-direction}.

Conversely, if $P_Hx=0$, symmetry and $\im P_H=\ker A$ imply that $x$ is
orthogonal to $\ker A$. Hence $\langle r,Bd\rangle=x^\top d=0$ for every
$d\in\ker A$, excluding~\eqref{eq:compatible-direction}. Finally,
\[
r^\top\Kplast r=(B^\top r)^\top P_H(B^\top r),
\]
which is positive exactly when $P_HB^\top r\ne0$. Thus all three conditions
are equivalent.
Since $\Kplast\succeq0$, the same conditions are also equivalent to
$\Kplast r\ne0$.
\end{proof}

\subsection{The proximal affine step}
\label{app:proximal-step}
For $\eta>0$, the strictly convex quadratic objective
in~\eqref{eq:proximal-safe-step} has a unique minimizer. Its KKT conditions are
\[
\eta^{-1}Hd-B^\top(r-Bd)+A^\top\lambda=0,\qquad Ad=0.
\]
Multiplying the stationarity equation by $P_H$, and using
$P_HA^\top=0$ and $P_HHd=d$ on $\ker A$, gives
\[
d=\eta P_HB^\top(r-Bd).
\]
Writing $r^+=r-Bd$ gives $d=\eta P_HB^\top r^+$ and consequently
\[
(I+\eta\Kplast)r^+=r.
\]
Because $\Kplast\succeq0$, the matrix $I+\eta\Kplast$ is positive definite.
Substitution therefore yields~\eqref{eq:proximal-safe-solution}
and~\eqref{eq:proximal-safe-residual}. The identity $AP_H=0$ gives
$Ad^\star=0$.

\subsection{Weak, strict, and uniform residual contraction}
Write $\Kplast=U\Lambda U^\top$ with orthogonal $U$ and
$\Lambda=\operatorname{diag}(\lambda_i)$, $\lambda_i\ge0$, and let $r=Uc$.
The residual formula gives
\begin{equation}
\|r^+\|_2^2=\sum_i\frac{c_i^2}{(1+\eta\lambda_i)^2}
\le\sum_i c_i^2=\|r\|_2^2.
\label{eq:affine-spectral-contraction}
\end{equation}
The inequality is strict exactly when $r$ has a nonzero component in a
positive-eigenvalue eigenspace, equivalently $\Kplast r\ne0$.
If $r\in\ker\Kplast$, then $r^+=r$. Thus positive semidefiniteness gives
weak contraction, not strict improvement for every residual.

If the stronger bound $\Kplast\succeq\kappa I$ holds on the entire
registered residual space $\R^m$, with $\kappa>0$, the same calculation gives
\[
\|r^+\|_2\le(1+\eta\kappa)^{-1}\|r\|_2.
\]
This uniform statement uses a lower bound on every registered direction.
A quadratic-form lower bound only on an arbitrary proper subspace is not
used here as a certificate for the full resolvent's contraction on that
subspace.

\subsection{Minimum energy for exact affine closure}
\label{app:minimum-edit-energy}
\begin{proposition}[Minimum semantic-safe edit energy]
\label{prop:minimum-edit-energy}
The constraints $Ad=0$ and $Bd=r$ have a solution if and only if
$r\in\im\Kplast$. When they are feasible,
\begin{equation}
\min_{Ad=0,\,Bd=r}d^\top Hd=r^\top\Kplast^\dagger r.
\label{eq:minimum-edit-energy}
\end{equation}
\end{proposition}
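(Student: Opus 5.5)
Whiten the problem so that it becomes a minimum-norm least-squares problem, then read both claims off the structure $\Kplast=BP_HB^\top=\widetilde B Q_A\widetilde B^\top$ with $\widetilde B=BH^{-1/2}$. Set $u=H^{1/2}d$. Then $d^\top Hd=\|u\|^2$ and $Ad=0$ becomes $\widetilde Au=0$, that is, $u\in\ker\widetilde A=\im Q_A$. Writing $u=Q_Au$, the constraint $Bd=r$ becomes $\widetilde BQ_Au=r$. Put $G=\widetilde BQ_A$, so that $GG^\top=\widetilde BQ_A\widetilde B^\top=\Kplast$, using $P_H=H^{-1/2}Q_AH^{-1/2}$ and Lemma~\ref{lem:safe-inverse}. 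The problem is now
\[
\min\{\|u\|^2:\ u\in\im Q_A,\ Gu=r\}.
\]

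\textbf{Feasibility.} Feasibility is equivalent to $r\in G(\im Q_A)=\im G$, because $GQ_A=G$. Since $\im G=\im GG^\top=\im\Kplast$ (a standard identity, from $\ker G^\top=\ker GG^\top$), this proves the first claim. Alternatively, one can argue directly: $\{Bd:d\in\ker A\}=B\,\im P_H=\im(BP_H)$, and $\im(BP_HB^\top)=\im(BP_H^{1/2})=\im(BP_H)$ because $\im P_H^{1/2}=\im P_H$.

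\textbf{Energy formula.} Among all solutions of $Gu=r$, the minimum-norm one is $u^\star=G^\dagger r=G^\top(GG^\top)^\dagger r$. It lies in $\im G^\top=\im Q_A\widetilde B^\top\subseteq\im Q_A$, so the subspace constraint is automatically satisfied and $u^\star$ also minimizes over the restricted set. Its energy is
\[
\|u^\star\|^2=r^\top(GG^\top)^\dagger GG^\top(GG^\top)^\dagger r=r^\top\Kplast^\dagger r.
\]
Mapped back to the original variables, $d^\star=H^{-1/2}u^\star=P_HB^\top\Kplast^\dagger r$.

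\textbf{Self-contained check of optimality.} For readers who prefer not to quote pseudoinverse facts, I would verify optimality directly. First, $Bd^\star=\Kplast\Kplast^\dagger r=r$ because $r\in\im\Kplast$, and $Ad^\star=0$ because $AP_H=0$. Next, any feasible $d$ can be written $d=d^\star+e$ with $e\in\ker A\cap\ker B$. The cross term is
\[
d^{\star\top}He=r^\top\Kplast^\dagger BP_HHe=r^\top\Kplast^\dagger Be=0,
\]
using $P_HHe=e$ on $\ker A$. Hence $d^\top Hd=d^{\star\top}Hd^\star+e^\top He\ge d^{\star\top}Hd^\star$.

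\textbf{Main obstacle.} The only delicate point is the range identity $\im(BP_H)=\im\Kplast$, together with the identity $\Kplast^\dagger\Kplast\Kplast^\dagger=\Kplast^\dagger$ used in the energy computation. Both are handled by the positive semidefiniteness of $P_H$ and the standard Moore--Penrose facts.
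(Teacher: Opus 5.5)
Your proof is correct, but it reaches the result by a different reduction than the paper.

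The paper parametrizes the safe space by a basis, $d=Zz$ with the columns of $Z$ spanning $\ker A$. This gives the weighted minimum-energy problem $\min z^\top(Z^\top HZ)z$ subject to $BZz=r$. The paper quotes the classical value $r^\top(GH_T^{-1}G^\top)^\dagger r$ on $\im G$. It then identifies $GH_T^{-1}G^\top$ with $\Kplast$ through the constrained-inverse identity $Z(Z^\top HZ)^{-1}Z^\top=P_H$, which it invokes as a standard fact without proof. It also needs a zero-dimensional convention when $\ker A=\{0\}$.

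You instead whiten with $H^{1/2}$ and encode the constraint through the Euclidean projector $Q_A$. Then $GG^\top=\Kplast$ follows directly from the factorization $P_H=H^{-1/2}Q_AH^{-1/2}$ at the start of Appendix~\ref{app:compatibility}. Your observation that $G^\dagger r\in\im G^\top\subseteq\im Q_A$ removes the subspace constraint at no cost.

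What your route buys:
\begin{itemize}
\item Every step rests on facts already established there (including $P_HHe=e$ from Lemma~\ref{lem:safe-inverse}) or on standard Moore--Penrose identities.
\item There is no basis choice, and the case $\ker A=\{0\}$ needs no separate treatment.
\item It produces the explicit minimizer $d^\star=P_HB^\top\Kplast^\dagger r$ together with an elementary Pythagorean optimality check.
\item That minimizer is exactly the $\eta\to\infty$ limit of the proximal step~\eqref{eq:proximal-safe-solution} when $r\in\im\Kplast$, which links the two results.
\end{itemize}

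The paper's route, in turn, stays intrinsic to the safe subspace and avoids matrix square roots. It also makes plain that the problem is the textbook weighted minimum-energy problem posed on $\ker A$.
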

\begin{proof}
Let the columns of $Z$ form a basis of $\ker A$. Every safe edit is
$d=Zz$. Define $H_T=Z^\top HZ\succ0$ and $G=BZ$. The problem becomes
\[
\min_z z^\top H_Tz\quad\text{subject to }Gz=r.
\]
Its feasible residual space is $\im G$, and the standard minimum-energy
formula gives the value $r^\top(GH_T^{-1}G^\top)^\dagger r$ there.
The constrained inverse identity
\[
Z(Z^\top HZ)^{-1}Z^\top=P_H
\]
implies $GH_T^{-1}G^\top=BP_HB^\top=\Kplast$. The feasible residual space
is consequently $\im\Kplast$, and the minimum energy is
Equation~\eqref{eq:minimum-edit-energy}.
\end{proof}
The basis reduction uses the usual zero-dimensional convention if
$\ker A=\{0\}$; only the zero edit is then available.

\subsection{A geometric load for the stated edit budget}
For a positive energy radius $R_0>0$ and budget $d^\top Hd\le R_0^2$, define
\begin{equation}
q_{\rm geo}(r)=
\begin{cases}
\sqrt{r^\top\Kplast^\dagger r}/R_0,&r\in\im\Kplast,\\
+\infty,&r\notin\im\Kplast.
\end{cases}
\label{eq:geometric-closure-load}
\end{equation}
Proposition~\ref{prop:minimum-edit-energy} makes $q_{\rm geo}(r)\le1$
equivalent to an exact affine semantic-safe closure within this energy
budget. The range condition cannot be dropped: the pseudoinverse quadratic
form alone does not certify feasibility.

This is an energy-normalized local load. Identifying it with the general
operational closure load requires a matching edit family, resource scale,
and full-cost convention. Calling an infeasible affine response a
Transformation obstruction additionally requires the coordinates to be
restricted to fixed-representation edits. Even then it does not establish
global Representation necessity or exclude a nonlocal same-representation
path.

\subsection{A fixed linear continuous-time flow}
\label{app:fixed-linear-flow}
Keep $A$, $B$, and $H$ fixed, start with $Ad(0)=0$, and normalize time so that the safe edit
velocity is $\dot d=P_HB^\top r$. With $r=r_0-Bd$, the induced residual
equation and its energy derivative are
\begin{equation}
\dot r=-\Kplast r,\qquad
\frac{d}{dt}\frac12\|r\|_2^2=-r^\top\Kplast r\le0.
\label{eq:fixed-linear-residual-flow}
\end{equation}
If $\Kplast\succeq\kappa I$ on the entire registered residual space,
the displayed inequality gives the stated bound
\[
\|r(t)\|_2\le e^{-\kappa t}\|r(0)\|_2.
\]
These statements concern fixed affine operators. They do not provide a
state-dependent nonlinear flow theorem, finite nonlinear endpoint
preservation, or a derivation of the full discrete GRC procedure as a
continuous-time limit. A rate for an arbitrary proper residual subspace
is not asserted without further subspace conditions.

\subsection{Reconstructive semantic stability dominance}
\label{app:reconstructive-dominance}
Let $T_{\rm exact}\subseteq\R^n$ be the linear safe space obtained by
preserving a historical realization exactly, and let $T_{\rm rec}$ be the
linear safe space obtained by preserving the registered future-operational
semantics while permitting a different realization. The comparison assumes
\begin{equation}
 T_{\rm exact}\subseteq T_{\rm rec}.
 \label{eq:reconstructive-nested-spaces}
\end{equation}
This is a condition on the declared preservation rules: exact preservation
must impose the semantic requirements together with additional realization
constraints. It need not hold for arbitrary pairs of constraint systems.
Throughout the comparison, the positive-definite metric $H$ and the
residual-response map $B$ are fixed.

For a linear subspace $T$, define its safe inverse by
\begin{equation}
 P_T^H=H^{-1/2}\Pi_{H^{1/2}T}H^{-1/2},
 \label{eq:subspace-safe-inverse}
\end{equation}
where $\Pi_{H^{1/2}T}$ is the Euclidean orthogonal projector onto the
whitened subspace $H^{1/2}T$. For $T=\ker A$, this is the operator $P_H$
of Lemma~\ref{lem:safe-inverse}. Write
$P_j^H=P_{T_j}^H$ and
$K_{\rm plast}^j=BP_j^HB^\top$ for
$j\in\{\mathrm{exact},\mathrm{rec}\}$.

\begin{theorem}[Reconstructive semantic stability dominance]
\label{thm:reconstructive-semantic-dominance}
Under~\eqref{eq:reconstructive-nested-spaces},
\begin{equation}
 \begin{aligned}
 P_{\rm rec}^H-P_{\rm exact}^H&\succeq0,\\
 K_{\rm plast}^{\rm rec}-K_{\rm plast}^{\rm exact}
 &=B(P_{\rm rec}^H-P_{\rm exact}^H)B^\top\succeq0.
 \end{aligned}
 \label{eq:reconstructive-psd-proof}
\end{equation}
In particular, this establishes the response-operator ordering
in~\eqref{eq:reconstructive-dominance}.
\end{theorem}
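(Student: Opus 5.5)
The plan is to whiten by $H^{1/2}$ and reduce the claim to the standard ordering of nested orthogonal projectors. Set $S_j=H^{1/2}T_j$ for $j\in\{\mathrm{exact},\mathrm{rec}\}$. Since $H^{1/2}$ is an invertible linear map, the assumption~\eqref{eq:reconstructive-nested-spaces} transfers directly to $S_{\rm exact}\subseteq S_{\rm rec}$. Write $\Pi_j=\Pi_{S_j}$ for the Euclidean orthogonal projectors. By definition~\eqref{eq:subspace-safe-inverse},
\[
P_{\rm rec}^H-P_{\rm exact}^H=H^{-1/2}(\Pi_{\rm rec}-\Pi_{\rm exact})H^{-1/2}.
\]
Congruence by the symmetric invertible matrix $H^{-1/2}$ preserves positive semidefiniteness, so the first claim reduces to $\Pi_{\rm rec}-\Pi_{\rm exact}\succeq0$.

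For the nested projectors, use $S_{\rm exact}\subseteq S_{\rm rec}$ to get $\Pi_{\rm rec}\Pi_{\rm exact}=\Pi_{\rm exact}$. Taking transposes then gives $\Pi_{\rm exact}\Pi_{\rm rec}=\Pi_{\rm exact}$. With these two identities one checks that $D=\Pi_{\rm rec}-\Pi_{\rm exact}$ is symmetric and idempotent:
\[
D^2=\Pi_{\rm rec}-\Pi_{\rm exact}-\Pi_{\rm exact}+\Pi_{\rm exact}=D.
\]
So $D$ is the orthogonal projector onto $S_{\rm rec}\cap S_{\rm exact}^{\perp}$. Hence $x^\top Dx=\|Dx\|^2\ge0$.

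As an alternative check, $x^\top\Pi_jx=\|\Pi_jx\|^2$. The projection onto the smaller subspace has norm at most that of the projection onto the larger one, because $\Pi_{\rm exact}x=\Pi_{\rm exact}\Pi_{\rm rec}x$ and projectors are nonexpansive.

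The second claim follows from linearity and congruence. By definition $K^j_{\rm plast}=BP_j^HB^\top$, so the difference of the $K$ operators equals $B(P_{\rm rec}^H-P_{\rm exact}^H)B^\top$, which is the displayed identity. For any $r$, the quadratic form is $r^\top B(\cdot)B^\top r=y^\top(P_{\rm rec}^H-P_{\rm exact}^H)y$ with $y=B^\top r$, and this is nonnegative by the first part. Finally, for $T=\ker A$ the operator $P_T^H$ coincides with $P_H$ of Lemma~\ref{lem:safe-inverse}, as noted before the theorem. The result therefore yields~\eqref{eq:reconstructive-dominance} of Corollary~\ref{cor:reconstructive-dominance}.

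No step is a real obstacle. The only point that needs care is the projector-ordering step: there one must use $\Pi_{\rm rec}\Pi_{\rm exact}=\Pi_{\rm exact}$, which comes from the nesting, and not rely on commutativity of arbitrary projectors. It is also worth confirming that whitening maps nested subspaces to nested subspaces, which holds because $H^{1/2}$ is a bijection.
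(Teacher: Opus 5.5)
Your proof is correct and follows the paper's argument. You whiten by $H^{1/2}$, use that the difference of nested orthogonal projectors is the orthogonal projector onto $S_{\rm rec}\cap S_{\rm exact}^{\perp}$ and is therefore positive semidefinite, and then apply congruence by $H^{-1/2}$ and by $B$; your explicit idempotence check via $\Pi_{\rm rec}\Pi_{\rm exact}=\Pi_{\rm exact}$ just fills in the projector fact that the paper asserts without verification.
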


\begin{proof}
Since $H^{1/2}$ is invertible, the whitened spaces satisfy
$H^{1/2}T_{\rm exact}\subseteq H^{1/2}T_{\rm rec}$.
For nested Euclidean subspaces $S_1\subseteq S_2$, the difference
$\Pi_{S_2}-\Pi_{S_1}$ is the orthogonal projector onto
$S_2\cap S_1^\perp$ and is therefore positive semidefinite. Consequently,
\[
 \Pi_{H^{1/2}T_{\rm rec}}-\Pi_{H^{1/2}T_{\rm exact}}\succeq0.
\]
Congruence by $H^{-1/2}$ gives the first ordering
in~\eqref{eq:reconstructive-psd-proof}; congruence by $B$ gives the second.
\end{proof}

\subsection{When the safe response improves strictly}
The ordering is weak in general. A strict gain exists for at least one
residual direction if and only if
\begin{equation}
 B(P_{\rm rec}^H-P_{\rm exact}^H)B^\top\ne0.
 \label{eq:reconstructive-nonzero-gain}
\end{equation}
For a specified residual $r$, the response quadratic form improves strictly
exactly when
\begin{equation}
 r^\top B(P_{\rm rec}^H-P_{\rm exact}^H)B^\top r>0.
 \label{eq:reconstructive-directional-gain}
\end{equation}
Thus a strict enlargement of the safe space alone is insufficient: the
newly available directions must couple to $B$ and, for a given residual,
to that residual direction. These are statements about the local response
quadratic form, not a general ordering of finite-step residual norms.

\subsection{Closure-load comparison on a common nondegenerate support}
\label{app:reconstructive-load}
Let $\mathcal Y\subseteq\R^m$ be a common invariant residual subspace of
$K_{\rm plast}^{\rm exact}$ and $K_{\rm plast}^{\rm rec}$ on which the
former is positive definite. In particular, the comparison applies on a
shared image space when both operators have that image. Invariance makes
the restrictions operators on $\mathcal Y$; an arbitrary compression to a
subspace is not being identified with the original closure problem.
Set
\[
 K_j^{\mathcal Y}=K_{\rm plast}^j\big|_{\mathcal Y},
 \qquad j\in\{\mathrm{exact},\mathrm{rec}\}.
\]
The PSD comparison gives
$K_{\rm rec}^{\mathcal Y}\succeq K_{\rm exact}^{\mathcal Y}\succ0$.
Positive-definite inversion reverses L\"owner order, hence
\begin{equation}
 (K_{\rm rec}^{\mathcal Y})^{-1}
 \preceq (K_{\rm exact}^{\mathcal Y})^{-1}.
 \label{eq:reconstructive-support-inverse}
\end{equation}
With the same edit-budget radius $R_0>0$, the geometric load
of~\eqref{eq:geometric-closure-load} therefore satisfies, for $r\in\mathcal Y$,
\begin{equation}
 \begin{aligned}
 q_{\rm geo}^{\rm rec}(r)
 &=\frac{\sqrt{r^\top(K_{\rm rec}^{\mathcal Y})^{-1}r}}{R_0}\\
 &\le
 \frac{\sqrt{r^\top(K_{\rm exact}^{\mathcal Y})^{-1}r}}{R_0}
 =q_{\rm geo}^{\rm exact}(r).
 \end{aligned}
 \label{eq:reconstructive-load-dominance}
\end{equation}
The inequality is strict whenever
\[
 r^\top\bigl[(K_{\rm exact}^{\mathcal Y})^{-1}
 -(K_{\rm rec}^{\mathcal Y})^{-1}\bigr]r>0.
\]
Every inverse here is taken on $\mathcal Y$. No ambient pseudoinverse-order
claim is made outside a common nondegenerate support.

\subsection{Interpretation and scope}
Under the stated inclusion, imposing additional realization constraints
that the registered semantics do not require cannot increase safe
plasticity in the response-operator sense above. Forgetting violates a
registered preservation requirement; rigidity preserves more than that
contract requires. The distinction does not make all historical constraints
unnecessary or establish that a reconstructive realization is inexpensive
to discover or execute.

The comparison holds $H$, $B$, and the residual contract fixed; the load
comparison also holds the edit budget fixed. It does not compare different
metrics, response maps, total search or reconstruction costs, or arbitrary
nonlinear finite updates. In particular, PSD ordering alone is not used to
claim that every residual has a smaller Euclidean norm after a finite
proximal step.

\section{Representation Obstruction, Closure Load, and Structural Carry}
\label{app:grade}

\subsection{Solver Failure and Representation Obstruction}
\label{app:grade-obstruction}
A failed solve does not identify its cause. A solver may miss an available
relation, explore too narrow a family, or lack evidence available only
through another declared interface. These possibilities differ from the
absence of any admissible relation law on the current representation.

For consistency with Section~\ref{sec:modes}, write $\mathfrak R(C)$ for the
exhaustive admissible relation family on representation $C$. Let
$\mathcal A_{\mathrm{search}}(C)\subseteq\mathfrak R(C)$ be the family
actually explored, and let $F$ denote the required future response. Its
search-restricted obstruction is
\begin{equation}
\Omega_C^{\mathrm{search}}(F)
=\inf_{f\in\mathcal A_{\mathrm{search}}(C)}D_\Xi(F,f\circ C).
\label{eq:grade-search-obstruction}
\end{equation}
A certified value above $\varepsilon_\Xi$ rules out that searched family,
not the entire representation. The loss of one returned candidate does not
by itself certify this infimum either.

The exhaustive quantity is $\Omega_C(F)$ from
Equation~\eqref{eq:representation-obstruction}. Under the fixed boundary,
evidence, tolerance, and exhaustive-family assumptions,
$\Omega_C(F)>\varepsilon_\Xi$ excludes relation-only closure by
Proposition~\ref{prop:representation-necessity}. Its existing proof is in
Appendix~\ref{app:representation-necessity}; no separate proposition is
needed here. Diagnosing this inequality from a computational failure
requires ruling out incomplete search at the level of the claim.

The inequality $\Omega_C(F)\le\varepsilon_\Xi$ does not automatically
certify an attained feasible relation at the tolerance boundary. It also
does not guarantee that a chosen solver finds a feasible relation, or
exclude a revised representation with lower cost or better future capacity.

\subsection{A Residual-Space Correction Body}
\label{app:grade-body}
Closure load compares a residual with attainable corrections in a common
residual space. Let $\mathcal F^\Xi_{[C_M]}$ be the lawful realization fibre
that preserves the representation class $[C_M]$. In particular, members
preserve $\varpi_\Xi(M)$, not every acquired future response. Denote by
$\Delta r(M,N)$ the correction induced by an admissible successor $N$ in
that common residual space. The unit-cost body is
\begin{equation}
\mathcal C_T^\Xi(M)=
\left\{\Delta r(M,N):
\begin{array}{l}
 N\in\mathcal F^\Xi_{[C_M]},\\
 C_{\mathrm{full}}(N\mid M)\le1
\end{array}\right\}.
\label{eq:grade-correction-body}
\end{equation}
The unit is a declared cost normalization. Each member must be realized by
one lawful successor satisfying the protected duties together. Separately
feasible corrections for different duties need not certify one joint edit.
The body is not a set of machine states and need not be convex, connected,
or star-shaped.

\subsection{Calibrated Load and Its Boundary}
\label{app:grade-load}
Declare a family $D_\lambda\mathcal C_T^\Xi(M)$ of reachable corrections,
nested in $\lambda>0$. Interpreting $\lambda=1$ as the original unit budget
requires $D_1\mathcal C_T^\Xi(M)=\mathcal C_T^\Xi(M)$. The load is
Equation~\eqref{eq:closure-load}, with the convention
$\inf\varnothing=+\infty$.
This definition does not make the reachable sets identifiable or
computable. The scalar-calibration requirements in
Appendix~\ref{app:contracts} are additional obligations.

Where the declared scaling is calibrated to actual feasible edits, a load
below one indicates a correction within the unit budget, and a load above
one rules out a correction in that declared unit-cost family. This is a
budget-relative statement. ``Below one'' is not a general assertion of
topological interior. At $q_T=1$, the value of an infimum alone does not
certify a feasible boundary correction: the correspondence between a load
threshold and membership must include the necessary attainment or set
regularity conditions. No such conditions are presumed for an arbitrary
nonlinear family.

\paragraph{Radial gauge sector.}
For standard radial dilation $D_\lambda\mathcal C=\lambda\mathcal C$, an
absorbing body star-shaped about the origin gives the corresponding
Minkowski gauge. Convex gauge properties are invoked only in a convex
sector, with the origin, scaling, and any boundary regularity required by
the particular property specified. This specialization is not a claim that
all reachable correction sets have gauge geometry.

\paragraph{Several protected constraints.}
If a common calibrated radial description genuinely has joint correction
body $\mathcal C=\bigcap_{i=1}^{m}\mathcal C_i$, the proposed scalar
consolidation is
\begin{equation}
q(r)=\max_{1\le i\le m}q_i(r).
\label{eq:grade-calibrated-max}
\end{equation}
Using this identity requires compatible gauge assumptions and a dilation
that represents the same cost scale for every component. In particular,
intersecting componentwise reachable images must not be substituted for
certifying a single edit that meets all constraints. The identity is not
asserted for arbitrary response sets or differently calibrated diagnostics;
those cases retain a vector or certificate description. A complete P3
argument must establish the joint-set and gauge conditions before applying
the scalar formula to a learner family.

\subsection{Bounds and Certificate-Valued Diagnosis}
\label{app:grade-certificates}
Disconnected, non-star-shaped, or partially identified correction sets need
not admit a useful scalar summary. An interval
$[\underline q,\overline q]$ can record certified load bounds when a common
scalar comparison is justified. Otherwise retain
\begin{equation}
\operatorname{Cert}_T(r\mid M)\in
\{\mathrm{INSIDE},\mathrm{CROSSES},\mathrm{OUTSIDE},\mathrm{UNIDENTIFIED}\}.
\label{eq:grade-certificates}
\end{equation}
\begin{description}
\item[INSIDE.] A lawful same-representation closure within the declared
budget is certified.
\item[OUTSIDE.] The absence of such a closure in the declared family is
certified, with the comparison scope fixed.
\item[CROSSES.] Available certified bounds straddle the threshold and do not
settle feasibility, including the applicable boundary qualification.
\item[UNIDENTIFIED.] Calibration or evidence is insufficient for a valid
decision.
\end{description}
An implementation using inner and outer approximations must state which
reachable set they bound and certify the corresponding inclusions. An
inner witness must realize the joint obligations; an outer exclusion must
cover the full declared family. No construction of such bounds, or
general certificate-completeness theorem, is supplied by the load
definition. Unknown calibration is not evidence for structural growth.

\subsection{When Structural Carry Is Justified}
\label{app:grade-carry}
The implication $q_T>1\Rightarrow\text{Representation Learning}$ is invalid.
An overflow can reflect an exhausted budget, a changed background,
unavailable evidence, or incomplete search. The diagnostic ordering
\[
\begin{gathered}
\text{same-grade saturation}\ \longrightarrow\ \text{transport}
\ \longrightarrow\ \text{boundary/evidence}\\
\longrightarrow\ \text{search/solver}
\ \longrightarrow\ \text{representation obstruction}
\end{gathered}
\]
expresses logical distinctions, not a mandatory execution schedule.

For a persistent, future-relevant defect, a structural Carry requires the
following claims to be established under the declared information and cost
contract:
\begin{enumerate}
\item Current same-representation closure is infeasible at the stated scope,
and lawful transport does not remove the defect.
\item Missing admissible evidence, an inadequate comparison budget, and
search or solver incompleteness do not explain the asserted representation
obstruction.
\item The exhaustive obstruction satisfies
$\Omega_C(F)>\varepsilon_\Xi$, and a lawful revised representation exists.
\end{enumerate}
These conditions distinguish the necessity of leaving the present class
from the existence of a successful successor. The notation
\[
M^+=\operatorname{LeastLawfulCarry}_\Xi(M,r)
\]
specifies a least-lawful structural selection; it does not construct the
successor, prove an attained minimum, or provide a solver. Carry may create,
split, merge, or revise a state, owner, memory variable, or program
component. It is not synonymous with adding parameters.

\subsection{Filtration, Closure Valuation, and Learning Modes}
\label{app:grade-filtration}
Let $F_0(M)\subseteq F_1(M)\subseteq\cdots$ be lawful residual-response
families on a common domain, where $F_s(M)$ contains distinctions closable
at grade $s$. The costs, evidence rights, protected duties, and tolerances
must be comparable. A calibrated description may identify
\begin{equation}
F_s(M)=\{r:q_{\Xi,s}^{M}(r)\le1\},
\label{eq:grade-threshold-family}
\end{equation}
only after the threshold-to-feasibility and boundary conditions are
established. Under that identification, define the closure valuation
\begin{equation}
 \nu_\Xi(r\mid M)=\inf\{s:q_{\Xi,s}^M(r)\le1\},
 \label{eq:closure-grade}
\end{equation}
using $+\infty$ when no grade qualifies.

The load $q_s$ asks about closure within a grade; $\nu_\Xi$ asks which grade
can first qualify. Transformation/Representation classification asks
instead whether the representation structure changes. Same-grade closure
is pure Transformation only when that grade's edit family fixes the
representation schema. Likewise, a larger grade must not be equated with
representation revision merely because it permits a larger budget or
search space. The implication from $\nu_\Xi(r)>s$ to required structural
revision additionally needs the structural meaning of the grades and
the exclusions in the Carry diagnosis.

\subsection{Associated Grades Require Additional Structure}
\label{app:grade-associated}
For an actual filtration of vector spaces or modules, with the appropriate
subspace or submodule inclusions, one may form
\begin{equation}
V_s=F_s/F_{s-1}.
\label{eq:grade-associated}
\end{equation}
A class of $r\in F_s\setminus F_{s-1}$ then records a distinction not
absorbed at the preceding grade. Cost-bounded reachable sets are not
automatically such spaces. This notation therefore does not define a
quotient for general nonlinear closure families.

An alternative notation $F_{s+1}/\!\sim_s$ also requires a genuine
equivalence relation on the stated domain. Saying that the difference
between two residuals is absorbable at grade $s$ does not by itself
establish reflexivity, symmetry, or transitivity in an arbitrary reachable
set; even the meaning of a difference must be declared. No nonlinear
equivalence theorem or general associated-graded algebra is established
here.

\paragraph{Status of P3.}
The correction bodies, loads, diagnostic certificates, and filtration
provide a conditional organization of closure. A complete P3 result still
requires the claimed calibration, attainability, joint-feasibility, and
structural-grade assumptions, together with any bound or Carry
construction. The representation-necessity implication remains P2 and its
existing proof. An OUTSIDE certificate alone is not a CARRY certificate.

\section{Dynamic Sufficiency, Growth, and Compilation}
\label{app:growth}

\subsection{Update-Sufficiency Factorization}
\label{app:update-factorization}
Let $\widetilde C_t:\Hist_t\to\widetilde Z_t$ be a candidate compressed representation, $U_e:\Hist_t\to\Hist_{t+1}$ a registered detailed update, and $C_{t+1}:\Hist_{t+1}\to Z_{t+1}$ the required post-update semantic encoding. We take $\widetilde Z_t=\widetilde C_t(\Hist_t)$, so the claimed updater is defined on attained compressed states.

\begin{proposition}[Update-sufficiency factorization criterion]
\label{prop:update-factorization}
There is a map $\bar U_e:\widetilde Z_t\to Z_{t+1}$ satisfying
\begin{equation}
 C_{t+1}\circ U_e=\bar U_e\circ\widetilde C_t
 \label{eq:update-factorization-exact}
\end{equation}
if and only if the required continuation is constant on the fibres of $\widetilde C_t$:
\begin{equation}
 \widetilde C_t(x_1)=\widetilde C_t(x_2)
 \ \Longrightarrow\ C_{t+1}U_e(x_1)=C_{t+1}U_e(x_2)
 \label{eq:update-fibre-criterion}
\end{equation}
for all $x_1,x_2\in\Hist_t$.
\end{proposition}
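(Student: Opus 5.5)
This is the standard factorization-through-a-surjection argument, and I would prove the two directions separately. Necessity is immediate. Suppose $\bar U_e$ satisfies \eqref{eq:update-factorization-exact} and $\widetilde C_t(x_1)=\widetilde C_t(x_2)$. Then
\[
C_{t+1}U_e(x_1)=\bar U_e(\widetilde C_t(x_1))=\bar U_e(\widetilde C_t(x_2))=C_{t+1}U_e(x_2),
\]
because a function returns one value on equal arguments. This gives the fibre criterion \eqref{eq:update-fibre-criterion}.

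For sufficiency, I would define $\bar U_e$ directly on the attained set $\widetilde Z_t=\widetilde C_t(\Hist_t)$. For $z\in\widetilde Z_t$, pick any $x\in\Hist_t$ with $\widetilde C_t(x)=z$; such an $x$ exists because we restricted to attained compressed states. Set $\bar U_e(z)=C_{t+1}U_e(x)$.

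The only substantive step is well-definedness: the value must not depend on which representative $x$ was chosen. That is exactly what \eqref{eq:update-fibre-criterion} supplies. Any two representatives $x_1,x_2$ of $z$ lie in the same fibre, so they give the same value $C_{t+1}U_e(x_1)=C_{t+1}U_e(x_2)$. No choice principle is needed, since the value is independent of the choice. Equivalently, $\bar U_e$ can be defined as the set of pairs $\{(\widetilde C_t(x),C_{t+1}U_e(x)):x\in\Hist_t\}$, which is a functional relation by the criterion and has domain $\widetilde Z_t$ by surjectivity.

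The commutation then follows for every $x\in\Hist_t$. Take $x$ itself as the representative of $\widetilde C_t(x)$ to get $\bar U_e(\widetilde C_t(x))=C_{t+1}U_e(x)$.

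I would add three remarks. First, the factor is unique on $\widetilde Z_t$, since $\widetilde C_t$ is surjective onto it; off the image it would be unconstrained, which is why the codomain was restricted. Second, the statement uses exact equality rather than the operational equivalence $\simeq_\Xi$. If one wanted the $\simeq_\Xi$ version, the same argument applies after composing $C_{t+1}$ with the quotient by $\simeq_\Xi$, provided that relation is an equivalence as in A2. Third, there is no real obstacle beyond stating the well-definedness step cleanly; the point worth emphasizing is that agreement of current predictions is not enough. A two-state example makes this concrete: $\widetilde C_t$ merges $x_1,x_2$ while $C_{t+1}U_e$ separates them, so \eqref{eq:update-fibre-criterion} fails and no updater exists.
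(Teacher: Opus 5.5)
Your proposal is correct and follows the paper's own proof: necessity by substituting the factorization on equal compressed inputs, and sufficiency by defining $\bar U_e(k)=C_{t+1}U_e(x)$ for any representative $x$ of $k\in\widetilde C_t(\Hist_t)$, with fibre constancy giving well-definedness and hence the commutation. Your remarks on uniqueness on the image, on the $\simeq_\Xi$ version via the semantic quotient, and on the two-state counterexample also match the paper's surrounding discussion.
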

\begin{proof}
If the updater exists, equal compressed inputs give
\[
 C_{t+1}U_e(x_1)=\bar U_e\widetilde C_t(x_1)
 =\bar U_e\widetilde C_t(x_2)=C_{t+1}U_e(x_2).
\]
Conversely, for $k=\widetilde C_t(x)$ define $\bar U_e(k)=C_{t+1}U_e(x)$. Fibre constancy makes this independent of the choice of representative $x$. The map is therefore well defined on the stated image and satisfies Equation~\eqref{eq:update-factorization-exact}.
\end{proof}
For contract-relative identity, the equality is taken in the appropriate semantic quotient. This mathematical factorization criterion concerns retained information; admissible implementation and resource requirements are separate obligations. It does not establish a quantitative approximate factorization result.

\subsection{Current Prediction Need Not Determine a Future Update}
\label{app:update-counterexample}
The supplied example has two detailed histories $\Hist_t=\{x_1,x_2\}$ requiring the same current output. The encoding $\widetilde C(x_1)=\widetilde C(x_2)=k$ is sufficient for that output. After a registered feedback event, let
\[
 U_e(x_1)=a,\qquad U_e(x_2)=b,\qquad
 C^+(a)=k_a,\quad C^+(b)=k_b,\quad k_a\ne k_b.
\]
An updater based only on the shared compressed state would have to satisfy both $\bar U_e(k)=k_a$ and $\bar U_e(k)=k_b$, a contradiction. Current-prediction sufficiency therefore does not imply update sufficiency.

The example uses a candidate prediction compression, not an already sufficient future-operational quotient. If the contract registers the differing update behavior, its full quotient cannot silently merge the two histories. Updating contexts must therefore contribute to the declared future equivalence.

\subsection{A Family of Lawful Continuations}
Update sufficiency requires a lawful induced updater for every registered event, with the appropriate source and target encodings as in Equation~\eqref{eq:update-sufficiency}. Dynamic sufficiency extends this requirement to all registered evolution classes, including evidence updates, background transport, and internal control transitions. The relevant commuting diagrams must each be declared; a single-event factorization does not supply arbitrary background transports or their composition. The interpretation that knowledge retains both what is known and how it should change describes this information requirement, rather than adding another mathematical assumption.

\subsection{Future-closure Viability and Lawful Successor Paths}
\label{future-viability}
Let $\mathcal Y_\Xi^{\rm fut}$ be the registered class of future residual
demands. A lawful successor closure path for $r$ is a finite sequence
$\gamma=(M_0,\ldots,M_L)$, starting at $M_0=M$, whose persistent
transitions satisfy the declared semantic, dynamic-sufficiency, resource,
and safety requirements, and whose final state closes $r$ to the contract
tolerance. Thus the path-based definition of Section~\ref{sec:growth} is
\begin{equation}
 \mathcal V_\Xi(M)=
 \bigl\{r\in\mathcal Y_\Xi^{\rm fut}:
       \Gamma_\Xi(M,r)\ne\varnothing\bigr\}.
 \label{future-viability-paths}
\end{equation}
Future viability on a declared target class $\mathcal Y_0$ requires
$\mathcal Y_0\subseteq\mathcal V_\Xi(M)$. It is an existence condition
for lawful continuation paths, with no claim about unregistered futures.

\subsection{Sequential Safe Inverse}
Work in the finite-dimensional affine geometry of
Section~\ref{sec:dynamics-compatibility}. Let $P=P^\top\succeq0$ denote
the current semantic-safe inverse for the declared edit metric and previous
responsibilities. Suppose a newly acquired relation is registered as an
additional hard local responsibility $Dd=0$. The remaining safe inverse is
\begin{equation}
 P_{\rm after}=P-PD^\top(DPD^\top)^\dagger DP.
 \label{eq:growth-local-geometry}
\end{equation}
The following supplied linear-algebra result permits rank-deficient $P$
and $D$; the dagger denotes the Moore--Penrose pseudoinverse.

\begin{lemma}[Sequential safe inverse]
\label{future-sequential-safe-inverse}
Let $P$ be a finite-dimensional symmetric positive-semidefinite matrix,
let $D$ be a compatible real matrix, and set $G=DP^{1/2}$. Then
\begin{equation}
 P_{\rm after}
 =P^{1/2}\bigl[I-G^\top(GG^\top)^\dagger G\bigr]P^{1/2},
 \qquad 0\preceq P_{\rm after}\preceq P,
 \label{future-safe-factorization}
\end{equation}
and
\begin{equation}
 \im P_{\rm after}=\im P\cap\ker D.
 \label{future-safe-range}
\end{equation}
\end{lemma}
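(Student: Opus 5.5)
The plan is to reduce everything to the Euclidean projector identity already used for Lemma~\ref{lem:safe-inverse}, with $P^{1/2}$ in place of $H^{-1/2}$. Let $P^{1/2}$ be the symmetric positive-semidefinite square root and $G=DP^{1/2}$. Then $DP=GP^{1/2}$ and $PD^\top=P^{1/2}G^\top$. In addition,
\[
DPD^\top=GG^\top .
\]
Substituting these three expressions into Equation~\eqref{eq:growth-local-geometry} gives
\[
P_{\rm after}=P^{1/2}P^{1/2}-P^{1/2}G^\top(GG^\top)^\dagger GP^{1/2},
\]
which is the factorization~\eqref{future-safe-factorization} after factoring $P^{1/2}$ on both sides. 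Set $Q=I-G^\top(GG^\top)^\dagger G$. The standard Moore--Penrose identity $G^\top(GG^\top)^\dagger=G^\dagger$ gives $Q=I-G^\dagger G$. This is the orthogonal projector onto $\ker G$, and the identity holds without any rank assumption on $G$.

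For the Löwner bounds, $0\preceq Q\preceq I$ because $Q$ is an orthogonal projector. Congruence by the symmetric matrix $P^{1/2}$ then gives
\[
0\preceq P^{1/2}QP^{1/2}\preceq P^{1/2}IP^{1/2}=P .
\]
This step is routine.

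The range identity~\eqref{future-safe-range} is the main step, because $P^{1/2}$ may be singular. I will use the following facts.
\begin{itemize}
\item Since $Q$ is a projector, $P_{\rm after}=(QP^{1/2})^\top(QP^{1/2})$. Hence $\im P_{\rm after}=\im (QP^{1/2})^\top=\im(P^{1/2}Q)=P^{1/2}(\ker G)$.
\item $\im P^{1/2}=\im P$.
\end{itemize}

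For the inclusion $\subseteq$, take $y=P^{1/2}u$ with $u\in\ker G$. Then $y\in\im P$, and $Dy=Gu=0$.

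For the inclusion $\supseteq$, take $y\in\im P\cap\ker D$ and write $y=P^{1/2}u$. I cannot simply pick any preimage $u$, so I will choose $u\in\im P^{1/2}$, which is possible since $\im P^{1/2}=(\ker P^{1/2})^\perp$. Then $Gu=DP^{1/2}u=Dy=0$, so $u\in\ker G$ and $y=P^{1/2}u\in P^{1/2}(\ker G)$. In fact any preimage would work here, because $Gu$ depends only on $P^{1/2}u=y$. So the preimage choice is harmless, and the care is needed only in the first bullet's range equality $\im(A^\top A)=\im A^\top$.

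Finally, I will note that the argument uses only symmetry and semidefiniteness of $P$ and pseudoinverse identities. It therefore covers rank-deficient $P$ and $D$, as the statement requires.
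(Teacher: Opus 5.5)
Your proposal is correct and follows the paper's proof. It uses the same three substitutions $PD^\top=P^{1/2}G^\top$, $DPD^\top=GG^\top$, $DP=GP^{1/2}$, identifies the bracket as the orthogonal projector onto $\ker G$, and gets $0\preceq P_{\rm after}\preceq P$ by congruence with $P^{1/2}$. The only difference is that the paper asserts the range identity in one line, whereas you prove it: the Gram form $P_{\rm after}=(QP^{1/2})^\top(QP^{1/2})$ gives $\im P_{\rm after}=P^{1/2}(\ker G)$, and together with $\im P^{1/2}=\im P$ this yields both inclusions.
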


\begin{proof}
The identities $PD^\top=P^{1/2}G^\top$,
$DPD^\top=GG^\top$, and $DP=GP^{1/2}$ give the factorization by
substitution into Equation~\eqref{eq:growth-local-geometry}. The bracketed
matrix is the Euclidean orthogonal projector onto $\ker G$. It is
positive semidefinite and bounded above by $I$, so congruence by $P^{1/2}$
gives $0\preceq P_{\rm after}\preceq P$. The range of $P_{\rm after}$ consists precisely
of the vectors in the previous safe space whose image under $D$ vanishes,
giving Equation~\eqref{future-safe-range}.
\end{proof}

Registering an additional responsibility can therefore consume local safe
directions. Successful current closure alone does not establish that a
useful response remains available for the next registered demand.

\subsection{A Certificate for Registered Local Future Probes}
Let $B_f$ be the response Jacobian of a registered local probe family.
Its post-update response operator is
\begin{equation}
 K_f^{\rm after}=B_fP_{\rm after}B_f^\top.
 \label{future-probe-operator}
\end{equation}
The probe family and its construction must respect the declared information
rights. The notation $B_f$ neither assumes nor authorizes access to unavailable
future-task identities or target labels; it does not encode all unknown future tasks.

\begin{proposition}[Registered local future-plasticity certificate]
\label{future-registered-probe}
Suppose the declared probe operator satisfies
\begin{equation}
 B_fP_{\rm after}B_f^\top\succeq\kappa_f I,
 \qquad \kappa_f>0.
 \label{eq:growth-local-probe}
\end{equation}
Then every nonzero $r_f$ in the registered probe output space has
$P_{\rm after}B_f^\top r_f\ne0$. In the corresponding frozen affine
proximal model, for $\eta>0$, its residual update satisfies
\begin{equation}
 r_f^+=(I+\eta K_f^{\rm after})^{-1}r_f,
 \qquad
 \|r_f^+\|\le\frac{1}{1+\eta\kappa_f}\|r_f\|.
 \label{future-probe-contraction}
\end{equation}
\end{proposition}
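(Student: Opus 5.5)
The argument has two parts: a nonvanishing claim read off from the quadratic form, and a contraction bound obtained from the spectral calculation of Appendix~\ref{app:compatibility}, applied with $K_f^{\rm after}$ in place of $\Kplast$.

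\emph{Nonvanishing.} For a nonzero $r_f$ in the registered probe output space, I use \eqref{eq:growth-local-probe}:
\[
 (B_f^\top r_f)^\top P_{\rm after}(B_f^\top r_f)=r_f^\top K_f^{\rm after}r_f\ge\kappa_f\|r_f\|^2>0 .
\]
If $P_{\rm after}B_f^\top r_f$ were zero, this form would vanish, so $P_{\rm after}B_f^\top r_f\ne0$. This is the implication \eqref{eq:compatible-response}$\Rightarrow$\eqref{eq:compatible-covector} of Theorem~\ref{thm:compatibility}, with $P_{\rm after}$ playing the role of $P_H$. The only property used is $P_{\rm after}\succeq0$, which Lemma~\ref{future-sequential-safe-inverse} supplies. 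Lemma~\ref{future-sequential-safe-inverse} also gives $\im P_{\rm after}=\im P\cap\ker D$, so the direction $d=P_{\rm after}B_f^\top r_f$ respects both the old and the newly registered responsibilities.

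\emph{Residual formula.} I take the frozen affine proximal model to be the analogue of \eqref{eq:proximal-safe-step}, with the safe inverse now $P_{\rm after}$ and the response map $B_f$. There are two ways to get $r_f^+=(I+\eta K_f^{\rm after})^{-1}r_f$.
\begin{itemize}
\item If $P_{\rm after}$ is itself an $H$-safe inverse for the combined constraint, i.e.\ $P=P_H$ for $A$ and $P_{\rm after}$ is the $P_H$ of the stacked matrix $\binom{A}{D}$, then Appendix~\ref{app:proximal-step} applies verbatim.
\item More generally, I define the model directly by the fixed-point relation $d=\eta P_{\rm after}B_f^\top r_f^+$ with $r_f^+=r_f-B_fd$. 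This gives $(I+\eta K_f^{\rm after})r_f^+=r_f$, and since $K_f^{\rm after}\succeq\kappa_f I$, the matrix $I+\eta K_f^{\rm after}$ is invertible.
\end{itemize}

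\emph{Contraction.} I diagonalize $K_f^{\rm after}=U\Lambda U^\top$ with every eigenvalue $\lambda_i\ge\kappa_f$. As in \eqref{eq:affine-spectral-contraction},
\[
 \|r_f^+\|^2=\sum_i\frac{c_i^2}{(1+\eta\lambda_i)^2}\le(1+\eta\kappa_f)^{-2}\|r_f\|^2 .
\]
Here the lower bound holds on the entire registered probe output space, which is exactly the uniform hypothesis required in that appendix.

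\emph{Main obstacle.} The hard step is not the algebra but the precise meaning of ``frozen affine proximal model''. Specifically, one must check that the sequential $P_{\rm after}$ coincides with the metric safe inverse for the combined constraints $A$ and $D$, so that the KKT derivation of Appendix~\ref{app:proximal-step} transfers.

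To show this, I would write $P=H^{-1/2}Q_AH^{-1/2}$ and set $\widetilde D=DH^{-1/2}$. The bracket in \eqref{future-safe-factorization} then becomes the projector onto $\ker\widetilde A\cap\ker\widetilde D$, which gives $P_{\rm after}=P_H$ for the stacked constraint. If that identification failed, I would fall back on the fixed-point definition in the second bullet above, and the contraction bound would still hold.
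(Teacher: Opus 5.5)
Your proof is correct and follows the paper's argument. Positivity of $r_f^\top K_f^{\rm after}r_f$ gives the nonzero safe response as in Theorem~\ref{thm:compatibility}, the updated residual is the resolvent $(I+\eta K_f^{\rm after})^{-1}r_f$ of \eqref{eq:proximal-safe-residual}, and eigenvalues at least $\kappa_f$ give the contraction. Your identification of $P_{\rm after}$ with the $H$-safe inverse of the stacked constraint $\binom{A}{D}$ makes explicit a step the paper only asserts.

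That identification is true, but your sketch of it is wrong. The bracket in \eqref{future-safe-factorization} uses the symmetric root $P^{1/2}$, so it projects onto $\ker(DP^{1/2})\supseteq\im A^\top$, not onto $\ker\widetilde A\cap\ker\widetilde D$. Instead factor $P=WW^\top$ with $W=H^{-1/2}Q_A$, which gives $P_{\rm after}=H^{-1/2}Q_A\Pi_{\ker(\widetilde DQ_A)}Q_AH^{-1/2}=H^{-1/2}\Pi_{\ker\widetilde A\cap\ker\widetilde D}H^{-1/2}$.
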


\begin{proof}
For $r_f\ne0$,
\[
 r_f^\top B_fP_{\rm after}B_f^\top r_f
 \ge\kappa_f\|r_f\|^2>0.
\]
The compatibility criterion of Theorem~\ref{thm:compatibility}, applied
with the additional responsibility, gives a nonzero semantic-safe response.
The corresponding proximal residual operator is
$(I+\eta K_f^{\rm after})^{-1}$, as in
Equation~\eqref{eq:proximal-safe-residual}. Every eigenvalue of
$K_f^{\rm after}$ is at least $\kappa_f$, so
\[
 \bigl\|(I+\eta K_f^{\rm after})^{-1}\bigr\|_2
 \le(1+\eta\kappa_f)^{-1}.
\]
The residual contraction bound follows.
\end{proof}

The certificate concerns the declared linear probe directions. For a
nonlinear learner, this calculation is local and does not certify its
finite endpoint or a sequence of later updates. It also does not establish
$\mathcal V_\Xi(M)=\mathcal Y_\Xi^{\rm fut}$: closure paths can require
nonlocal changes, evidence, or representation revision absent from $B_f$.
Conversely, a zero local response need not imply global non-viability when
a lawful representation revision remains available.

\subsection{A Minimal Collapse Example and Continual Admissibility}
\begin{example}[Loss of a registered future response]
\label{future-collapse-example}
Take a two-dimensional current safe edit space with $P=I_2$, and register
$D=\begin{bmatrix}1&0\end{bmatrix}$ as a new responsibility. Then
$DPD^\top=1$ and
\begin{equation}
 P_{\rm after}
 =I_2-\begin{bmatrix}1\\0\end{bmatrix}
       \begin{bmatrix}1&0\end{bmatrix}
 =\begin{bmatrix}0&0\\0&1\end{bmatrix}.
 \label{future-collapse-safe-inverse}
\end{equation}
For the registered future probe
$B_f=\begin{bmatrix}1&0\end{bmatrix}$, this gives
\begin{equation}
 B_fP_{\rm after}B_f^\top=0.
 \label{future-collapse-response}
\end{equation}
The current responsibility can thus be installed while leaving no
same-representation first-order response for that future probe, even
though the previous safe requirements remain in force.
\end{example}

This example distinguishes retaining old responsibilities and acquiring
the present one from retaining a particular future local response. It
does not exclude a nonlocal or representation-changing repair.
At the general level, continual admissibility requires the declared
future-closure condition $\mathcal Y_0\subseteq\mathcal V_\Xi(M^+)$,
along with semantic preservation, current closure, and correct updating.
Where local probes are registered, Equation~\eqref{eq:growth-local-probe}
provides an additional finite certificate. It does not replace the
multi-step viability requirement.

\subsection{Path Costs and Capability Comparison}
Fix a registered future residual class $\mathcal Y_\Xi^{\rm fut}$.
The learner state $M$ includes the persistent variables relevant to future
closure; it is distinct from a detailed interaction state $x$ or the
represented state $C_M(x)$. Previously acquired duties are registered under
$\Xi_{\rm old}$ and retained within the current contract.
Every element of $\Gamma_\Xi(M,r)$ must be a finite lawful closure path under
the same registered information rights, semantic responsibilities, resource
constraints, and closure tolerance. We use finite nonnegative values of
$C_{\rm full}$ on these paths. An empty path set has optimal cost $+\infty$.
Thus the optimal complete future-closure cost and its finite domain are
\[
 c^\star_\Xi(r\mid M)
 =\inf_{\gamma\in\Gamma_\Xi(M,r)}C_{\rm full}(\gamma),
 \qquad
 \mathcal V_\Xi(M)=\{r:c^\star_\Xi(r\mid M)<+\infty\}.
\]
Unknown costs do not establish feasibility or a dominance certificate.
The epigraph in Equation~\eqref{eq:growth-frontier} uses finite cost
coordinates, so demands with no lawful path contribute no points.
It compares infimal costs, not necessarily attained budgets; a statement
that a path exists at cost exactly $c^\star_\Xi$ would additionally require
attainment.

Write $M'\succeq_\Xi^{\rm sem}M$ when every previously registered
future-operational capability remains lawfully reconstructible from $M'$.
Assume lawful self-reconstruction and composability of these reconstruction
relations. For a common contract, define
\begin{equation}
 M'\succeq_\Xi M
 \quad\Longleftrightarrow\quad
 M'\succeq_\Xi^{\rm sem}M
 \ \text{and}\ 
 \mathscr F_\Xi(M')\supseteq\mathscr F_\Xi(M).
 \label{eq:growth-preorder}
\end{equation}
The frontier clause preserves both the viable domain and its optimal
complete costs. In particular, it requires
\[
 c^\star_\Xi(r\mid M')\le c^\star_\Xi(r\mid M)
 \quad\text{for every }r\in\mathcal V_\Xi(M),
\]
in addition to preservation of registered semantics. Unknown costs cannot
be treated as zero or as a certificate of this universal comparison.

\begin{proposition}[Closure-capability preorder]
\label{prop:growth-preorder}
Under the stated reconstruction assumptions, $\succeq_\Xi$ is reflexive
and transitive.
\end{proposition}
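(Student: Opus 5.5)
The plan is to treat $\succeq_\Xi$ as the conjunction of two relations, each shown separately to be reflexive and transitive. A conjunction of preorders is itself a preorder. By definition~\eqref{eq:growth-preorder}, $M'\succeq_\Xi M$ holds exactly when $M'\succeq_\Xi^{\rm sem}M$ and $\mathscr F_\Xi(M')\supseteq\mathscr F_\Xi(M)$. I will handle the semantic clause using the stated reconstruction assumptions, and the frontier clause using set inclusion.

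\emph{Reflexivity.} For the semantic clause, $M\succeq_\Xi^{\rm sem}M$ is exactly the assumed lawful self-reconstruction: every registered future-operational capability of $M$ is lawfully reconstructible from $M$ itself. For the frontier clause, $\mathscr F_\Xi(M)\supseteq\mathscr F_\Xi(M)$ holds trivially. The frontier is a fixed subset of $\mathcal Y_\Xi^{\rm fut}\times\R_{\ge0}$ once the contract is fixed, because $c^\star_\Xi(\cdot\mid M)$ is defined by the infimum in~\eqref{eq:growth-frontier} with $\inf\varnothing=+\infty$. So reflexivity needs no attainment or finiteness argument.

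\emph{Transitivity.} Suppose $M''\succeq_\Xi M'$ and $M'\succeq_\Xi M$. For the frontier clause, inclusion is transitive: $\mathscr F_\Xi(M'')\supseteq\mathscr F_\Xi(M')\supseteq\mathscr F_\Xi(M)$. I would also record the equivalent pointwise form, to match the remark after~\eqref{eq:growth-preorder}. Since $\mathscr F_\Xi(M)$ is an epigraph, the inclusion $\mathscr F_\Xi(M')\supseteq\mathscr F_\Xi(M)$ holds iff $c^\star_\Xi(r\mid M')\le c^\star_\Xi(r\mid M)$ for every $r$ with $c^\star_\Xi(r\mid M)<\infty$. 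To see this, take $(r,c)$ with $c=c^\star_\Xi(r\mid M)$ for one direction, and use monotonicity in $c$ for the other. Chaining these inequalities on $\mathcal V_\Xi(M)\subseteq\mathcal V_\Xi(M')$ gives the same conclusion.

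For the semantic clause, I would invoke the assumed composability of reconstruction relations. Every registered capability of $M$ is lawfully reconstructible from $M'$, and every registered capability of $M'$ is reconstructible from $M''$. The reconstruction witnesses compose to a lawful reconstruction of the capabilities of $M$ from $M''$.

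The main obstacle is this semantic step. The reconstructions from $M'$ reproduce capabilities of $M$, but the reconstructions from $M''$ reproduce capabilities \emph{registered} for $M'$. One must check that the capabilities used as intermediates are among those registered at $M'$; otherwise the chain could break. I would handle this by noting that the old duties are registered under $\Xi_{\rm old}$ and retained in the common contract $\Xi$, so a capability of $M$ reconstructed from $M'$ counts as a registered capability of $M'$. I would then state explicitly that composability is exactly the assumption licensing the composite reconstruction, with no additional lawfulness claimed. Everything else is elementary order theory on sets and extended-real infima, holding under the common contract, information rights, and cost accounting.
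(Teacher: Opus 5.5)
Your proposal is correct and follows the paper's own argument. Reflexivity comes from lawful self-reconstruction together with the trivial self-inclusion of the frontier, and transitivity comes from composability of reconstruction together with the chain $\mathscr F_\Xi(M'')\supseteq\mathscr F_\Xi(M')\supseteq\mathscr F_\Xi(M)$. Your pointwise reformulation of the epigraph inclusion is valid but not needed, and the composability assumption, taken as given in the paper, fully absorbs your worry about intermediate registered capabilities.
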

\begin{proof}
Self-reconstruction gives $M\succeq_\Xi^{\rm sem}M$, and the frontier
contains itself. Thus $M\succeq_\Xi M$.
If $M_2\succeq_\Xi M_1$ and $M_1\succeq_\Xi M_0$, composability gives
$M_2\succeq_\Xi^{\rm sem}M_0$, while
\[
\mathscr F_\Xi(M_2)\supseteq\mathscr F_\Xi(M_1)
\supseteq\mathscr F_\Xi(M_0).
\]
Together these imply $M_2\succeq_\Xi M_0$.
\end{proof}

\paragraph{Strict growth.}
Strict growth, denoted $M'\succ_\Xi M$, requires
the weak comparison together with
\[
 \mathcal V_\Xi(M')\supsetneq\mathcal V_\Xi(M)
 \quad\text{or}\quad
 c^\star_\Xi(r\mid M')<c^\star_\Xi(r\mid M)
 \text{ for some }r\in\mathcal V_\Xi(M).
\]
A decrease of the present residual alone does not establish either strict
condition. These comparisons are explicitly relative to the declared future
class. They make no guarantee about unregistered demands.
Parameter count, current accuracy, and the number of stored tasks are not
substitutes for these domain or optimal-cost conditions.

\subsection{Transport and Proof of T3}
\label{app:growth-proof}
If the background changes, we compare a successor with
$\widetilde M_t=\mathsf T_{t\to t+1}M_t$, rather than directly with $M_t$.
Transport must identify the old responsibilities and capability comparisons
in the new contract. It is the declared lawful no-learning baseline;
environmental transport is not counted as learned growth.

Speculative candidates can fail an audit, regress, or be discarded before
commit. Denote the candidate proposed for persistence by $M_t^\star$.
The fail-closed rule is
\begin{equation}
 M_{t+1}=
 \begin{cases}
 M_t^\star,
 &\begin{array}{l}
   \operatorname{Audit}_{\Xi_{t+1}}(M_t^\star)=\mathrm{PASS},\\[-1mm]
   M_t^\star\succeq_{\Xi_{t+1}}\widetilde M_t,
  \end{array}\\[1mm]
 \widetilde M_t,&\text{otherwise}.
 \end{cases}
 \label{eq:growth-commit}
\end{equation}
Audit soundness means that acceptance certifies current closure, preservation
of registered semantics, dynamic sufficiency, declared future viability,
and the resource and safety contract. The dominance gate is a further
assumption, not a consequence of current closure.

\begin{proof}[Proof of Theorem~\ref{thm:growth}]
There are two cases. If the candidate fails the audit or the dominance
condition, Equation~\eqref{eq:growth-commit} sets
$M_{t+1}=\widetilde M_t$. By reflexivity,
$M_{t+1}\succeq_{\Xi_{t+1}}\widetilde M_t$.
If the candidate is accepted, the commit condition requires
$M_t^\star\succeq_{\Xi_{t+1}}\widetilde M_t$, and the rule sets
$M_{t+1}=M_t^\star$. Thus the weak comparison holds in both cases.
If an accepted candidate additionally expands the lawful closure domain or
strictly reduces the optimal cost of a previously closable demand, the
definition of $\succ_{\Xi_{t+1}}$ gives the strict comparison.
\end{proof}

This proof concerns certified persistent states. Trial candidates can fail
the audit and can be discarded. The fallback branch does not prove that
the current residual is closed, and transport-relative non-regression does
not establish non-regression across arbitrarily changing environments.
The theorem also provides no method for checking optimal-cost dominance
over an infinite future class.

\paragraph{Ideal comparison and executable certification.}
The frontier comparison quantifies over every registered future residual
and over optimal lawful path costs. It is an ideal semantic condition,
not automatically a terminating audit procedure. A practical commit
requires certificates sound for that comparison, or a declared finite
sector where the required decisions and costs can actually be evaluated.
Finite task or candidate counts alone do not make an unbounded path search
or undecidable validation step executable. This appendix does not construct
a complete certificate system for the unrestricted comparison.

The baseline and candidate must also be compared at a common accounting
point. Candidate search and rejected trials can consume resources even
when the persistent model is rolled back. Equation~\eqref{eq:growth-commit}
does not restore those resources or prove that the entire search trajectory
is cost-free or non-regressive from an earlier accounting origin.

\subsection{Conditional Liveness in an Enumerable Sector}
\label{app:growth-liveness}
Weak non-regression is a safety property. A learner can retain its incumbent
forever by rejecting every proposal, so Theorem~\ref{thm:growth} does not
imply discovery. Consider a registered candidate family
$\mathcal N(M,r)=\{N_1,N_2,\ldots\}$ for a persistent residual.
For the following result, a completed fair evaluation means that every
finite-index candidate is eventually evaluated to completion with its
required admissible evidence available. A provisional evaluation before
that evidence arrives does not discharge this condition.

\begin{proposition}[Conditional discovery of a lawful closure]
\label{prop:growth-liveness}
Assume that the residual remains identifiable and future relevant; the
required admissible evidence eventually becomes available; a finite-index
candidate $N_{j^\star}$ is a lawful closure when evaluated; evaluations
are fair in the completed sense just specified; the audit accepts every
genuinely lawful candidate in this family; and sufficient resources remain
through evaluation of $N_{j^\star}$. Then a lawful closure is eventually
found and passes that audit, becoming available for commit subject to the
full commit conditions.
\end{proposition}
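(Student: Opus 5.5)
The argument is a direct fairness argument over the enumerated family $\mathcal N(M,r)=\{N_1,N_2,\ldots\}$. The liveness question reduces to one fact: a specific finite-index candidate, $N_{j^\star}$, is eventually evaluated to completion under the stated conditions. Acceptance of that candidate then follows from audit completeness on genuinely lawful members.

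First, I would fix the evaluation schedule and use completed fairness. By definition, every finite-index candidate is eventually evaluated to completion with its required admissible evidence available. Applied to $j^\star$, this gives a finite time $\tau$ at which the completed evaluation of $N_{j^\star}$ finishes.

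Two side conditions must hold at $\tau$, and each is discharged by a separate hypothesis:
\begin{itemize}
\item the residual remains identifiable and future relevant, so the closure target has not shifted or vanished;
\item sufficient resources remain through evaluation of $N_{j^\star}$, so the evaluation is not truncated.
\end{itemize}
The evidence hypothesis ensures the evaluation is a completed one rather than provisional. This matters because the proposition explicitly excludes provisional evaluations.

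Second, at $\tau$ the candidate $N_{j^\star}$ is, by hypothesis, a lawful closure when evaluated. Since the audit accepts every genuinely lawful candidate in this family, $\operatorname{Audit}(N_{j^\star})=\mathrm{PASS}$. A lawful closure has therefore been found and has passed the audit.

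Finally, I would note what the argument does not show. It does not show that $N_{j^\star}$ is committed. Commitment under \eqref{eq:growth-commit} additionally requires the dominance gate $N_{j^\star}\succeq\widetilde M_t$, which is not among the hypotheses. Hence the conclusion is only availability for commit subject to the full commit conditions, and Theorem~\ref{thm:growth} is not invoked.

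The main obstacle is the first step: making precise that ``eventually'' lands at a finite time at which all the hypotheses hold simultaneously. The identifiability of the residual, the availability of evidence, and the sufficiency of resources must all hold at the completion time $\tau$, not merely at some unrelated times.

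I would handle this by reading the assumptions as holding throughout the interval up to completion of $N_{j^\star}$, which is how the resource hypothesis is phrased, and by interpreting ``evidence eventually available'' through the completed-fairness definition itself. If earlier candidates could run without terminating, fairness (for example, dovetailing) already guarantees that $N_{j^\star}$ still completes. So no additional termination assumption on the other candidates is needed.
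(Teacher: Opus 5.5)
Your proposal is correct and matches the paper's argument: completed fairness yields a finished evaluation of $N_{j^\star}$, lawfulness together with audit completeness gives a pass, and the conclusion stops at availability for commit because the dominance gate in \eqref{eq:growth-commit} is not among the hypotheses. Your added check that the identifiability, evidence, and resource hypotheses all hold at the completion time is a reasonable elaboration of a point the paper's three-line proof leaves implicit.
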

\begin{proof}
Fairness gives a completed evaluation of $N_{j^\star}$. The candidate is
lawful by assumption, and audit completeness for lawful candidates makes
it pass. It is therefore found and available for commit subject to the
stated commit conditions.
\end{proof}

Audit soundness, used for safety, and audit completeness on the lawful
family, used here for discovery, are separate assumptions. In particular,
passing a current-closure audit does not establish the additional dominance
gate in Equation~\eqref{eq:growth-commit}. That gate and the persistence
policy must also be covered before claiming eventual acceptance under T3.
The proposition asserts no strict frontier improvement unless the found
candidate additionally has a strict growth witness.

An infinite enumeration with fairness supplies no uniform finite-time or
fixed-budget bound. A bounded enumeration claim would require a finite
candidate bound, terminating evaluation and certification, appropriate
evidence availability, and a corresponding resource bound. No such
quantitative bound is derived here. The result also does not decide that
no closure exists or imply infinite strict growth. Failure of its
assumptions can leave the learner open or inert; it does not automatically
produce a verifiable non-closure certificate.

\subsection{Compilation, Core, and Full Cost Accounting}
\label{app:growth-compilation}
Accumulating low-level responsibilities $A_1d=0,A_2d=0,\ldots$ can reduce the
available same-representation directions. GRC proposes compilation
as a complementary route to structural expansion. A relation becomes a
compilation candidate when it is repeatedly successful, reopenable,
dynamically sufficient, and actually reused, and its replacement by a
primitive respects the contract:
\[
 \rho^{\rm certified}\xrightarrow{\rm Compile}p_\rho.
\]
Here $p_\rho$ is the installed reusable primitive, distinct from an entire
quotient state space. When lawful, Core can then factor, merge, prune, retire, or remove and
reoptimize redundant realization structure. The intended benefit is to move
repeated causal work into a reusable primitive, while preserving the
information needed for later reopening and revision.

The two proposed growth mechanisms have different success conditions.
Carry supports a growth claim only when a certified successor enlarges the
lawful future domain without other regression. Compile supports a growth
claim only when a certified successor preserves the frontier and reduces
optimal cost for a future demand that reuses the compiled relation.
These roles do not by themselves prove either outcome for a specified
learner family. Each application must supply a construction or certificate
establishing its claimed improvement.

\paragraph{Carry as domain growth.}
If a registered residual satisfies $c^\star_\Xi(r\mid M)=+\infty$ but
$c^\star_\Xi(r\mid M^+)<+\infty$, and the successor also preserves the
old semantics and entire frontier, then
\[
\mathcal V_\Xi(M^+)\supsetneq\mathcal V_\Xi(M),\qquad
M^+\succ_\Xi M.
\]
This is the domain-growth clause of the definition. Enabling one new demand
while damaging existing capabilities does not satisfy it.

\paragraph{Compile as cost growth.}
Let a recurrent relation be replaced by $p_\rho$, with registered semantics,
dynamic sufficiency, and future viability preserved. To certify growth in
the order above, the successor must also preserve every old optimal-cost
bound and strictly reduce $c^\star_\Xi(r\mid M)$ for some previously viable
registered residual. Mere preservation of feasibility does not establish
the cost bounds. Likewise, making one particular reused path cheaper does
not by itself show that the infimum over all paths decreases.
The path-level benefit is therefore a candidate mechanism whose frontier
and optimal-cost conditions still require a certificate.

\paragraph{Core and closure capital.}
A Core simplification is lawful for this comparison only if
$M_{\rm core}\succeq_\Xi M_{\rm pre\text{-}core}$. Deletion alone is not
the defining property. Merge, factorization, reoptimization, pruning, or
retirement may simplify an implementation while retaining that comparison;
a strict optimal-cost improvement supplies cost growth. Carry and
Compile/Core thus address different coordinates of closure capability:
the viable domain and its costs. The interpretation of past stable closure
as future learning capital is conditional on these certified improvements.
Neither the existence of an eligible compilation nor its discovery follows
from stable--plastic learning alone; the capitalization property in
Appendix~\ref{app:representation} remains an additional requirement.

The cost comparison must fix its accounting origin and resource scope.
Search, rejected trials, compilation, maintenance, and reopening must be
recorded wherever the declared full-cost contract assigns them. A reduction
in continuation cost from the successor is not by itself a reduction of
total lifecycle cost from before the search. Rollback of a candidate does
not refund resources already consumed. A claim that compilation releases
resources or degrees of freedom must be checked under this same accounting
and semantic contract.

\section{Restricted-Sector Correspondences and Comparison Details}
\label{app:sectors}

\subsection{Scope of a restricted-sector correspondence}
Let $\mathcal F_{\mathrm{GRC}}^{\Xi}$ denote the admissible family of learner
states and transition mechanisms under contract $\Xi$.
A restricted method $A$ is contained only when an embedding
\begin{equation}
\iota_A:\mathcal F_A\hookrightarrow\mathcal F_{\mathrm{GRC}}^{\Xi}
\end{equation}
preserves the quantities relevant to the comparison: initialization,
admissible information, randomness, update behavior, and observable output.
Thus representation of an output function is weaker than recovery of the
declared learning dynamics. The correspondences below specify restricted
choices; they are not a construction of arbitrary solvers from primitive
operations.

The proof status is specific to each correspondence. The quadratic
optimization calculation below is an exact algebraic recovery under its
freezing assumptions. Decision and control correspondences additionally
require the declared state sufficiency and transition semantics. The extended
map in Section~\ref{app:sector-extended} contains explanatory structural
correspondences, not uniformly proved embeddings. None of these categories
alone establishes superiority over the methods being compared.

\subsection{Fixed-representation quadratic closure}
Restrict the persistent parameter state to $M=\theta\in\Theta$, holding the
representation, boundary, program, and writable coordinates fixed. At the
current state let $g=\nabla_\theta L(\theta)$ and choose
\begin{equation}
\Psi_\theta(d)=\frac{1}{2\eta}d^\top H d,
\qquad \eta>0,\qquad H\succ0.
\end{equation}

\begin{proposition}[Quadratic local optimization correspondence]
\label{prop:sector-quadratic}
Under these restrictions, the quadratic local closure problem
\begin{equation}
d^\star=\underset{d}{\arg\min}\,
\left\{g^\top d+\frac{1}{2\eta}d^\top H d\right\}
\end{equation}
has the unique solution $d^\star=-\eta H^{-1}g$.
\end{proposition}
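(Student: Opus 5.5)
The plan is to treat the objective $\phi(d)=g^\top d+\frac{1}{2\eta}d^\top Hd$ as a strictly convex quadratic on $\R^n$. Under the freezing assumptions of this sector, the representation, boundary, program and writable coordinates are all fixed. So no constraint such as $Ad=0$ is imposed: the admissible edits are all of $\R^n$, and the problem is an unconstrained minimization. I will not need the safe-inverse machinery of Lemma~\ref{lem:safe-inverse}. Formally, however, the statement is the special case $A=0$, for which $P_H=H^{-1}$.

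Existence and uniqueness come first. Since $\eta>0$ and $H\succ0$, the Hessian $\nabla^2\phi=\eta^{-1}H$ is positive definite, so $\phi$ is strictly convex. Positive definiteness also gives coercivity: $\phi(d)\ge\frac{\lambda_{\min}(H)}{2\eta}\|d\|^2-\|g\|\|d\|\to\infty$. Hence a minimizer exists, and strict convexity makes it unique.

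Next I characterize the minimizer through first-order optimality. For a differentiable convex function, stationarity is necessary and sufficient, so the condition is $\nabla\phi(d)=g+\eta^{-1}Hd=0$. Invertibility of $H$ yields $d^\star=-\eta H^{-1}g$ as the unique solution of this equation.

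To make the argument self-contained, I would also verify the completing-the-square identity
\[
\phi(d)=\tfrac{1}{2\eta}(d-d^\star)^\top H(d-d^\star)-\tfrac{\eta}{2}g^\top H^{-1}g.
\]
This shows directly that $\phi(d)>\phi(d^\star)$ for every $d\ne d^\star$.

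There is no real obstacle. The only point needing care is interpretive: the arg\,min must be read over the full unconstrained edit space that the sector hypotheses declare. It should not be read over some protected subspace, where the answer would instead be $-\eta P_Hg$ by the same computation as in Appendix~\ref{app:proximal-step}. I would remark that the special choice $H=I$ gives gradient descent, as in Equation~\eqref{eq:sector-metric-step}.
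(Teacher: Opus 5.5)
Your proposal is correct and follows the paper's route: $H\succ0$ and $\eta>0$ make the objective strictly convex, and solving the stationarity condition $g+\eta^{-1}Hd=0$ gives $d^\star=-\eta H^{-1}g$. The coercivity argument and the completing-the-square identity are valid but not needed, since a stationary point of a differentiable convex function already establishes existence, and strict convexity gives uniqueness.
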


\begin{proof}
The positive-definite quadratic objective is strictly convex. Its
first-order condition is $g+\eta^{-1}Hd=0$, which gives
$d^\star=-\eta H^{-1}g$.
\end{proof}

The choice $H=I$ recovers a gradient step. Choosing an appropriate
positive-definite Fisher metric gives a natural-gradient direction
\citep{boyd2004convex,amari1998natural}.
A positive-definite local Hessian or a suitable regularized approximation
motivates a Newton- or Gauss--Newton-type interpretation; the statement above
does not identify arbitrary Hessian choices with a well-posed minimum.
Proximal and mirror formulations require their respective
objectives, geometries, and regularity assumptions. In particular, the
proximal point method is not recovered simply by renaming the quadratic
linearization \citep{rockafellar1976proximal}.
These are additional comparison directions, not further recoveries proved
here.

The structural distinction is between choosing a change within a declared
coordinate family and deciding whether that family can express the required
repair. The quadratic calculation addresses the former. It supplies neither
a representation-obstruction test nor a procedure for constructing a new
representation.

\subsection{Constrained writable coordinates}
For a declared local edit set $\mathcal D(\theta)$, the corresponding
restricted problem is
\begin{equation}
 d^\star\in\underset{d\in\mathcal D(\theta)}{\arg\min}
 \left\{g^\top d+\frac{1}{2\eta}d^\top Hd\right\}.
 \label{eq:sector-constrained-edit}
\end{equation}
This specifies constrained optimization in the chosen coordinates when
the minimum exists. The quadratic assumptions alone do not supply existence
for an arbitrary edit set. Identifying a particular projected method also
requires its projection geometry and update rule. An accepted update is
Transformation learning only when it preserves the representation class,
$[C_{\theta+d^\star}]=[C_\theta]$, under the fixed contract and complete
persistent-state description. Numerical optimization inside this set cannot
remove a certified obstruction applying to the whole retained
representation family.

\subsection{Differentiable families and additive residual blocks}
Fix an architecture $\mathcal P$, a boundary, and writable parameters
$\theta\in\Theta$. The resulting family is
\begin{equation}
\mathcal F_{\mathcal P}=\{M_\theta:\theta\in\Theta\}.
\end{equation}
Changes in $\theta$ may alter both relations and latent representations
$z_\theta(x)$. A restriction to this family therefore fixes the outer machine
grammar, not the learned feature values. Its interpretation as a GRC sector
is conditional on preserving the declared initialization, execution, and
learning updates within the admissible family under $\Xi$.
The inclusion $\mathcal F_{\mathcal P}\subseteq
\mathcal F_{\mathrm{GRC}}^\Xi$ restricts the candidate family; recovering
a particular training method further requires the embedding to intertwine
its declared updates. A fixed architecture alone does not imply a fixed
quotient $K_{M_\theta}$ or make all parameter learning Transformation.

The identity-shortcut additive core
\begin{equation}
h_{\ell+1}=h_\ell+F_\ell(h_\ell;\theta_\ell)
\end{equation}
prescribes the block interface, reference input, and form of the correction
\citep{he2016resnet}.
Here a fixed owner means a specified interface at which the correction is
applied; it does not mean a constant hidden state or an absence of feature
learning. The displayed expression concerns the forward block. A complete
ResNet realization also includes its actual shortcut choices, nonlinearities,
and training state. Consequently this expression alone is not an embedding
of the complete training dynamics.

If a required distinction cannot be expressed anywhere in the declared
family, further optimization confined to that family cannot supply it.
Revising the representation, owner structure, memory, or program is then a
candidate completion. This observation does not assert that every residual
reveals such an obstruction, or that a legal revision is always available.

\subsection{Fixed-state sequential decision mechanisms}
Suppose a declared control abstraction $s=C(x)$ is sufficient for the registered control future, with
specified actions, transition semantics, rewards or costs, and control
boundary; here $x$ may be an interaction history. The Bellman consistency
relation and its residual are
\begin{equation}
V=\mathcal T V,
\qquad
r_{\mathrm{Bellman}}=\mathcal T V-V.
\end{equation}
This residual concerns a relation within an already declared decision model
\citep{bellman1952dynamic}. Value, policy, and controller changes on that
fixed state space provide the restricted correspondence. Recovering any
particular learning algorithm additionally requires its sampling,
estimation, and update semantics; the fixed-point equation alone does not
provide them. Value, policy, or controller changes count as Transformation
when their persistent transitions preserve $[C_M]$ under the fixed contract
and complete learner-state description. A fixed state-vector interface by
itself does not establish that quotient condition.

The representation issue arises for a proposed abstraction $C$ when
\begin{equation}
C(x_1)=C(x_2),
\qquad
\operatorname{Future}_\Xi(x_1)\not\simeq_\Xi
\operatorname{Future}_\Xi(x_2).
\end{equation}
If the contract requires the distinguishing future quantity to be
represented, that distinction cannot be expressed as a function of the
shared state alone. This is an obstruction of the stated abstraction,
consistent with the role of future-relevant state equivalence in predictive
representations and MDP reduction
\citep{littman2001predictive,givan2003equivalence}.
Different continuations need not imply different values or optimal actions
for every objective, so the claim concerns the quantity actually registered
in $\Xi$.

\begin{proposition}[State-aliasing obstruction]
\label{prop:sector-state-aliasing}
If two detailed states have the same represented state but require
inequivalent registered continuations, no relation law receiving only
that represented state can represent both required continuations.
\end{proposition}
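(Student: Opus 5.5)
The argument is a well-definedness argument for maps out of a fibre, in the same pattern as the fibre criterion of Proposition~\ref{prop:update-factorization}. I will make the statement precise and then argue by contradiction.

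\emph{Setup.} A relation law receiving only the represented state is modelled as a map $f$ on the quotient (or image) of $C$. Its predicted continuation from a detailed state $x$ is $f(C(x))$. That $f$ ``represents'' the required continuation of $x$ means $f(C(x))\simeq_\Xi \operatorname{Future}_\Xi(x)$ in the registered comparison space, where $\simeq_\Xi$ is the contract's exact operational equivalence. I will assume, as the paper does for $\sim_{\Xi,M}$ in A2, that $\simeq_\Xi$ is a genuine equivalence relation.

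\emph{Main step.} Suppose, for contradiction, that some admissible $f$ represents both required continuations for $x_1,x_2$ with $C(x_1)=C(x_2)=s$. Then $f(C(x_1))=f(s)=f(C(x_2))$, because $f$ is a function. Hence
\[
\operatorname{Future}_\Xi(x_1)\simeq_\Xi f(s)\simeq_\Xi \operatorname{Future}_\Xi(x_2).
\]
Symmetry and transitivity of $\simeq_\Xi$ then give $\operatorname{Future}_\Xi(x_1)\simeq_\Xi\operatorname{Future}_\Xi(x_2)$. This contradicts the hypothesis of inequivalent registered continuations.

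Equivalently, this is the ``only if'' direction of Proposition~\ref{prop:update-factorization}: set $\widetilde C_t=C$ and take the required continuation as the map to be factored. Fibre constancy~\eqref{eq:update-fibre-criterion} fails on the pair $x_1,x_2$, so no factoring map exists. I will note that this applies to every law in the exhaustive admissible class $\mathfrak R(C)$, not only to a searched subfamily. That makes it a class-wide obstruction in the sense of Proposition~\ref{prop:representation-necessity}: with a discrete exact discrepancy, $\Omega_C(F)>\varepsilon_\Xi$.

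\emph{Main obstacle.} The only delicate point is the transitivity step. If the contract uses a tolerance discrepancy rather than an exact equivalence, then A2 warns that transitivity can fail. In that case two continuations can each lie within tolerance of $f(s)$ without being within tolerance of each other. I would therefore state the result for the exact equivalence $\simeq_\Xi$. For a metric tolerance $\varepsilon$, I would replace it with a quantitative version via the triangle inequality: if the two continuations are more than $2\varepsilon$ apart, then no single $f(s)$ is within $\varepsilon$ of both.
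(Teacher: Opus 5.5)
Your proof is correct and follows the paper's argument: a law receiving only the represented state takes the single value $\rho(s)$ on the shared state $s=C(x_1)=C(x_2)$, so it cannot be equivalent to both inequivalent required continuations. Your explicit appeal to symmetry and transitivity of $\simeq_\Xi$, together with the $2\varepsilon$ triangle-inequality variant for tolerances, spells out a step the paper leaves implicit. The side remark that $\Omega_C(F)>\varepsilon_\Xi$ depends on how $D_\Xi$ and $\varepsilon_\Xi$ are declared and is not needed for the proposition.
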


\begin{proof}
Suppose such a law $\rho$ existed. Writing
$C(x_1)=C(x_2)=s$, it would satisfy
\[
 \rho(C(x_1))=\rho(s)=\rho(C(x_2)).
\]
The contract instead requires inequivalent future continuations for
$x_1$ and $x_2$, a contradiction.
\end{proof}

This is an obstruction of the proposed abstraction: it means that the
claimed encoder fails the required sufficiency contract. It does not
contradict the definition of a quotient already known to preserve those
same futures. Optimizing the relation more accurately while retaining the
aliased input cannot supply the missing distinction.

State splitting, memory, belief states, recurrent states, or a revised model
state are possible responses, subject to the same admissibility contract.
For constrained control, established stability and feasibility assumptions
remain necessary to the claimed correspondence
\citep{mayne2000mpc}. No separate HJB, MPC, or complete RL algorithm embedding
is derived from the Bellman residual here.

\subsection{Completion and preservation of a lawful incumbent}
Write
\begin{equation}
A^+=\operatorname{GRCComplete}_\Xi(A),
\qquad
\mathcal F_A\subseteq\mathcal F_{A^+},
\end{equation}
for a completion that preserves the lawful realizations of $A$ while
reopening selected representation, state, boundary, owner, program,
geometry, or update-rule choices. These are permitted variables, not a
requirement to revise all of them at every step.
If a comparison changes the learner--world boundary or contract, its
information rights, transport, and baseline must be restated. Such a change
is not silently covered by the fixed-boundary internal two-mode claim.

Let $a_{\mathrm{inc}}$ be the incumbent mechanism. Only candidates meeting the
current residual, old-semantic, dynamic-sufficiency, future-viability,
resource, and safety requirements belong to $\mathcal A_t^{\mathrm{legal}}$.
The selection specification is
\begin{equation}
a^\star\in
\underset{a\in\mathcal A_t^{\mathrm{legal}}}{\arg\min}\,
C_{\mathrm{full}}(a).
\end{equation}
The incumbent is included whenever it remains lawful. A merely available
incumbent is not automatically a feasible fallback if it fails the current
contract. The displayed specification also does not establish that a
minimizer exists or that an executable search returns one.

\subsection{Feasible-set comparison and a strict-improvement witness}
Let $J_\Xi$ be a common complete objective. Define
\begin{equation}
J_A^\star=\inf_{M\in\mathcal F_A}J_\Xi(M),
\qquad
J_{A^+}^\star=\inf_{M\in\mathcal F_{A^+}}J_\Xi(M).
\end{equation}

\begin{proposition}[Incumbent-preserving feasible-set comparison]
\label{prop:sector-feasible-comparison}
If $\mathcal F_A\subseteq\mathcal F_{A^+}$ under the same information rights,
constraints, objective, and complete cost accounting, then
\begin{equation}
J_{A^+}^\star\le J_A^\star.
\end{equation}
\end{proposition}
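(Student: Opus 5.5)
The plan is to prove the inequality by monotonicity of the infimum under set inclusion, after checking that the two infima are taken with the same function. The hypothesis that both families share information rights, constraints, objective $J_\Xi$, and complete cost accounting is exactly what makes $J_\Xi$ a single well-defined extended-real function on $\mathcal F_{A^+}$. Its restriction to $\mathcal F_A$ is then the objective used to define $J_A^\star$. Without this alignment the two infima would be computed with different functions and no comparison would follow. This is the only point requiring care, so I will state it first: evaluating an element of $\mathcal F_A$ inside the larger family must not change its cost, for instance by adding search or audit charges that $A$ did not incur. By assumption A6 and the common-accounting hypothesis, such a change is excluded.

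The core step is elementary. Fix any $M\in\mathcal F_A$. By inclusion $M\in\mathcal F_{A^+}$, so $J_{A^+}^\star\le J_\Xi(M)$ by the definition of the infimum over $\mathcal F_{A^+}$. Hence $J_{A^+}^\star$ is a lower bound for $\{J_\Xi(M):M\in\mathcal F_A\}$. Since $J_A^\star$ is the greatest lower bound of that set, it follows that $J_{A^+}^\star\le J_A^\star$.

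I will then handle the degenerate cases within the extended reals, using the convention $\inf\varnothing=+\infty$ from Section~\ref{sec:growth}. If $\mathcal F_A=\varnothing$, then $J_A^\star=+\infty$ and the claim is trivial. If $J_{A^+}^\star=-\infty$, the claim is also immediate. No attainment of either infimum is needed, so I will explicitly remark that the proposition asserts neither the existence of a minimizer nor discovery by any solver.

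Finally, I will note what the argument does not give, which motivates the strict-improvement witness stated next. Strict inequality requires some lawful $M'\in\mathcal F_{A^+}\setminus\mathcal F_A$ with $J_\Xi(M')<J_A^\star$, where $J_\Xi(M')$ includes its full search, audit, compilation, maintenance, and deployment cost. Inclusion alone never supplies such an element.
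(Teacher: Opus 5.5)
Your proof is correct and follows the paper's argument: inclusion makes every $A$-feasible realization $A^+$-feasible, and the infimum over a superset cannot exceed the infimum over its subset. Your remarks on the shared objective and on the degenerate extended-real cases are sound. They spell out what the paper's two-line proof leaves implicit.
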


\begin{proof}
Every realization feasible for $A$ is feasible for $A^+$. The infimum over a
superset cannot exceed the infimum over its subset.
\end{proof}

\begin{corollary}[Conditional strict extension]
\label{cor:sector-strict-extension}
Under the same comparison conditions, if there exists
\begin{equation}
M'\in\mathcal F_{A^+}\setminus\mathcal F_A,
\qquad
J_\Xi(M')<J_A^\star,
\end{equation}
then $J_{A^+}^\star<J_A^\star$.
\end{corollary}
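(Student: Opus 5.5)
The plan is to derive the strict inequality directly from the witness, using the definition of the infimum over $\mathcal F_{A^+}$, with Proposition~\ref{prop:sector-feasible-comparison} in the background only to confirm that both infima are taken under the same comparison conditions. The one point needing care is that $J_\Xi(M')$ is a single finite value of the common objective, so it lies strictly below $J_A^\star$ as real numbers (or extended reals, with $J_A^\star$ possibly $+\infty$).

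First, since $M'\in\mathcal F_{A^+}$, the infimum over $\mathcal F_{A^+}$ is bounded above by any feasible value:
\[
J_{A^+}^\star=\inf_{M\in\mathcal F_{A^+}}J_\Xi(M)\le J_\Xi(M').
\]
Second, combining this with the hypothesis $J_\Xi(M')<J_A^\star$ yields $J_{A^+}^\star\le J_\Xi(M')<J_A^\star$. This gives the strict inequality without using the inclusion $\mathcal F_A\subseteq\mathcal F_{A^+}$ at all.

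The inclusion and the common-accounting conditions matter only for interpretation. They ensure that $J_\Xi$ denotes the same complete objective on both families, so the witness value is comparable to $J_A^\star$. I will also note two things explicitly. The condition $M'\notin\mathcal F_A$ is automatic: a member of $\mathcal F_A$ cannot have value below $J_A^\star$. And if $J_A^\star=-\infty$, the hypothesis cannot hold, so the statement is vacuous there.

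There is no substantive obstacle. The only step to watch is the extended-real bookkeeping, and it reduces to the observation that a finite witness strictly below the incumbent infimum forces the larger family's infimum strictly below as well. I will not claim that the infimum $J_{A^+}^\star$ is attained, or that any search procedure discovers $M'$.
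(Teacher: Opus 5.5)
Your proof is correct and matches the paper's intended argument. The paper states the corollary without a separate written proof, and the justification it relies on is exactly the chain $J_{A^+}^\star\le J_\Xi(M')<J_A^\star$, which follows from $M'\in\mathcal F_{A^+}$ and the witness hypothesis. Your side remarks are also right: the inclusion $\mathcal F_A\subseteq\mathcal F_{A^+}$ is not needed for strictness, and $M'\notin\mathcal F_A$ holds automatically. The point you flag about finiteness of $J_\Xi(M')$ is not needed either, since $\le$ followed by $<$ composes in the extended reals.
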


The witness condition is not supplied by set enlargement alone.
Operational costs include search, audit, compilation, maintenance, and
deployment. Inclusion must therefore concern the feasible realizations of
the actual comparison, after its resource restrictions and costs are
specified. This idealized comparison of infima is not a guarantee that a
particular numerical solver or structural search attains either value.

\subsection{Containment and discovery are distinct claims}
Containment recovers a declared method. Rediscovery would require a learner
not given that method to construct an operationally equivalent mechanism
from primitive operations and feedback. Retaining the ability to revise a
useful mechanism when it later becomes insufficient is a further property.
The restricted correspondences and feasible-set comparison do not solve
these search and computability questions. A recurrent successful mechanism
can be considered for compilation only under the dynamic-sufficiency and
cost conditions imposed elsewhere in the framework.

\subsection{Extended Structural Correspondences}
\label{app:sector-extended}
The following correspondences identify a mechanism's predeclared choices
and the choices a GRC completion could reopen. Except for the stated read
allocation calculation, they are structural interpretations requiring
further specifications for an exact algorithmic embedding. Changing an
implementation component is classified by its effect on $[C_M]$ and the
relation law, not by its architectural name. The two-mode classification
applies to persistent internal transitions under a fixed contract and
boundary, with the completeness assumption A3.

\paragraph{Attention as content-addressed opening.}
The following calculation gives a restricted correspondence with
attention \citep{vaswani2017attention}.
Consider read allocations $a\in\Delta_m$ over a fixed candidate set, with
finite costs $c_j$, a full-support prior allocation $\omega\in\Delta_m$,
and temperature $\tau>0$. The source considers
\begin{equation}
 \min_{a\in\Delta_m}\ \sum_{j=1}^m a_jc_j
       +\tau D_{\rm KL}(a\Vert\omega).
 \label{eq:sector-attention-objective}
\end{equation}
Its Lagrangian first-order condition is
\[
 c_j+\tau\left(\log\frac{a_j}{\omega_j}+1\right)+\lambda=0,
\]
giving the normalized allocation
\begin{equation}
 a_j^\star=
 \frac{\omega_j\exp(-c_j/\tau)}
      {\sum_{i=1}^m\omega_i\exp(-c_i/\tau)}.
 \label{eq:sector-attention-allocation}
\end{equation}
For a numerical query $u\in\mathbb R^{d_k}$ and numerical keys
$z_j\in\mathbb R^{d_k}$, choosing
$c_j=-u^\top z_j/\sqrt{d_k}$ gives soft content-addressed read weights.
The numerical keys are distinct from the quotient states $k=C_Mx$;
their representation must be declared. This recovers the allocation rule
for the chosen objective, not every value-combination, positional,
normalization, or training operation of a Transformer. The interpretation
as Open also requires those reads to expose the declared relational
realizations. A completion may reconsider the candidate owners, read
representation, or read grammar.

\paragraph{Recurrent state, state-space models, and filtering.}
If two detailed histories share a candidate compression $\widetilde C(x)$
but require different represented successors under the same event, that compression is not update
sufficient. A persistent recurrence
\begin{equation}
 s_{t+1}=\Psi(s_t,o_{t+1})
 \label{eq:sector-recurrent-state}
\end{equation}
can serve as a remedy when it actually carries the missing distinction
and supports the required updates. Recurrence alone does not establish
that property. RNN, state-space, or filtering mechanisms provide possible
realizations inside a declared recurrent-state grammar; a completion may
split, merge, or revise that grammar. Neither recurrence's necessity in
every task nor sufficiency of every recurrent model is asserted.

\paragraph{Sparse experts and ownership.}
When a residual can lawfully be assigned to a small subset of existing
owners, a sparse expert mechanism provides a possible realization using
selected local relation laws. This is the source's structural analogy with
mixtures of experts under a fixed expert/router vocabulary
\citep{fedus2022switch}. It is not a
unique derivation of an MoE architecture or proof that an arbitrary routing
rule performs the required diagnosis. A completion may consider owner
creation, merging, retirement, and certified Compile/Core operations.

\paragraph{Low-rank writable families.}
A low-rank matrix update restricts edits to
\begin{equation}
 \Delta W=UV,\qquad
 U\in\mathbb R^{m\times r},\quad V\in\mathbb R^{r\times n},
 \label{eq:sector-low-rank}
\end{equation}
for a prescribed factor size $r$. This defines a constrained writable
family. Calling it a same-representation fibre additionally requires
preservation of $[C_M]$; low rank by itself does not supply that condition.
The correspondence concerns this factorized update form, not all adapter
designs. Exhaustion of a numerical search or edit budget does not by itself
certify a representation obstruction or justify automatic rank expansion.
Evidence, transport, search, and genuine expressivity limitations must be
distinguished before claiming that representation change is necessary.

\paragraph{Retrieval and tools.}
A query, retrieval, sensing action, or tool call may change available
evidence without changing persistent learner state. In that case it is
an external resolution action, not a third primitive internal learning
mode. Missing evidence can explain an apparent inability to close a
residual within the current internal response family. If the evidence is
subsequently incorporated persistently, that incorporation is subject to
the internal two-mode classification. A persistent revision of the
retrieval or tool mechanism must likewise be classified by its actual
effect on the representation and relation law; revising an interface
does not alone prove that the quotient changed.

\paragraph{Predeclared structural expansion.}
Width, node, module, or feature expansion rules specify candidate structural
edits in advance. A GRC interpretation can consider such candidates when
diagnosis supports them, while keeping a representation-obstruction
certificate distinct from a failed optimization attempt. A claim that
representation revision is necessary requires that certificate; an
architectural expansion is not automatically a change of quotient.
No unique expansion algorithm or general tractability result follows.

\paragraph{Architecture search and program synthesis.}
Searching over architectures or programs can reopen structural choices
inside a fixed search grammar. It is representation search when those
choices change $[C_M]$; some implementation changes may instead preserve
that abstraction. A completion can place the grammar, solver, or compiler
among the persistent objects available for revision. This extends the
candidate description, without establishing that unrestricted search or
grammar revision is computationally tractable.

\paragraph{Meta-learning and learned update rules.}
An initialization, adaptation rule, optimizer, or other meta-state can be
included in $M$. A meta-learning method then operates within its declared
meta-family. The recursive question is whether that family remains
sufficient: an updater can itself become an object of learning.
This interpretation does not identify every change to an updater with
Representation learning or supply a method for discovering a better one.

\paragraph{Probabilistic and variational representations.}
A probabilistic learner can fix latent variables, a generative graph,
a probability family, and a variational family, then perform inference
or learning within that ontology. Such a method occupies a restricted
sector when a semantics-preserving embedding is supplied. A completion
may reconsider the latent variables, factorization, or representation
language. This is a conditional correspondence; it is not a complete
embedding of the Free Energy Principle or active inference
\citep{friston2010freeenergy}.

\begin{table}[t]
\centering
\small
\caption{Status of the sector map. Reopened choices are candidate freedoms,
not guarantees of discovery, strict improvement, or lower complete cost.}
\label{tab:sector-extended-map}
\begin{tabular}{@{}>{\raggedright\arraybackslash}p{0.20\linewidth}
                    >{\raggedright\arraybackslash}p{0.46\linewidth}
                    >{\raggedright\arraybackslash}p{0.27\linewidth}@{}}
\toprule
Mechanism & Correspondence and status & Choices that may reopen\\
\midrule
Quadratic optimization & Exact stated quadratic-step algebra with frozen representation & Edit geometry, representation\\[0.35em]
Deep networks & Conditional containment of a declared differentiable family & Machine/program family\\[0.35em]
Residual blocks & Additive forward-map structure & Correction owner and form\\[0.35em]
Attention & Exact allocation for the stated entropic read objective & Read grammar and candidates\\[0.35em]
RNN/SSM/filtering & Conditional realization of update-sufficient state & Recurrent-state representation\\[0.35em]
Sparse experts & Structural ownership interpretation & Owner creation, merge, retirement\\[0.35em]
Low-rank updates & Constrained edits; Transformation only if $[C_M]$ is preserved & Rank and writable family\\[0.35em]
Retrieval/tools & External resolution when persistent state is unchanged & Evidence access and mechanism\\[0.35em]
NAS/synthesis & Structural search in a declared grammar & Grammar, solver, compiler\\[0.35em]
Meta-learning & Conditional learned-updater sector & Updater/meta-family\\[0.35em]
Bayes/VI/FEP & Conditional probabilistic representation sector & Latent/model ontology\\
\bottomrule
\end{tabular}
\end{table}

\clearpage
\section{Counterexamples and Impossibility Regimes}
\label{app:counterexamples}
These supplied examples separate current prediction, lawful acquisition,
future response, representation adequacy, and the scope of the declared
contract. All comparisons retain the assumptions specified below.

\subsection{Prediction Sufficiency Does Not Imply Update Sufficiency}
\label{counter:prediction-update}
Let $\Hist=\{x_1,x_2\}$ and suppose both histories require the same current
prediction. A compression with $\widetilde C(x_1)=\widetilde C(x_2)$ is
sufficient for that prediction. If the same feedback event requires
\[
 C^+U_e(x_1)=a,\qquad C^+U_e(x_2)=b,\qquad a\ne b,
\]
no map $\bar U_e$ on the shared compressed state can satisfy
$C^+U_e=\bar U_e\widetilde C$. This is the two-state obstruction of
Appendix~\ref{app:update-counterexample}; the general factorization
criterion is Proposition~\ref{prop:update-factorization}. The candidate
compression is sufficient for the current prediction, not for the full
registered future-update contract.

\subsection{Loss Reduction Does Not Imply Lawful Learning}
\label{counter:loss-stability}
Take a scalar learner state $\theta\in\R$, an old registered responsibility
$S_{\rm old}(\theta)=\theta=0$, and a current loss
$L_{\rm new}(\theta)=(\theta-1)^2$. Starting at $\theta_0=0$, the edit
$\theta_1=1/2$ gives
\[
 L_{\rm new}(\theta_1)=\tfrac14<1=L_{\rm new}(\theta_0),
 \qquad S_{\rm old}(\theta_1)=\tfrac12\ne0.
\]
The current loss improves while the old responsibility fails. Loss
reduction by itself therefore certifies neither semantic stability nor
the legality of a persistent transition.

\subsection{Present Acquisition Can Exhaust a Future Local Response}
\label{counter:future-collapse}
As in Example~\ref{future-collapse-example}, let $P=I_2$ and register the
new hard responsibility $D=\begin{bmatrix}1&0\end{bmatrix}$. Then
\[
 P_{\rm after}=P-PD^\top(DPD^\top)^\dagger DP
 =\begin{bmatrix}0&0\\0&1\end{bmatrix}.
\]
For $B_f=\begin{bmatrix}1&0\end{bmatrix}$,
$B_fP_{\rm after}B_f^\top=0$. Previous responsibilities and successful
current acquisition need not leave a nonzero same-representation
first-order response for this future probe. This example does not rule
out a lawful nonlocal or representation-changing continuation, and is
therefore distinct from an impossibility result for all future closure
paths.

\subsection{Excessive Closure Load Need Not Diagnose Representation Failure}
\label{counter:missing-evidence}
Suppose the correct relation depends on an environmental bit
$z\in\{0,1\}$ and the existing relation family can already represent both
branches $\rho(s,z)$. If the current evidence does not reveal $z$, an
internal edit based only on that evidence cannot identify which branch
to install. A registered diagnostic that treats this unresolved
evidence requirement as infeasible may report $q_T(r)>1$.

An admissible external query can reveal $z$, after which the same
representation selects an already expressible relation. Thus an
overflow report alone does not establish representation insufficiency.
The example concerns an evidence-limited diagnostic; it does not assign
a numerical value to the geometric load without its required
calibration. Evidence diagnosis must precede an inference that Carry is
necessary, as specified in Appendix~\ref{app:grade}.

\subsection{A Trivial Safe Space Eliminates Local Plasticity}
\label{counter:zero-safe-space}
If $\ker A=\{0\}$, then the only semantic-safe edit is $d=0$.
Lemma~\ref{lem:safe-inverse} gives
\[
 \im P_H=\ker A=\{0\},\qquad P_H=0,
 \qquad \Kplast=BP_HB^\top=0.
\]
Hence $P_HB^\top r=0$ for every residual: there is no nonzero safe
first-order update in the declared edit space. This is an impossibility
regime of the compatibility theorem. Remaining open, obtaining
admissible evidence, revising the contract, or undertaking a justified
representation revision are different possibilities, none guaranteed
to restore feasibility. A contract revision also changes the comparison
being made.

\subsection{Protecting Historical Realization Can Remove Useful Plasticity}
\label{counter:reconstructive-strict}
Take $H=I$ on $\R^2$. Suppose exact historical preservation allows only
$T_{\rm exact}=\operatorname{span}(e_1)$, whereas the same registered
future semantics permits $T_{\rm rec}=\operatorname{span}(e_1,e_2)$.
For $B=\begin{bmatrix}0&1\end{bmatrix}$, the corresponding response
operators are
\[
 K_{\rm plast}^{\rm exact}=0,\qquad
 K_{\rm plast}^{\rm rec}=1.
\]
Exact realization preservation admits no safe response in this direction;
semantic preservation admits one. This supplied example realizes the
strict-coupling case of Appendix~\ref{app:reconstructive-dominance}.
The extra rigidity comes from protecting implementation details absent
from the declared semantic contract.

\subsection{The Two-Mode Classification Fixes the Learner--World Boundary}
\label{counter:boundary-change}
Suppose two environmental states cannot be distinguished because a sensor
channel $z$ is unavailable. Adding an external sensor that reveals $z$
changes the boundary $\partial$. Relative to the original learner-state
description, this event is outside the fixed-boundary domain of the
internal two-mode classification.

One may instead include a persistent, learner-controlled sensor interface
in an enlarged state description. In that enlarged description the
classification applies under its stated completeness assumptions: a
quotient-changing revision is Representation Learning, and a change
within the same quotient is Transformation Learning. Merely naming an
interface change does not determine its mode. The boundary and state
description must be fixed before applying the classification.

\subsection{Semantic Identity Is Contract-Relative}
\label{counter:contract-relative}
Let $x$ and $y$ have identical responses under every context in
$\Xi_1$, so $x\sim_{\Xi_1,M}y$. Enlarge the contract to $\Xi_2$ by adding
a query $c^\star$ such that
\[
 \operatorname{Resp}_M(c^\star,x)
 \not\simeq\operatorname{Resp}_M(c^\star,y).
\]
Then $x\not\sim_{\Xi_2,M}y$. The semantic quotient is relative to the
declared future-context class. If that class is unidentified, the
intended quotient is likewise unidentified; selecting an arbitrary
equivalence relation cannot certify sufficiency for it.

These examples identify separate limitations. None licenses replacing
an information obstruction with an optimization claim, or extending a
local response calculation into an unrestricted guarantee of continual
learning.

\section{Conditional Representation: Assumptions and Construction}
\label{app:representation}

\subsection{Abstract learner and operational assumptions}
Fix a contract $\Xi$ and a learner--world boundary. An abstract learner is
described by
\[
 \mathcal L=(\mathcal X,\Learner,\mathcal E,
 \operatorname{Resp},U,\operatorname{Acc},C_{\rm full}).
\]
Here $\mathcal X$ contains detailed interaction states or histories,
$\Learner$ contains persistent learner states, and $\mathcal E$ is the
registered event family. The response $\operatorname{Resp}_M(c,x)$ is
registered for context $c\in\operatorname{Ctx}_\Xi$ and detailed state $x$.
An event continuation $U_e$ acts on the detailed state, whereas a persistent
learning transition changes $M$ to $M^+$. Their domains and any event-specific
evaluation or transport are declared separately, as in
Appendix~\ref{app:semantics}. The learner's own acceptance specification
determines $\operatorname{Acc}_\Xi(M,r)$; $C_{\rm full}(N\mid M)$ is the
complete declared transition cost. No neural, parametric, probabilistic,
or differentiable realization is imposed.

The assumptions of Theorem~\ref{thm:representation} concern the process
being represented. They are not consequences of stability and plasticity
alone.
\begin{description}
\item[R1: Persistent-state completeness.]
Every persistent learner-internal variable affecting registered future
responses or subsequent accepted updates belongs to $M$.

\item[R2: Extensional future distinguishability.]
For each fixed $M$, the registered response identity in
Equation~\eqref{eq:history-kernel-quotient} induces a well-defined
equivalence relation $\sim_{\Xi,M}$ on $\mathcal X$. Distinctions irrelevant
to every registered future context may be collapsed; distinctions that
affect such a context may not be silently identified. An approximate
discrepancy does not automatically define a transitive equivalence.

\item[R3: Conditional dynamic sufficiency.]
For events claimed to operate adequately within the current abstraction,
equivalent detailed states have equivalent represented continuations.
The declared continuation is constant on the relevant quotient fibres up
to contract equivalence and therefore admits descent on that domain.
Failure caused by a merged future-relevant distinction may instead trigger
representation revision under R5. This exception does not construct a
missing continuation map.

\item[R4: Nontrivial discrepancy response.]
If an identifiable future-relevant discrepancy is nonzero and a lawful
corrective successor exists, the learner admits a nontrivial corrective
candidate. A candidate must still pass the persistence requirements below.
This assumption supplies neither a search-time bound nor a guarantee of
finding a fully resolving successor.

\item[R5: Representation revisability.]
If no admissible relation law on the current abstraction can satisfy the
registered contract, refinement or replacement of the representation is
an admissible candidate operation. Permission to consider that operation
does not establish that a successful replacement is constructible.

\item[R6: Operational future viability.]
Accepted transitions preserve the declared future adaptive possibilities:
the registered future demands remain supported by lawful successor paths.

\item[R7: Fail-closed persistence.]
Speculative candidates need not already satisfy all persistence conditions.
Only candidates satisfying the declared semantic, dynamic, viability,
information, resource, and safety requirements may become persistent states.

\item[R8: Minimal admissible selection.]
For the accepted resolving transitions under consideration, the learner
selects a complete-cost minimizer among its accepted persistent successors
that resolve the discrepancy to tolerance. This requires a selected
minimizer where the rule is invoked. An empty resolving set or an
unattained infimum does not provide a successor. No deterministic
tie-breaking rule or unique minimizer is assumed.
\end{description}

\subsection{Future-operational quotient and relation descent}
R2 gives the fixed-$M$ quotient and its class map,
\[
 K_M=\mathcal X/{\sim_{\Xi,M}},\qquad C_M:\mathcal X\longrightarrow K_M.
\]
The quotient is determined up to admissible isomorphism by the registered
future responses. Its elements are classes of detailed states or histories;
it is not a quotient of entire machines. No historical parameter identity
enters this construction.

For an event on the applicability domain of R3, let $C_M^e$ evaluate the
detailed event successor in the declared comparison space $Z_{M,e}$.
The superscript denotes event-specific evaluation or transport, not a
newly accepted learner. Dynamic sufficiency gives
\[
 C_Mx=C_My\quad\Longrightarrow\quad
 C_M^e(U_ex)\simeq_\Xi C_M^e(U_ey).
\]
Hence there is a descended continuation, in the declared response type,
\begin{equation}
 \bar U_{M,e}:K_M\rightsquigarrow Z_{M,e},\qquad
 \bar U_{M,e}(C_Mx)\simeq_\Xi C_M^e(U_ex).
 \label{eq:representation-relation-descent}
\end{equation}
The represented result is independent of the representative $x$, up to
the registered equivalence. For a fixed unchanged evaluation domain,
$Z_{M,e}=K_M$ and $C_M^e=C_M$ recover the same-quotient case.

Collect the descended laws and the remaining persistent operational degrees
of freedom into the relation/realization description $\rho_M$. In this
construction that description retains what is needed to reproduce the
learner's registered operations, giving the pair $(C_M,\rho_M)$.
This use of a complete description must be distinguished from restricting
$\rho_M$ to a predetermined small grammar or parametric family. R1 ensures
that the variables belong to $M$; it does not prove that such a restricted
relation family can express all of them. Identifying the complete
description with a chosen Ring requires the coverage condition stated in
Assumption A3 in Appendix~\ref{app:contracts}.

\subsection{Prediction and the generalized residual}
The current relation law imposes a predicted represented continuation,
\[
 \widehat k^+=\operatorname{Pred}_{M,e}(\rho_M,k)
 =\widehat U_{M,e}(k),\qquad k=C_Mx,
\]
where the last notation is used when the selected law defines a state-level
map. Prediction need not be probabilistic or differentiable. It must have
the same declared comparison space as the realized value
$k_{\rm real}^+=C_M^e(U_ex)$. Where both values are defined, set
\begin{equation}
 \mathfrak R_{\Xi,e}(M,x)
 =\Def_\Xi\bigl(C_M^e(U_ex),\widehat U_{M,e}(C_Mx)\bigr).
 \label{eq:representation-induced-defect}
\end{equation}
This includes the fixed-domain formula with $C_M^e=C_M$.

The prediction $\widehat U_{M,e}$ may disagree with reality; it is not the
exactly descended continuation $\bar U_{M,e}$. Comparing reality with an
already exact induced continuation would give zero commutation defect,
rather than a potentially nonzero prediction residual. Dynamic sufficiency
specifies when a represented continuation is well defined; it does not
assert that the learner's current prediction is correct.

If no lawful continuation descends because a needed distinction has been
merged, record that failure as a representation obstruction. An undefined
evaluation is not assigned a numerical residual or interpreted as zero.
Thus a relation defect on an adequate representation and a failure of
representation sufficiency are distinct diagnoses. R4 applies to an
identifiable relevant defect when a lawful corrective successor exists;
the construction supplies no general obstruction-detection procedure.

\subsection{Representation classes and persistent transitions}
Representations $C_M$ and $C_N$ have the same abstraction when an admissible
isomorphism satisfies
\begin{equation}
 C_N=\phi\circ C_M,\qquad \phi:K_M\longrightarrow K_N.
 \label{eq:representation-quotient-isomorphism}
\end{equation}
Write $C_N\cong C_M$ and $\varpi_\Xi(M)=[C_M]$. The brackets denote the
encoder's class up to such relabeling, not a machine-equivalence class.
Learners can differ internally and in their relation laws while inducing
the same partition of detailed states.

Every accepted persistent transition either preserves $[C_M]$ or changes
it. When it preserves the class, its remaining operational changes are
recorded in the complete relation/realization component: this is
Transformation. When it changes the class, it revises the represented
state distinctions: this is Representation learning. Relation changes may
accompany that revision, giving a joint event rather than a third primitive
class. A no-op requires no nonzero learning component. This endpoint
classification is distinct from the connection-dependent local splitting
of Theorem~\ref{thm:two-modes}.

If the current abstraction supports no lawful relation satisfying the
contract, R5 permits a candidate $N$ with $[C_N]\ne[C_M]$. This supplies the
Representation/Carry option within the description; it guarantees neither
a successful candidate nor completion of the search.

\subsection{Inherited legality, selection, and operational equivalence}
On the resolving domain of R8, let $\operatorname{Acc}_\Xi(M,r)$ denote the
learner's own accepted successor family, including closure to the declared
tolerance. Define
\begin{equation}
 \operatorname{Law}^{\mathcal L}_\Xi(M,r)
 :=\operatorname{Acc}_\Xi(M,r).
 \label{eq:representation-inherited-law}
\end{equation}
This is the family denoted by $\operatorname{Law}_\Xi(M,r)$ in
Theorem~\ref{thm:representation}. Its preservation, dynamic sufficiency,
future viability, information rights, resource, and safety requirements
are inherited from the learner's registered acceptance semantics. No
additional GRC legality filter is inserted to alter that family. R6--R7
constrain accepted successors; they do not imply that every conceivable
lawful candidate is accepted or discovered.

The least-lawful selection correspondence is
\[
 \operatorname{LeastLawfulClose}_\Xi(M,r)
 :=\argmin_{N\in\operatorname{Law}^{\mathcal L}_\Xi(M,r)}
 C_{\rm full}(N\mid M).
\]
R8 places the learner's selected successor in this correspondence, giving
Equation~\eqref{eq:representation-lawful-selection}. The representation
retains the learner's actual selection among minimizers; the correspondence
does not license additional observed behavior by replacing that selection
with every minimizer. Minimum cost is an assumption about these transitions,
not a consequence of R1--R7.

Two descriptions are operationally equivalent under $\Xi$ when, for each
registered initial state, context, and event sequence on the declared
applicability domain, they agree up to contract equivalence on registered
responses, accepted successor semantics, information rights, and transition
legality. The constructed description retains the original detailed
continuation, response equivalence, acceptance semantics, and selection.
Thus it changes their representation, not those operational observables.
When descent is unavailable, it records the obstruction instead of
claiming a complete quotient-level continuation for that event.

\subsection{Proof of Theorem~\ref{thm:representation}}
\begin{proof}
\emph{1. Quotient existence.}
R2 makes $\sim_{\Xi,M}$ an equivalence relation, yielding $K_M$ and $C_M$.

\emph{2. Relation descent.}
On events for which the current representation is adequate, R3 makes the
represented continuation constant on quotient fibres. This gives
Equation~\eqref{eq:representation-relation-descent}. The remaining
persistent operational degrees of freedom are retained in $\rho_M$.

\emph{3. Residual construction.}
The represented realized continuation and the current law's prediction
give Equation~\eqref{eq:representation-induced-defect}. Failure of descent
is recorded as a representation obstruction, within the stated typing and
applicability restrictions.

\emph{4. Nontrivial response.}
R4 supplies a nontrivial corrective candidate for an identifiable nonzero
relevant defect whenever a lawful corrective successor exists.

\emph{5. Transition classification.}
An accepted successor either preserves $[C_M]$ or changes it. Retaining the
other persistent operational degrees of freedom in $\rho_M$ yields the
Transformation case in the former event and a Representation revision,
possibly accompanied by relation changes, in the latter.

\emph{6. Revisability.}
R5 permits representation revision when no relation law on the current
abstraction satisfies the contract. This step provides an admissible option,
not a constructive success guarantee.

\emph{7. Persistent legality.}
R6--R7 impose the learner's declared future and persistence requirements.
Equation~\eqref{eq:representation-inherited-law} retains that learner's
accepted resolving family.

\emph{8. Minimal closure.}
R8 places the selected successor among the complete-cost minimizers of that
family, giving Equation~\eqref{eq:representation-lawful-selection}.
The construction preserves the declared responses and accepted transition
semantics, establishing the stated conditional operational representation.
\end{proof}

This argument constructs descriptive objects from the stipulated process.
It does not infer relation descent beyond its applicability domain, an
effective encoding of the full quotient, or a search procedure that realizes
the assumed selection.

\subsection{Additional growth, compilation, and Core properties}
The stronger growth description additionally assumes the capability
comparison of Section~5 and Appendix~\ref{app:growth}. Relative to the
declared background transport $\mathsf T$, non-regression requires
$M^+\succeq_\Xi\mathsf TM$; a transition designated as strict Growth
also requires $M^+\succ_\Xi\mathsf TM$. The representation preserves these
properties when imposed. It does not derive them from R1--R8 or imply that
strictly improving transitions are always available.

\paragraph{R9: Recurrent capitalization.}
Theorem~\ref{thm:representation} does not imply compilation. A stable,
plastic learner could keep repeating an expensive successful relation.
The additional R9 property concerns a relation $\rho$ that recurs across
future closure paths, has been repeatedly certified, and is reopenable
from a persistent primitive. Its compilation must preserve registered
semantics, dynamic sufficiency, and the future-viability frontier while
strictly reducing the declared complete future closure cost. When these
conditions hold, R9 permits a persistent replacement
\[
 \rho^{\mathrm{recurrent,certified}}\xrightarrow{\Compile}p_\rho.
\]
The primitive $p_\rho$ is neither the quotient space $K_M$ nor a prediction
of the next world state. Reopening makes the stored primitive operational;
compilation makes certified recurrent activity reusable. These operations
have complementary temporal roles and are not asserted to be algebraic
inverses. Together they permit a recursive semiclosed description when
the stated opportunities occur, without guaranteeing their discovery or
recurrence. A compilation transition counts as Growth only under the
declared capability comparison and complete-cost accounting.

After a successful compilation, a proposed Core operation changes
$M_{\rm comp}$ to $M_{\rm core}$ only under the stated comparison
\[
 M_{\rm core}\succeq_\Xi M_{\rm comp}.
\]
Strict inequality would make this Core transition a further strict gain.
Merge, factorization, pruning, reoptimization, and retirement are possible
realization changes, not mechanisms required or constructed by the
representation theorem. Neither R9 nor the displayed acceptance condition
ensures that a qualifying Core operation can be found.

\subsection{Scope}
The result is an existence-of-operational-representation statement for a
stipulated class of processes under a fixed contract and boundary. It does
not show that every learner satisfies R1--R8, identify a unique
implementation, or establish efficient discovery, affordable representation
revision, strict growth on every event, or empirical superiority. Recursive
compilation requires the independent R9 eligibility and admission property;
Core retains its own comparison condition.

The quotient's isomorphism class does not determine a unique solver, metric,
cost model, or learning trajectory. Retaining complete persistent
operational information in a descriptive relation layer does not prove
that a restricted executable Ring is complete. A different contract can
change the quotient, lawful successor family, and capability order. The
conditional structural description therefore supplies no computational
universality or algorithmic completeness beyond its stated assumptions.

\section{Operational Intelligence, Recursive Learning, and Falsifiability}
\label{app:intelligence}

\subsection{A Profile of Future Capability and Cost}
The proposed intelligence interpretation reuses the objects of
Section~\ref{sec:growth}. Its capability profile combines a registered future
domain with the optimal complete costs of lawful closure. The contract fixes
which distinctions, information rights, semantic responsibilities, resources,
and interventions matter. Consequently, more parameters or better performance
on one fixed benchmark need not establish dominance of the whole profile.
The interpretation is that intelligence can turn unresolved, future-relevant
distinctions into reusable capacity while preserving the ability to learn
again. It does not posit a unique metaphysical definition or collapse that
profile into an unsupported universal score.

We retain the optimal-cost epigraph already defined in
Equation~\eqref{eq:growth-frontier}. Its boundary may represent an unattained
infimum. A claim of closure within an exact budget must separately provide
an admissible path at that budget; the epigraph alone does not provide one.

\subsection{Temporary, Persistent, and Compiled Outcomes}
Reasoning is temporary closure search over relations, trajectories,
decompositions, or programs. It need not change persistent knowledge.
Learning makes a residual-driven lawful transition persistent, subject to
the distinctions between learning and effective learning in
Section~\ref{sec:growth}. Compilation gives repeatedly useful closures a
reusable realization. The operational role of such memory is to preserve
conditions for reopening a capability and continuing its revision, rather
than merely storing a historical snapshot. These are different outcomes:
a successful trial is not automatically a persistent commit, and a commit
is not automatically a mature compiled primitive.

Transformation revises relations within a representation; representation
revision changes which distinctions the learner can express. A certified
relation may subsequently become a primitive supporting further relations.
Thus learning can construct and repair the state space of subsequent
learning. A successor need not change the quotient state space:
Transformation can preserve it up to isomorphism while changing its relation
law. This account is structural: it does not assert that every stored
object is compiled or that every recurrence creates a new residual. When
continued operation respects the registered obligations, the next residual
can remain zero.

\subsection{Failure Tests and Their Scope}
Formal counterexamples must satisfy the stated hypotheses. For the two-mode
classification, the proposed test is a persistent internal transition within
the fixed boundary and complete state description that admits neither a
relation change nor a representation change nor their combination. For the
local affine compatibility claim, the proposed test has
$P_HB^\top r\ne0$ but no semantic-safe direction producing positive
first-order residual reduction. For reconstructive dominance, the response
operators must be compared under the same metric and residual map; a violation
despite the stated nested safe spaces would contradict that claim.

The other checks concern the declared certificates and operational
representation. A successor worse than its transported baseline requires
examining audit soundness and the dominance gate. A counterexample to the
conditional representation result must satisfy its assumptions while lacking
the claimed operational representation. These are proof and implementation
obligations, not additional theorems established by this discussion.

Empirical failures can also expose incorrect contracts, unsound audits,
missing evidence, or excessive complete costs. Such failures need to be
distinguished from counterexamples satisfying a theorem's hypotheses.
Neither preservation nor containment alone supplies discovery, affordable
search, or automatic compilation. The substantive question is whether a
specified learner can certify and reuse closures so that acquired stability
supports its next lawful adaptation.

\section{Related Work and Positioning}
\label{app:related-work}

\paragraph{Predictive state and behavioral abstraction.}
Predictive state representations use action-conditioned future tests and
provide recursive state updates \citep{littman2001predictive}. Computational mechanics
constructs causal states from predictive equivalence
\citep{shalizi2001computational}; MDP equivalence supports behavior-preserving
state reduction \citep{givan2003equivalence}.
Consequently, a future-operational quotient, minimal predictive state, or
commuting update equation alone cannot establish GRC's novelty. Here the
registered operations include changes to the learner's persistent knowledge
and update state. A candidate encoding must be checked against those
operations and the declared background. The distinction between a current
prediction and sufficient information for a subsequent learning update must
not be confused with claiming that predictive-state methods lack dynamics.

\paragraph{Continual learning and retained plasticity.}
EWC penalizes changes to important parameters
\citep{kirkpatrick2017ewc}, whereas OGD uses gradient projections to reduce
interference with previous predictions \citep{farajtabar2020ogd}.
These are precedents for seeking useful updates under preservation
requirements. Loss of plasticity is also an established problem, with
continual backpropagation evaluated as a means of sustaining learning ability
\citep{dohare2024plasticity}. More recently, FIRE studies constrained
reinitialization using weight proximity and isometry, and SplitLoRA studies
gradient-space partitioning for continual low-rank adaptation
\citep{han2026fire,qiu2026splitlora}.
GRC's question is how preservation, available responses, and registered
future learning are specified together. Its affine response geometry does
not itself guarantee nonlinear endpoint preservation, nor demonstrate
superiority to these mechanisms.

\paragraph{Optimization geometry.}
Natural gradient and proximal optimization already organize updates through
non-Euclidean geometry or regularized subproblems
\citep{amari1998natural,rockafellar1976proximal}.
GRC uses these tools after an admissible edit family has been specified.
The upstream question is whether that family can express the required
repair. The quadratic calculation in Section~\ref{sec:sectors} recovers a
restricted metric step, not every Newton, mirror, or proximal algorithm.
Changing the objective, constraints, or geometry requires the corresponding
assumptions; naming a familiar solver does not provide its embedding.

\paragraph{Residual networks and learned representations.}
Residual networks learn corrections relative to block inputs
\citep{he2016resnet}. The additive core is a useful example of a prescribed
interface and correction form. It is a forward computation, not a complete
learning rule. Neural training can alter latent representations within its
declared family. GRC asks when a required repair lies outside that family,
without denying feature learning or treating every residual as evidence
that a new structure is necessary.

\paragraph{Learning the learning mechanism.}
MAML trains initializations for subsequent adaptation
\citep{finn2017maml}; learned optimizers parameterize update rules
\citep{andrychowicz2016learn}; DARTS relaxes architectural choices into a
differentiable search space \citep{liu2019darts}.
DreamCoder learns reusable program abstractions and a search model
\citep{ellis2021dreamcoder}. These precedents rule out claiming that learning
the solver or construction language is new by itself. GRC instead places
candidate revisions under common responsibility and continuation contracts.
The admissible grammar, information available for diagnosing its limits, and
cost of finding a revision must remain explicit.

\paragraph{Sequential decisions and control.}
Bellman recursion and constrained control already relate present decisions
to future consequences \citep{bellman1952dynamic,mayne2000mpc}.
A state abstraction can fail to preserve a required distinction even when
optimization within that abstraction succeeds. This motivates separating
state revision from relation updates. It does not establish that all
different futures require different optimal actions, or that a Bellman
fixed-point correspondence recovers every sampling-based RL algorithm.

\paragraph{Free-energy formulations.}
The free-energy principle relates perception, action, and learning through
generative probabilistic models \citep{friston2010freeenergy}.
GRC uses a contract-relative continuation discrepancy without requiring a
single probabilistic representation language. This difference in formulation
is not a proof of containment or superiority. Relating a particular
active-inference system to GRC requires its actual semantics, objectives,
and updates to be mapped explicitly.

\paragraph{Scope of the proposed contribution.}
The proposed connection concerns what changes, what remains recoverable,
and which future learning remains possible. Each mathematical statement
addresses a declared setting. Containment, completion, and strict extension
have distinct proof obligations: strict gain needs a lawful improving
witness under matched information and complete costs. The comparisons above
acknowledge relevant precedents; they do not establish that every existing
method lacks this connection or that the literature's novelty boundaries
have been exhausted.

\end{document}